\title{Non-monotonic Resource Utilization in the Bandits with Knapsacks Problem}
\author{%
  Raunak Kumar \\
  Department of Computer Science \\
  Cornell University \\
  Ithaca, NY 14853 \\
  \texttt{raunak@cs.cornell.edu} \\
  \And
  Robert D. Kleinberg \\
  Department of Computer Science \\
  Cornell University \\
  Ithaca, NY 14853 \\
  \texttt{rdk@cs.cornell.edu} \\
}
\newtheorem{theorem}{Theorem}[section]
\newtheorem{lemma}{Lemma}[section]
\newtheorem{corollary}{Corollary}[section]
\theoremstyle{definition}
\newtheorem{definition}{Definition}[section]
\theoremstyle{plain}
\newcommand{\E}{\mathbb{E}}
\renewcommand{\Pr}{{\bf Pr}}
\newcommand{\bwk}{\mathsf{BwK}}
\newcommand{\lp}{\mathsf{LP}}
\newcommand{\opt}{\mathsf{OPT}}
\newcommand{\rew}{\mathsf{REW}}
\newcommand{\lcb}{\mathsf{LCB}}
\newcommand{\ucb}{\mathsf{UCB}}
\newcommand{\rad}{\mathsf{rad}}
\newcommand{\poly}{\mathrm{poly}}
\newcommand{\calA}{{\cal A}}
\newcommand{\calJ}{{\cal J}}
\newcommand{\calX}{{\cal X}}
\newcommand{\deltadrift}{\delta_{\text{drift}}}
\newcommand{\deltasupport}{\delta_{\text{support}}}
\newcommand{\deltaslack}{\delta_{\text{slack}}}
\newcommand{\indicator}{\mathbbm{1}}
\begin{document}

\maketitle

\begin{abstract}
Bandits with knapsacks ($\bwk$)~\citep{badanidiyuru2018bandits} is an
influential model of sequential decision-making under uncertainty that
incorporates resource consumption constraints. In each round, the
decision-maker observes an outcome consisting of a reward and a vector of
nonnegative resource consumptions, and the budget of each resource is
decremented by its consumption. In this paper we introduce a natural
generalization of the stochastic $\bwk$ problem that allows non-monotonic
resource utilization. In each round, the decision-maker observes an outcome
consisting of a reward and a vector of resource \emph{drifts} that can be
positive, negative or zero, and the budget of each resource is incremented by
its drift. Our main result is a Markov decision process (MDP) policy that has
\emph{constant} regret against a linear programming (LP) relaxation when the
decision-maker \emph{knows} the true outcome distributions. We build upon this
to develop a learning algorithm that has \emph{logarithmic} regret against the
same LP relaxation when the decision-maker \emph{does not know} the true
outcome distributions. We also present a reduction from $\bwk$ to our model
that shows our regret bound matches existing results~\citep{li2021symmetry}.
\end{abstract}

\section{Introduction}
\label{sec:introduction}

Multi-armed bandits are the quintessential model of sequential decision-making
under uncertainty in which the decision-maker must trade-off between exploration
and exploitation. They have been studied extensively and have numerous
applications, such as clinical trials, ad placements, and dynamic pricing to
name a few. We refer the reader
to~\citet{bubeck2012regret,slivkins2019introduction,lattimore2020bandit} for an
introduction to bandits. An important shortcoming of the basic stochastic
bandits model is that it does not take into account resource consumption
constraints that are present in many of the motivating applications. For
example, in a dynamic pricing application the seller may be constrained by a
limited inventory of items that can run out well before the end of the time
horizon. The bandits with knapsacks $(\bwk$)
model~\citep{tran2010epsilon,tran2012knapsack,badanidiyuru2013bandits,badanidiyuru2018bandits}
remedies this by endowing the decision-maker with some initial budget for each
of $m$ resources. In each round, the outcome is a reward and a vector of
nonnegative resource consumptions, and the budget of each resource is
decremented by its consumption. The process ends when the budget of any resource
becomes nonpositive. However, even this formulation fails to model that in many
applications resources can get replenished or renewed over time. For example, in
a dynamic pricing application a seller may receive shipments that increase their
inventory level.

\paragraph{Contributions}
In this paper we introduce a natural generalization of $\bwk$ by allowing
non-monotonic resource utilization. The decision-maker starts with some initial
budget for each of $m$ resources. In each round, the outcome is a reward and a
vector of resource \emph{drifts} that can be positive, negative or zero, and the
budget of each resource is incremented by its drift. A negative drift has the
effect of decreasing the budget akin to consumption in $\bwk$ and a positive
drift has the effect of increasing the budget. We consider two settings: (i)
when the decision-maker \emph{knows} the true outcome distributions and must
design a Markov decision process (MDP) policy; and (ii) when the decision-maker
\emph{does not know} the true outcome distributions and must design a learning
algorithm.

\begin{wrapfigure}{r}{0.45\textwidth}
  \centering
  \includegraphics[scale=0.30]{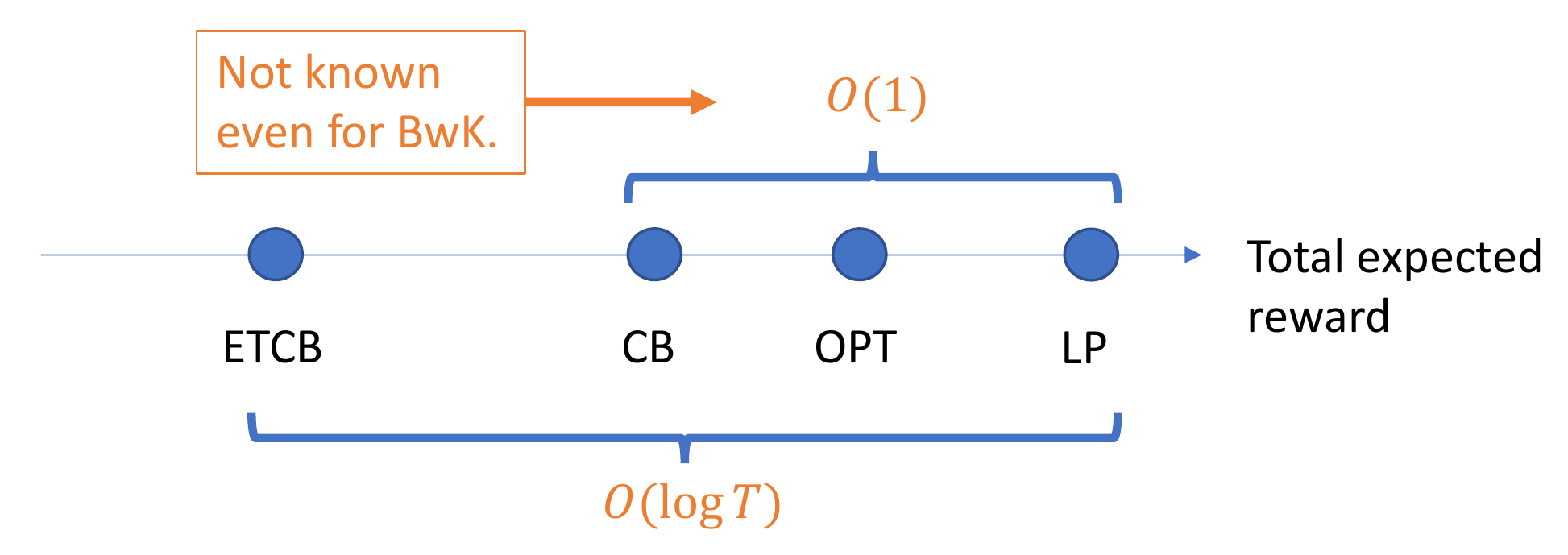}
\end{wrapfigure}
Our main contribution is an MDP policy, \texttt{ControlBudget(CB)}, that has
\emph{constant} regret against a linear programming (LP) relaxation. Such a
result was not known even for $\bwk$. We build upon this to develop a learning
algorithm, \texttt{ExploreThenControlBudget(ETCB)}, that has \emph{logarithmic}
regret against the same LP relaxation. We also present a reduction from $\bwk$
to our model and show that our regret bound matches existing results.


Instead of merely sampling from the optimal probability distribution over arms,
our policy samples from a perturbed distribution to ensure that the budget of
each resource stays close to a decreasing sequence of thresholds. The sequence
is chosen such that the expected leftover budget is a constant and proving this
is a key step in the regret analysis. Our work combines aspects of related work
on logarithmic regret for $\bwk$~\citep{flajolet2015logarithmic,li2021symmetry}.

\paragraph{Related Work}
Multi-armed bandits have a rich history and logarithmic instance-dependent
regret bounds have been known for a long
time~\citep{lai1985asymptotically,auer2002finite}. Since then, there have been
numerous papers extending the stochastic bandits model in a variety of
ways~\citep{auer2002nonstochastic,slivkins2011contextual,kleinberg2019bandits,badanidiyuru2018bandits,immorlica2019adversarial,agrawal2014bandits,agrawal2016linear}.

To the best of our knowledge, there are three papers on logarithmic regret
bounds for $\bwk$. \citet{flajolet2015logarithmic} showed the first logarithmic
regret bound for $\bwk$. In each round, their algorithm finds the optimal basis
for an optimistic version of the LP relaxation, and chooses arms from the
resulting basis to ensure that the average resource consumption stays close to a
pre-specified level. Even though their regret bound is logarithmic in $T$ and
inverse linear in the suboptimality gap, it is exponential in the number of
resources. \citet{li2021symmetry} showed an improved logarithmic regret bound
that is polynomial in the number of resources, but it scales inverse
quadratically with the suboptimality gap and their definition of the gap is
different from the one in~\citet{flajolet2015logarithmic}. The main idea behind
improving the dependence on the number of resources is to proceed in two phases:
(i) identify the set of arms and binding resources in the optimal solution; (ii)
in each round, solve an adaptive, optimistic version of the LP relaxation and
sample an arm from the resulting probability distribution.
Finally,~\citet{sankararaman2021bandits} show a logarithmic regret bound for
$\bwk$ against a \emph{fixed-distribution} benchmark. However, the regret of
this benchmark itself with the optimal MDP policy can be as large as
$O(\sqrt{T})$~\citep{flajolet2015logarithmic,li2021symmetry}.

\section{Preliminaries}
\label{sec:preliminaries}



\subsection{Model}
\label{subsec:model}

Let $T$ denote a finite time horizon, $\calX = \{ 1, \dots, k \}$ a set of $k$
arms, $\calJ = \{ 1, \dots, m \}$ denote a set of $m$ resources, and $B_{0,j} =
B$ denote the initial budget of resource $j$. In each round $t \in [T]$, if the
budget of any resource is less than $1$, then $\calX_t = \{ 1 \}$. Otherwise,
$\calX_t = \calX$. The algorithm chooses an arm $x_t \in \calX_t$ and observes
an outcome $o_t = (r_t, d_{t,1}, \dots, d_{t,m}) \in [0,1] \times [-1,1]^m$. The
algorithm earns reward $r_t$ and the budget of resource $j \in \calJ$ is
incremented by drift $d_{t,j}$ as $B_{t,j} = B_{t-1,j} + d_{t,j}$.

Each arm $x \in \calX$ has an outcome distribution over $[0,1] \times [-1,1]^m$
and $o_t$ is drawn from the outcome distribution of the arm $x_t$. We use
$\mu_x^o = (\mu_x^r, \mu_x^{d,1}, \dots, \mu_x^{d,m})$ to denote the expected
outcome vector of arm $x$ consisting of the expected reward and the expected
drifts for each of the $m$ resources.\footnote{In fact, our proofs remain valid
even if the outcome distribution depends on the past history provided the
conditional expectation is independent of the past history and fixed for each
arm. In this case $\mu_x^o$ denotes the conditional expectation of $o_t$ when
arm $x$ is pulled in round $t$. Since our proofs rely on the Azuma-Hoeffding
inequality, we need this assumption on the conditional expectation to hold.}  We
also use $\mu^{d,j} = (\mu_x^{d,j} : x \in \calX)$ to denote the vector of
expected drifts for resource $j$. We assume that arm $x^0 = 1 \in \calX$ is a
null arm with three important properties: (i) its reward is zero a.s.; (ii) the
drift for each resource is nonnegative a.s.; and (iii) the expected drift for
each resource is positive. The second and third properties of the null arm plus
the model's requirement that $x_t = 1$ if $\exists j$ s.t.\ $B_{t-1, j} < 1$
ensure that the budgets are nonnegative a.s.\ and can be safely increased from
$0$.

Our model is intended to capture applications featuring resource renewal, such
as the following.  In each round, each resource gets replenished by some random
amount and the chosen arm consumes some random amount of each resource. If the
consumption is less than replenishment, the resource gets renewed. The random
variable $d_{t,j}$ then models the net replenishment minus consumption. The full
model presented above is more general because it allows both the consumption and
replenishment to depend on the arm pulled.

We consider two settings in this paper.
\begin{description}
  \item[MDP setting] The decision-maker \emph{knows} the true outcome
  distributions. In this setting the model implicitly defines an MDP, where the
  state is the budget vector, the actions are arms, and the transition
  probabilities are defined by the outcome distributions of the arms.
  \item[Learning setting] The decision-maker \emph{does not know} the
  true outcome distributions.
\end{description}

The goal is to design to an MDP policy for the first setting and a learning
algorithm for the second, and bound their regret against an LP relaxation as
defined in the next subsection.


\subsection{Linear Programming Relaxation}

Similar to~\citet[Lemma 3.1]{badanidiyuru2018bandits}, we consider the following
LP relaxation that provides an upper bound on the total expected reward of any
algorithm:
\begin{equation}\label{eq:lp_relaxation}
  \opt_\lp = \max_{p} \left \{ \sum_{x \in \calX} p_x \mu_x^r : \sum_{x \in \calX} p_x \mu_x^{d,j} \geq \nicefrac{-B}{T} \ \forall j \in \calJ, \ \sum_{x \in \calX} p_x = 1, \ p_x \geq 0 \ \forall x \in \calX  \right \}.
\end{equation}

\begin{lemma}
  The total expected reward of any algorithm is at most $T \cdot \opt_\lp$.
\end{lemma}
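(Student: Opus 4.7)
The plan is to use the standard ``expected occupation measure'' construction: given any algorithm, build a feasible solution $p$ to the LP whose objective value equals $\nicefrac{1}{T}$ times the algorithm's expected reward, and then invoke LP optimality. Concretely, for an arbitrary algorithm let $N_x = \sum_{t=1}^{T} \indicator\{ x_t = x \}$ denote the (random) number of times arm $x$ is pulled, and set $p_x = \E[N_x] / T$. Since exactly one arm is pulled in every round (the null arm $x^0$ is played on forced rounds, so $\sum_x N_x = T$ deterministically), we get $\sum_x p_x = 1$ and $p_x \geq 0$ automatically.

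The reward identity is the easy part: using the tower property and the fact that the conditional expectation of $r_t$ given $x_t = x$ equals $\mu_x^r$ (as spelled out in the footnote), we have
\begin{equation*}
  \E\left[\sum_{t=1}^{T} r_t\right] = \sum_{t=1}^{T} \E\bigl[\mu_{x_t}^r\bigr] = \sum_{x \in \calX} \E[N_x]\,\mu_x^r = T \sum_{x \in \calX} p_x \mu_x^r.
\end{equation*}

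The heart of the argument is verifying the resource constraint $\sum_x p_x \mu_x^{d,j} \geq -B/T$. The key observation is that the three properties of the null arm together with the model rule ``$\calX_t = \{1\}$ whenever some $B_{t-1,j} < 1$'' guarantee that $B_{T,j} \geq 0$ almost surely for every $j$: whenever a budget drops below $1$, only the null arm (nonnegative drift a.s., zero reward) can be pulled, which cannot push the budget below zero. Therefore
\begin{equation*}
  0 \;\leq\; B_{T,j} \;=\; B + \sum_{t=1}^{T} d_{t,j} \quad \text{a.s.},
\end{equation*}
so $\sum_t d_{t,j} \geq -B$ a.s. Taking expectations and again using that $\E[d_{t,j} \mid x_t = x] = \mu_x^{d,j}$ yields $T \sum_x p_x \mu_x^{d,j} \geq -B$, i.e.\ the LP constraint for resource $j$.

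Hence $p$ is feasible for~\eqref{eq:lp_relaxation}, so $\sum_x p_x \mu_x^r \leq \opt_\lp$, and multiplying by $T$ gives the claim. The only subtle step is the a.s.\ nonnegativity of the final budget; the main obstacle to watch for is making sure the null-arm fallback really prevents any excursion below zero and that the conditional-expectation assumption in the footnote is invoked correctly, since without it one cannot equate $\E[\sum_t d_{t,j}]$ with $T \sum_x p_x \mu_x^{d,j}$.
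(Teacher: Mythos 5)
Your proof is correct and follows exactly the route the paper sketches: set $p_x = \E[N_x]/T$, verify that this occupation measure is feasible for the LP (using the a.s.\ nonnegativity of the final budgets, which the paper establishes from the null-arm properties in the model section), and conclude by LP optimality. The only difference is that you spell out the feasibility verification in full, which the paper leaves as a remark.
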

The proof of this lemma, similar to those in existing
works~\citep{agrawal2014bandits,badanidiyuru2018bandits}, follows from the
observations that (i) the variables $p = \{ p_x : x \in \calX \}$ can be
interpreted as the probability of choosing arm $x$ in a round; and (ii) if we
set $p_x$ equal to the expected number of times $x$ is chosen by an algorithm
divided by $T$, then it is a feasible solution for the LP.

\begin{definition}[Regret]
  The regret of an algorithm $\calA$ is defined as $R_T(\calA) = T \cdot
  \opt_\lp - \rew(\calA)$, where $\rew(\calA)$ denotes the total expected reward
  of $\calA$.
\end{definition}


%
%


\subsection{Assumptions}
\label{subsec:assumptions}

We assume that the initial budget of every resource is $B \leq T$. This
assumption is without loss of generality because otherwise we can scale the
drifts by dividing them by the smallest budget. This results in a smaller
support set for the drift distribution that is still contained in $[-1, 1]$.

Our assumptions about the null arm $x^0$ are a major difference between our
model and $\bwk$. In $\bwk$ the budgets can only decrease and the process ends
when the budget of any resource reaches $0$. However, in our model the budgets
can increase or decrease, and the process ends at the end of the time horizon.
Our assumptions about the null arm allow us to increase the budget from $0$
without making it negative.\footnote{In a model where, in each round, each
resource gets replenished by some random amount and the chosen arm consumes some
random amount of each resource, the null arm represents the option to remain
idle and do nothing while waiting for resource replenishment.
See~\cref{sec:appendix_assumptions} for more discussion on the assumptions about
the null arm.} A side-effect of this is that in our model we can even assume
that $B$ is a small constant because we can always increase the budget by
pulling the null arm, in contrast to existing literature on $\bwk$ that assume
the initial budgets are large and often scale with the time horizon.

A standard assumption for achieving logarithmic regret in stochastic bandits is
that the gap between the total expected reward of an optimal arm and that of a
second-best arm is positive. There are a few different ways in which one could
translate this to our model where the optimal solution is a mixture over arms.
We make the following choice. We assume that there exists a unique set of arms
$X^*$ that form the support set of the LP solution and a unique set of resources
$J^*$ that correspond to binding constraints in the LP
solution~\citep{li2021symmetry}. We define the gap of the problem instance
in~\cref{def:gap} and our uniqueness assumption\footnote{This assumption is
essentially without loss of generality because the set of problem instances with
multiple optimal solutions is a set of measure zero.} implies that the gap is
strictly positive.

We make a few separation assumptions parameterized by four positive constants
that can be arbitrarily small. First, the smallest magnitude of the drifts,
$\deltadrift = \min \{ |\mu_x^{d,j}| : x \in \calX, j \in \calJ \}$, satisfies
$\deltadrift > 0$. Second, the smallest singular value of the LP constraint
matrix, denoted by $\sigma_{\min}$, satisfies $0 < \sigma_{\min} < 1$. Third,
the LP solution $p^*$ satisfies $p^*_x \geq \deltasupport > 0$ for all $x \in
X^*$. Fourth, $\sum_{x \in X^*} p^*_x \mu_x^{d,j} \geq \deltaslack > 0$ for all
resources $j \notin J^*$. The first assumption is necessary for logarithmic
regret bounds because otherwise one can show that the regret of the optimal
algorithm for the case of one resoure and one zero-drift arm is
$\Theta(\sqrt{T})$~(\cref{sec:appendix_zero_drift}). The second and third
assumptions are essentially the same as in existing literature on logarithmic
regret bounds for $\bwk$~\citep{flajolet2015logarithmic,li2021symmetry}. The
fourth assumption allows us to design algorithms that can increase the budgets
of the non-binding resources away from $0$, thereby reducing the number of times
the algorithm has to pull the null arm. Otherwise, if they have zero drift,
then, as stated above, the regret of the optimal algorithm for the case of one
resource and one zero-drift arm is
$\Theta(\sqrt{T})$~(\cref{sec:appendix_zero_drift}).

\section{MDP Policy with Constant Regret}
\label{sec:known_distributions}

In this section we design an MDP policy,
\texttt{ControlBudget}~(\cref{alg:controlbudget}), with constant regret in terms
of $T$ for the setting when the learner knows the true outcome distributions and
our model implicitly defines an MDP~(\cref{subsec:model}).  At a high level,
\texttt{ControlBudget}, which shares similarities with~\citet[Algorithm
UCB-Simplex]{flajolet2015logarithmic}, plays arms to keep the budgets close to a
decreasing sequence of thresholds. The choice of this sequence allows us to show
that the expected leftover budgets and the expected number of null arm pulls are
constants. This is a key step in proving the final regret bound. We start by
considering the special case of one resource
in~\cref{subsec:policy_one_resource} because it provides intuition for the
general case of multiple resources in~\cref{subsec:policy_multiple_resources}.


\subsection{Special Case: One Resource}
\label{subsec:policy_one_resource}

Since there is only one resource we drop the superscript $j$ in this section. We
say that an arm $x$ is a positive (resp.\ negative) drift arm if $\mu_x^d > 0$
(resp.\ $\mu_x^d < 0$). The following lemma characterizes the possible
solutions of the LP~(\cref{eq:lp_relaxation}).

\begin{lemma}\label{lemma:lp_structure_one_resource}
  The solution of the LP relaxation~(\cref{eq:lp_relaxation}) is supported on at
  most two arms. Furthermore, if $T \geq \nicefrac{B}{\deltadrift}$, then the
  solution belongs to one of three categories: (i) supported on a single
  positive drift arm; (ii) supported on the null arm and a negative drift arm;
  (iii) supported on a positive drift arm and a negative drift arm.
\end{lemma}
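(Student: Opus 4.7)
The proof has two parts, both of which I would handle by standard LP basic-feasible-solution (BFS) accounting combined with a short sign analysis using the bound on $\deltadrift$.

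\textbf{Bounding the support.} The LP~\cref{eq:lp_relaxation} has $k$ nonnegative variables $p_x$ subject to one equality ($\sum_x p_x = 1$) and one nontrivial inequality (the resource constraint $\sum_x p_x \mu_x^d \ge -B/T$). The LP is feasible (the null arm alone is feasible since $\mu_1^d > 0 > -B/T$) and bounded (rewards lie in $[0,1]$), so a basic optimal solution $p^*$ exists. A BFS has $k$ linearly independent tight constraints. The equality uses one; the remaining $k-1$ tight constraints must come from the $k$ nonnegativity constraints together with, possibly, the resource inequality. Thus either the resource inequality is slack and $k-1$ nonnegativity constraints are tight (support size $1$), or the resource inequality is tight and $k-2$ nonnegativity constraints are tight (support size $2$). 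This proves $|\operatorname{supp}(p^*)| \le 2$.

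\textbf{Classifying the three categories when $T \ge B/\deltadrift$.} I would case-split on $|\operatorname{supp}(p^*)|$. Note that $|\mu_x^d| \ge \deltadrift$ for every arm $x$ and that the hypothesis gives $B/T \le \deltadrift$.

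If $|\operatorname{supp}(p^*)| = 1$, write $p^*_x = 1$. Feasibility requires $\mu_x^d \ge -B/T$. A negative drift arm would satisfy $\mu_x^d \le -\deltadrift \le -B/T$, which makes the resource constraint tight at the boundary $T = B/\deltadrift$; but a tight resource constraint forces BFS support size $2$, contradiction. Hence $\mu_x^d > 0$, giving category (i).

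If $|\operatorname{supp}(p^*)| = 2$ with support $\{x_1, x_2\}$, then the resource inequality is tight, so $p^*_{x_1}\mu_{x_1}^d + p^*_{x_2}\mu_{x_2}^d = -B/T \le 0$ with $p^*_{x_1} + p^*_{x_2} = 1$. Both drifts positive is ruled out because the left-hand side would be strictly positive. Both drifts negative forces the left-hand side to be at most $-\deltadrift$, giving $T \le B/\deltadrift$, which under the hypothesis collapses to the boundary $T = B/\deltadrift$; at the boundary one can swap one of the two negative drift arms for the null arm without losing optimality, landing in category (ii). Otherwise exactly one arm has positive drift and the other has negative drift: if the positive drift arm is the null arm we are in category (ii), and otherwise in category (iii).

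\textbf{Main obstacle.} The only delicate point is handling the knife-edge $T = B/\deltadrift$, at which a single negative drift arm becomes marginally feasible and a two-negative-arm BFS becomes marginally possible. Away from this boundary the sign analysis is immediate; at the boundary one argues that any such BFS is equivalent to an optimum falling into categories (i)--(iii), typically by replacing a saturating negative arm with the null arm. The rest is routine LP bookkeeping.
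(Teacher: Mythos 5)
Your proposal follows the same route the paper intends: the paper's entire justification for this lemma is the one-line remark that it ``follows from properties of LPs and a case analysis of which constraints are tight'' (no detailed proof appears in the appendix), and your basic-feasible-solution accounting plus the sign analysis of the tight resource constraint is exactly that argument, carried out correctly away from the boundary.

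Two of your knife-edge steps do fail as written, though both matter only when $T = \nicefrac{B}{\deltadrift}$ exactly and some arm has $|\mu_x^d| = \deltadrift$ exactly. First, a tight resource constraint does not force support size two: vertices can be degenerate, so $p_x = 1$ with $\mu_x^d = -B/T$ is a legitimate (degenerate) basic feasible solution, and if that arm has the largest reward it is the unique optimum --- supported on a single negative-drift arm, which lies outside the lemma's trichotomy. Your claimed ``contradiction'' is therefore not one; the clean statement is that for $T > \nicefrac{B}{\deltadrift}$ strictly, a single negative-drift arm is simply infeasible. Second, in the two-negative-arm boundary case the proposed swap does not land in category (ii): if $\mu_{x_1}^d = -B/T$ and $\mu_{x^0}^d > 0$, then $p_{x^0}\mu_{x^0}^d + p_{x_1}\mu_{x_1}^d = -B/T$ together with $p_{x^0}+p_{x_1}=1$ forces $p_{x^0}=0$, so you again end up on a single negative-drift arm. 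Neither issue affects how the lemma is used downstream, since the genericity and uniqueness assumptions of~\cref{subsec:assumptions} implicitly exclude these coincidences; but your write-up should either take the inequality $T > \nicefrac{B}{\deltadrift}$ to be strict (or $|\mu_x^d| > \deltadrift$ for the relevant arms) or explicitly note that the trichotomy can fail on this measure-zero set.
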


The proof of this lemma follows from properties of LPs and a case analysis of
which constraints are tight.  Our MDP policy,
\texttt{ControlBudget}~(\cref{alg:controlbudget_one_resource}), deals with the
three cases separately and satisfies the following regret bound.\footnote{In
this theorem and the rest of the paper, we use $\tilde{C}$ to denote a constant
that depends on problem parameters, including $k, m$, and the various separation
constants mentioned in~\cref{subsec:assumptions}, but \emph{does not depend on}
$T$. We use this notation because the main focus of this work is how the regret
scales as a function of $T$.}

\begin{algorithm}[ht]
  \caption{\texttt{ControlBudget} (for $m = 1$)}
  \label{alg:controlbudget_one_resource}
  \DontPrintSemicolon
  \SetKwInOut{Input}{Input}
  \SetKwInOut{Output}{Output}
  \KwIn{time horizon $T$, initial budget $B$, set of arms $\calX$, set of resources $\calJ$, constant $c > 0$.}

  Set $B_0 = B$.

  \uIf{LP solution is supported on positive drift arm $x^p$}{

    \For{$t = 1, 2, \dots, T$}{

      If $B_{t-1} < 1$, pull $x^0$. Otherwise, pull $x^p$.

    }

  }
  \uElseIf{LP solution is supported on null arm $x^0$ and negative drift arm $x^n$}{

    \For{$t = 1, 2, \dots, T$}{

      Define threshold $\tau_t = c \log(T-t)$.

      If $B_{t-1} < \max \{ 1, \tau_t \}$, pull $x^0$. Otherwise, pull $x^n$.

    }

  }
  \uElseIf{LP solution is supported on positive drift arm $x^p$ and negative drift arm $x^n$}{

    \For{$t = 1, 2, \dots, T$}{

      Define threshold $\tau_t = c \log(T-t)$.

      If $B_{t-1} < 1$, pull $x^0$. If $1 \leq B_{t-1} < \tau_t$, pull $x^p$. Otherwise, pull $x^n$.

    }

  }
\end{algorithm}

\begin{theorem}\label{theorem:regret_controlbudget_one_resource}
  If $c \geq \frac{6}{\deltadrift^2}$, the MDP policy
  \texttt{ControlBudget}~(\cref{alg:controlbudget_one_resource}) satisfies
  \begin{equation}
    R_T(\texttt{ControlBudget}) \leq \tilde{C},
  \end{equation}
  where $\tilde{C} =  O \left(\deltadrift^{-4} \ln \left( \left( 1 - \exp \left( -
  \frac{\deltadrift^2}{8} \right) \right)^{-1} \right) + \deltadrift^{-1} \left( 1 -
  \exp \left( \deltadrift^2 \right) \right)^{-2} \right)$ is a constant.
\end{theorem}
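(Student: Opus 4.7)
The plan is to bound the regret through two $T$-independent statistics: the expected leftover budget $\E[B_T]$ and, in Case (iii), the expected number of null-arm pulls $\E[N_{x^0}]$, where $N_x$ denotes the number of rounds in which arm $x$ is pulled. The starting point is LP duality: let $\lambda^* \ge 0$ and $\theta^*$ be optimal dual variables for the budget constraint and the probability-simplex constraint in~\cref{eq:lp_relaxation}, respectively. By complementary slackness, $\mu_x^r + \lambda^* \mu_x^d = \theta^*$ for every $x \in X^*$, while $\Delta_x := \theta^* - \lambda^* \mu_x^d - \mu_x^r > 0$ for $x \notin X^*$. Combining these equalities with the identity $\E[B_T] - B = \sum_x \mu_x^d \, \E[N_x]$ and unwinding the definition of regret produces the decomposition
\begin{equation}
R_T \;=\; \lambda^* \, \E[B_T] \;+\; \sum_{x \notin X^*} \Delta_x \, \E[N_x].
\end{equation}
In Case (i) the budget constraint is slack so $\lambda^* = 0$, and the algorithm only plays $x^p \in X^*$ and $x^0 \notin X^*$; thus $R_T = \Delta_{x^0} \, \E[N_{x^0}]$. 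Since every arm the algorithm can pull has expected drift at least $\deltadrift$, Azuma--Hoeffding applied to the bounded-increment martingale $B_t - B_0 - \sum_{s \le t} \mu_{x_s}^d$ gives $\Pr[B_{t-1} < 1] \le \exp(-\Omega(\deltadrift^2 t))$, so $\E[N_{x^0}] = O(\deltadrift^{-2})$.

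For Cases (ii) and (iii), the main technical work is a concentration lemma stating that, with $\tilde\tau_t := \max\{1, \tau_t\}$,
\begin{equation}
\Pr\bigl[\,|B_{t-1} - \tilde\tau_t| \ge k\,\bigr] \;\le\; C_1 \exp(-\alpha k)
\end{equation}
uniformly in $t$, for some $\alpha = \Omega(\deltadrift)$. I would prove this by an exponential-potential argument on $\Psi_t = \exp(\eta (B_t - \tilde\tau_{t+1}))$ with $\eta = \pm\deltadrift$. Whenever $B_{t-1} > \tau_t$, the algorithm pulls an arm of expected drift at most $-\deltadrift$, so Hoeffding's lemma gives $\E[e^{\eta(B_t - B_{t-1})} \mid \text{history}] \le \exp(\eta^2/2 - \eta\deltadrift) = \exp(-\deltadrift^2/2)$ at $\eta = \deltadrift$, with the symmetric bound for the lower tail. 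The threshold shift $\tilde\tau_t - \tilde\tau_{t+1} \le c/(T-t-1)$ contributes an extra multiplicative factor $e^{\eta c/(T-t-1)}$; the hypothesis $c \ge 6/\deltadrift^2$ is calibrated exactly so that the combined one-step MGF stays strictly below $1$ throughout the bulk of the horizon, while the last $O(\deltadrift^{-3})$ rounds are controlled by a crude constant-size bound absorbed into $\tilde C$.

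Integrating the tail bound yields $\E[|B_{t-1} - \tilde\tau_t|] = O(\deltadrift^{-2})$ for every $t$. Since $\tilde\tau_T = 1$, this gives $\E[B_{T-1}] = 1 + O(\deltadrift^{-2})$, and one more bounded step produces $\E[B_T] = O(\deltadrift^{-2})$, hence $\lambda^* \E[B_T] = O(1)$. For Case (iii), the null-arm count $\E[N_{x^0}] = \sum_t \Pr[B_{t-1} < 1]$ is bounded by applying the concentration lemma with $k = \tilde\tau_t - 1 \ge c\log(T-t) - 1$, producing a polynomially-decaying tail whose sum is $O(1)$. The hardest part of the argument is precisely the exponential-potential analysis of the concentration lemma: calibrating $c \ge 6/\deltadrift^2$ so that the one-step MGF remains contractive against the time-varying threshold drift $c/(T-t)$, and carefully managing the endgame where $\tau_t$ descends through the floor $1$. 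Once this lemma is in hand, substituting into the duality decomposition and collecting constants to recover the stated $\tilde C$ is routine.
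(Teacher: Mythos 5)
Your overall architecture matches the paper's: reduce the regret to the two $T$-independent quantities $\E[B_T]$ and $\E[N_{x^0}]$, then show each is constant. Your duality/complementary-slackness decomposition $R_T = \lambda^* \E[B_T] + \sum_{x\notin X^*}\Delta_x \E[N_x]$ is a legitimate alternative to the paper's route (the paper instead writes the play-count discrepancy as $\xi = D^{-1}b$ with $b=(-\E[B_T],\E[N_{x^0}])$ and takes norms); both yield the same $O(\deltadrift^{-1})$ prefactor, since $\lambda^* = (\mu_{x^n}^r-\mu_{x^p}^r)/(\mu_{x^p}^d-\mu_{x^n}^d)=O(\deltadrift^{-1})$. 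Your Case~(i) argument (Azuma on the drifted walk giving $\Pr[B_{t-1}<1]\le e^{-\Omega(\deltadrift^2 t)}$ and summing) is also fine and in fact more direct than the paper's regeneration argument.

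The genuine gap is in your central concentration lemma. The claim $\Pr\bigl[\,|B_{t-1}-\tilde\tau_t|\ge k\,\bigr]\le C_1 e^{-\alpha k}$ \emph{uniformly in $t$} with a $T$-independent $C_1$ is false because of the initial transient: at $t=1$ the gap $|B_0-\tilde\tau_1|=|B-c\log(T-1)|$ is deterministic and can be as large as $\Theta(T)$ (if $B$ is large) or $\Theta(c\log T)=\Theta(\deltadrift^{-2}\log T)$ (if $B$ is small), so no fixed $C_1$ works for early $t$. Concretely, your exponential potential starts at $\exp(\eta(B-\tau_1))$, which is $\mathrm{poly}(T)$ or worse, and the contraction factor $e^{-\deltadrift^2/2}$ needs $\Omega(\deltadrift^{-3}\log T)$ rounds (or $\Omega(B/\deltadrift)$ rounds) to wash it out; your write-up only flags the \emph{endgame} ($T-t=O(\deltadrift^{-3})$), not this burn-in. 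The lemma must be restated to hold only after the first hitting time of the threshold, with the transient handled separately: for the lower tail the pre-hitting phase is exactly the positive-drift random walk of Case~(i), and for the upper tail the cleanest fix is the paper's device of arguing \emph{backward} from round $T$ --- condition on the number $q$ of consecutive $x^n$-pulls ending at $T$, use that $B_{T-q}<\tau_{T-q}+1=c\log q+1$, and trade off $c\log q + q$ against the Azuma probability $e^{-\Omega(\deltadrift^2 q)}$ --- which sidesteps any uniform-in-$t$ claim entirely. With that repair (and accepting that your constant degrades to roughly $O(\deltadrift^{-3})$ for $\E[B_T]$ rather than the $O(\deltadrift^{-2})$ you state, which still suffices for the theorem as the bound is only required to be $T$-independent), the remainder of your plan goes through: the tail bound at $k=\tilde\tau_t-1$ gives $\Pr[B_{t-1}<1]\le C(T-t)^{-\alpha c}$ with $\alpha c\ge 2$ under $c\ge 6/\deltadrift^2$, whose sum over $t$ is $O(1)$, matching the paper's Lemma on null-arm pulls.
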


We defer all proofs in this section to~\cref{sec:appendix_known_one_resource},
but we include a proof sketch of most results in the main paper following the
statement. The proof of~\cref{theorem:regret_controlbudget_one_resource} follows
from the following sequence of lemmas.

\begin{lemma}\label{lemma:one_arm_pos_drift}
  If the LP solution is supported on a positive drift arm $x^p$, then
  \begin{equation}
    R_T(\texttt{ControlBudget}) \leq \tilde{C},
  \end{equation}
  where $\tilde{C} = O \left( \deltadrift^{-3} \ln \left( \left( 1 - \exp
  \left( - \frac{\deltadrift^2}{8} \right) \right)^{-1} \right) \right)$ is a constant.
\end{lemma}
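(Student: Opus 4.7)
The plan is to show that the regret reduces to counting null-arm pulls, and then to argue via a martingale tail bound that the expected number of such pulls is constant.

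First I would observe that since the LP solution is supported entirely on $x^p$, we have $\opt_\lp = \mu_{x^p}^r$. The algorithm pulls only $x^p$ or $x^0$, and the null arm yields zero expected reward, so if $N^0_T$ denotes the total number of null-arm pulls,
\begin{equation*}
  \rew(\texttt{ControlBudget}) = \mu_{x^p}^r \cdot \E[T - N^0_T], \qquad R_T(\texttt{ControlBudget}) = \mu_{x^p}^r \cdot \E[N^0_T] \le \E[N^0_T].
\end{equation*}
So it suffices to prove $\E[N^0_T] = \tilde{C}$.

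Next I would bound $\E[N^0_T] = \sum_{t=1}^T \Pr[B_{t-1} < 1]$ by concentration. The key structural observation is that \emph{regardless of which arm the algorithm pulls in round $s$}, the conditional expected drift is at least $\deltadrift$: when $x_s = x^p$ this is because $x^p$ is a positive-drift arm with $\mu_{x^p}^d \ge \deltadrift$, and when $x_s = x^0$ it holds because the null arm has positive expected drift, again of magnitude at least $\deltadrift$. Writing $d_s = \E[d_s \mid \mathcal{F}_{s-1}] + Z_s$ with $Z_s$ a martingale difference satisfying $|Z_s|\le 2$, we obtain
\begin{equation*}
  B_{t-1} \;\ge\; B + (t-1)\deltadrift + M_{t-1}, \qquad M_{t-1} = \sum_{s=1}^{t-1} Z_s.
\end{equation*}
Hence $\{B_{t-1} < 1\} \subseteq \{M_{t-1} < 1 - B - (t-1)\deltadrift\}$, and, after handling the $O(1/\deltadrift)$ initial rounds needed (if $B < 1$) to bring the budget above $1$ using the nonnegativity of null-arm drifts, Azuma--Hoeffding gives $\Pr[B_{t-1} < 1] \le \exp(-(t-1)\deltadrift^2/8)$ for the remaining rounds.

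Summing this geometric tail yields
\begin{equation*}
  \E[N^0_T] \;\le\; O(1/\deltadrift) + \sum_{t\ge 1} \exp(-t\deltadrift^2/8) \;=\; O\!\left( \deltadrift^{-1} + (1 - \exp(-\deltadrift^2/8))^{-1}\right),
\end{equation*}
which is a constant independent of $T$, of the order claimed in the lemma (the extra $\deltadrift^{-2}\ln(\cdot)$ slack in the stated $\tilde{C}$ just absorbs logarithmic factors from routing through a burn-in bound). The main obstacle is really just bookkeeping: one must ensure that the drift decomposition is uniform over both arms and that the boundary case $B < 1$ (the paper allows $B$ to be a small constant) contributes only $O(1/\deltadrift)$ extra pulls before the ``steady-state'' regime where Azuma--Hoeffding kicks in. Everything else is standard tail summation.
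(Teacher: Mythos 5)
Your argument is correct, and it reaches the stated bound by a genuinely different route than the paper. The paper's proof is an excursion (renewal) argument: it fixes a threshold $b_0 = 8\deltadrift^{-2}\ln\bigl(2(1-\exp(-\deltadrift^2/8))^{-1}\bigr)$, shows via Azuma--Hoeffding that once the budget exceeds $b_0$ the probability of ever returning below $1$ is at most $\tfrac12$, concludes that the number of excursions below $1$ is dominated by a geometric random variable, and bounds the expected length of each excursion by $O(b_0/\deltadrift)$ via the positive drift --- giving $\tilde{C} = O(\deltadrift^{-3}\ln((1-\exp(-\deltadrift^2/8))^{-1}))$. You instead exploit the fact that \emph{in this case only} the policy pulls exclusively $x^0$ or $x^p$, both of which have conditional expected drift at least $\deltadrift$, so $B_{t-1} - B - (t-1)\deltadrift$ is bounded below by a martingale with increments in $[-2,2]$; a pointwise Azuma bound at each fixed $t$ plus a geometric tail sum then gives $\E[N_{x^0}] = O\bigl(\deltadrift^{-1} + (1-\exp(-\deltadrift^2/8))^{-1}\bigr) = O(\deltadrift^{-2})$, which is in fact slightly sharper than the paper's constant and certainly implies the stated $\tilde{C}$. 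The identification of regret with (at most) the expected number of null-arm pulls is the same in both proofs, and your handling of the $O(1/\deltadrift)$ burn-in when $B<1$ (where the Azuma deviation threshold is not yet negative) is the right fix for the only boundary issue. The trade-off is that your uniform-drift shortcut is specific to this single-positive-arm case, whereas the paper's excursion lemma is packaged so that it can be invoked again verbatim inside the proof of \cref{lemma:null_arm_pulls_one_resource}, where the positive-drift regime holds only on randomly-delimited segments of the timeline; your argument could also be adapted to that setting by conditioning on the segment start, but the paper's formulation makes the reuse more immediate.
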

We can write the regret in terms of the norm of $\xi = (\xi_{x^p})$, where
$\xi_{x^p}$ is the expected difference between the number of times $x^p$ is
played by the LP and by \texttt{ControlBudget}. This is equal to the expected
number of times the policy plays the null arm and, in turn, is equal to the
expected number of rounds in which the budget is below $1$. Since both $x^0$ and
$x^p$ have positive drift, this is a transient random walk that drifts away from
$0$. It is known that such a walk spends a constant number of rounds in any
state in expectation.

\begin{lemma}\label{lemma:regret_one_resource_case_2}
  If the LP solution is supported on the null arm $x^0$ and a negative drift arm
  $x^n$, then 
  \begin{equation}
    R_T(\texttt{ControlBudget}) \leq \tilde{C} \cdot \E [B_T],
  \end{equation}
  where $\tilde{C} = O(\deltadrift^{-1})$ is a constant.
\end{lemma}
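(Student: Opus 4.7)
The plan is to reduce the regret to a deterministic affine function of $\E[B_T]$ by exploiting the two-arm structure of the LP support. Let $N^0$ and $N^n$ denote the random number of rounds in which \texttt{ControlBudget} pulls $x^0$ and $x^n$ respectively. Since every round uses exactly one arm, $N^0 + N^n = T$ almost surely. Because $\mu_{x^0}^r = 0$, the algorithm's expected reward is $\E[N^n]\mu_{x^n}^r$, and $\opt_\lp = p^*_{x^n}\mu_{x^n}^r$, so
$$R_T = (T p^*_{x^n} - \E[N^n])\,\mu_{x^n}^r.$$

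Next, I would evaluate each of the two terms. The resource constraint must be binding at the LP optimum: if it were slack, optimality would place all mass on $x^n$, but the hypothesis $T \geq B/\deltadrift$ together with $|\mu_{x^n}^d| \geq \deltadrift$ makes $p^*_{x^n} = 1$ infeasible. Combining $p^*_{x^0} + p^*_{x^n} = 1$ with $p^*_{x^0}\mu_{x^0}^d + p^*_{x^n}\mu_{x^n}^d = -B/T$ then gives
$$T p^*_{x^n} = \frac{T\mu_{x^0}^d + B}{\mu_{x^0}^d - \mu_{x^n}^d}.$$
For $\E[N^n]$, I would invoke budget conservation: pathwise, $B_T = B + \sum_{t=1}^T d_t$, and taking expectations (conditional on $x_t$, the expected drift is the arm's mean drift) yields
$$\E[B_T] = B + \E[N^0]\mu_{x^0}^d + \E[N^n]\mu_{x^n}^d = B + T\mu_{x^0}^d - \E[N^n](\mu_{x^0}^d - \mu_{x^n}^d),$$
so $\E[N^n] = (T\mu_{x^0}^d + B - \E[B_T])/(\mu_{x^0}^d - \mu_{x^n}^d)$.

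Subtracting, the $T\mu_{x^0}^d + B$ terms cancel and
$$R_T \;=\; \frac{\mu_{x^n}^r \cdot \E[B_T]}{\mu_{x^0}^d - \mu_{x^n}^d} \;\leq\; \frac{\E[B_T]}{2\deltadrift},$$
using $\mu_{x^n}^r \leq 1$ and $\mu_{x^0}^d - \mu_{x^n}^d = \mu_{x^0}^d + |\mu_{x^n}^d| \geq 2\deltadrift$. This establishes the lemma with $\tilde{C} = 1/(2\deltadrift)$. There is no real obstacle here; the work is just careful bookkeeping, and the key conceptual move is noticing that with a two-arm LP support the budget conservation identity deterministically pins down the counts $\E[N^0]$ and $\E[N^n]$ up to the random final budget, so the entire difference between $\rew(\texttt{ControlBudget})$ and $T\cdot\opt_\lp$ collapses into $\E[B_T]$.
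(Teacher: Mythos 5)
Your proof is correct and takes essentially the same route as the paper: both arguments use that the budget and sum-to-one constraints are tight for the LP while the algorithm's play counts satisfy the same $2\times 2$ system perturbed by $b = (-\E[B_T],0)$, so the reward deficit collapses to a multiple of $\E[B_T]$. The only difference is cosmetic — you solve the system explicitly and get the sharp constant $1/(2\deltadrift)$, whereas the paper writes $\xi = D^{-1}b$ and bounds $\|D^{-1}\|_1\|b\|_1$.
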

We can write the regret in terms of the norm of $\xi =  (\xi_{x^0}, \xi_{x^n})$,
where $\xi_x$ is the expected difference between the number of times $x$ is
played by the LP and by \texttt{ControlBudget}. Since both constraints (resource
and sum-to-one) are tight, the lemma follows by writing $\xi = D^{-1} b$ and
taking norms, where $D$ is the LP constraint matrix and $b = (-\E [B_T], 0)$.

\begin{lemma}\label{lemma:regret_one_resource_case_3}
  If the LP solution is supported on a positive drift arm $x^p$ and a negative
  drift arm $x^n$, then
  \begin{equation}
    R_T(\texttt{ControlBudget}) \leq \tilde{C} \cdot \max \{ \E [B_T], \E [N_{x^0}] \},
  \end{equation}
  where $\E [N_{x^0}]$ denotes the expected number null arm pulls and $\tilde{C}
  = O(\deltadrift^{-1})$ is a constant.
\end{lemma}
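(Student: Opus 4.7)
The plan is to mimic the argument used for Lemma~\ref{lemma:regret_one_resource_case_2}, but with the added wrinkle that the policy is now supported on three arms ($x^0, x^p, x^n$) while the LP optimum is supported only on two ($x^p, x^n$). Let $N_x$ denote the (random) number of rounds in which arm $x$ is pulled by \texttt{ControlBudget}, and define the shortfalls $\xi_x = T p^*_x - \E[N_x]$. Since $\mu_{x^0}^r = 0$ and no other arms are ever pulled in case 3, the regret is
\begin{equation*}
  R_T(\texttt{ControlBudget}) = \sum_{x \in \calX} \xi_x \mu_x^r = \xi_{x^p} \mu_{x^p}^r + \xi_{x^n} \mu_{x^n}^r,
\end{equation*}
so it suffices to bound $|\xi_{x^p}| + |\xi_{x^n}|$ because each $\mu_x^r \in [0,1]$.

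Next I would derive two linear equations in the unknowns $(\xi_{x^p}, \xi_{x^n})$. The probability-simplex identities $\sum_x p^*_x = 1$ and $\sum_x \E[N_x] = T$, combined with $p^*_{x^0} = 0$, give $\xi_{x^p} + \xi_{x^n} = \E[N_{x^0}]$. For the second equation I would use the budget identity $\E[B_T] = B + \sum_x \E[N_x]\mu_x^d$ together with the tight resource constraint $T p^*_{x^p}\mu_{x^p}^d + T p^*_{x^n}\mu_{x^n}^d = -B$ (which holds in case 3 because both $p^*_{x^p}$ and $p^*_{x^n}$ are nonzero and the LP is only supported on these two arms), obtaining
\begin{equation*}
  \mu_{x^p}^d \, \xi_{x^p} + \mu_{x^n}^d \, \xi_{x^n} = \E[N_{x^0}]\,\mu_{x^0}^d - \E[B_T].
\end{equation*}

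Then I would invert the resulting $2 \times 2$ system $D\xi = b$ where
\begin{equation*}
  D = \begin{pmatrix} 1 & 1 \\ \mu_{x^p}^d & \mu_{x^n}^d \end{pmatrix}, \qquad b = \begin{pmatrix} \E[N_{x^0}] \\ \E[N_{x^0}]\,\mu_{x^0}^d - \E[B_T] \end{pmatrix}.
\end{equation*}
Because $\mu_{x^p}^d \geq \deltadrift > 0 > -\deltadrift \geq \mu_{x^n}^d$, we have $|\det D| = \mu_{x^p}^d - \mu_{x^n}^d \geq 2\deltadrift$, so every entry of $D^{-1}$ has magnitude at most $\tfrac{1}{2\deltadrift}$. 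Since $|\mu_{x^0}^d| \leq 1$ and drifts lie in $[-1,1]$, the entries of $b$ are bounded by $\E[N_{x^0}] + \E[B_T] \leq 2\max\{\E[N_{x^0}], \E[B_T]\}$. Taking $\ell_\infty$ norms in $\xi = D^{-1}b$ gives $\|\xi\|_\infty = O(\deltadrift^{-1}) \cdot \max\{\E[B_T], \E[N_{x^0}]\}$, which implies the claimed regret bound with $\tilde{C} = O(\deltadrift^{-1})$.

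The calculation itself is routine linear algebra; the only care required is to keep track of signs and of the null-arm drift term $\E[N_{x^0}]\mu_{x^0}^d$ in the second coordinate of $b$. The genuine difficulty is not in this lemma but in the companion bounds that make it useful: one still has to prove that both $\E[B_T]$ and $\E[N_{x^0}]$ are constants under the threshold schedule $\tau_t = c\log(T-t)$, which is precisely the purpose of the drift/concentration analysis developed elsewhere in the appendix. Lemma~\ref{lemma:regret_one_resource_case_3} reduces the regret analysis in case 3 to those two budget-control estimates.
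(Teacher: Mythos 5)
Your proposal is correct and follows essentially the same route as the paper: express the play-count discrepancies $\xi$ as the solution of a $2\times 2$ linear system $D\xi = b$ built from the tight sum-to-one and budget constraints, invert $D$, and bound norms using $|\det D| \geq 2\deltadrift$ (the paper instead bounds $\|D^{-1}\|_1$ via the nonzero-drift assumption, but this is the same estimate). The only difference is cosmetic: you explicitly carry the null-arm drift contribution $\E[N_{x^0}]\mu_{x^0}^d$ in the second coordinate of $b$, which the paper's displayed $b = (-\E[B_T], \E[N_{x^0}])$ elides; since $|\mu_{x^0}^d| \leq 1$ this only changes the constant, not the bound.
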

This lemma follows similarly to the previous one by writing regret in terms of
the norm of $\xi =  (\xi_{x^p}, \xi_{x^n})$ and writing $\xi = D^{-1} b$ for $b
= (-\E [B_T], \E [N_{x^0}])$.

Therefore, proving that $R_T(\texttt{ControlBudget})$ is a constant in $T$
requires proving that both the expected leftover budget and expected number of
null arm pulls are constants. Intuitively, we could ensure $\E [B_T]$ is small
by playing the negative drift arm whenever the budget is at least $1$.  However,
there is constant probability of the budget decreasing below $1$ and the
expected number of null arm pulls becomes $O(T)$. \texttt{ControlBudget} solves
the tension between the two objectives by carefully choosing a decreasing
sequence of thresholds $\tau_t$. The threshold is initially far from $0$ to
ensure low probability of pre-mature resource depletion, but decreases to $0$
over time to ensure small expected leftover budget and decreases at a rate that
ensures the expected number of null arm pulls is a constant. 

\begin{lemma}\label{lemma:null_arm_pulls_one_resource}
  If the LP solution is supported on a positive drift arm $x^p$ and a negative
  drift arm $x^n$, and $c \geq \frac{6}{\deltadrift^2}$, then
  \begin{equation}
    \E [N_{x^0}] \leq \tilde{C},
  \end{equation}
  where $\tilde{C} = O \left( \deltadrift^{-3} \ln \left( \left( (1 - \exp
  \left( - \frac{\deltadrift^2}{8} \right) \right)^{-1} \right) \right)$ is a constant.
\end{lemma}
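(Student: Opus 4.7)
The plan is to bound $\E[N_{x^0}] = \sum_{t=1}^T \Pr[B_{t-1} < 1]$, since the null arm is pulled exactly in those rounds where the budget falls below $1$. I would show that the summand decays fast enough in $T-t+1$ that the total is a constant depending only on $\deltadrift$.

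I would split the horizon at the stopping time $T_1 = \min\{t : B_t \geq \tau_t\}$, the first time the budget catches up to the threshold. Before $T_1$, the policy pulls only positive-drift arms ($x^0$ when $B<1$, otherwise $x^p$), so the budget performs a biased random walk with drift at least $\deltadrift$. A direct Azuma--Hoeffding bound on $\Pr[B_t < 1]$, combined with a trivial bound for $t \leq (1-B)/\deltadrift$, yields an $O(1/\deltadrift^2)$ contribution to $\E[N_{x^0}]$ from this burn-in phase.

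For the stable phase $t > T_1$, I would introduce the exponential potential $Z_t = \exp(\lambda(\tau_t - B_t))$ for a parameter $\lambda = \Theta(\deltadrift)$, noting that $Z_{T_1} \leq 1$ by construction. When $B_{t-1} < \tau_t$ the policy pulls a positive-drift arm, and Hoeffding's MGF inequality applied to $d_t \in [-1,1]$ with expectation at least $\deltadrift$ gives $\E[Z_t \mid \mathcal{F}_{t-1}] \leq Z_{t-1} \exp(-\lambda \deltadrift + \lambda^2/2)$, i.e.\ a multiplicative decrease $e^{-\eta}$ for appropriate $\lambda$. When $B_{t-1} \geq \tau_t$ the policy pulls the negative-drift arm $x^n$; the inequality $B_{t-1} \geq \tau_t$ forces $Z_{t-1} \leq \exp(\lambda(\tau_{t-1} - \tau_t))$, and the telescoping factor $\exp(\lambda(\tau_t - \tau_{t-1}))$ from advancing the threshold exactly cancels the upward-drift contribution of $x^n$ in the Hoeffding MGF, yielding $\E[Z_t \mid \mathcal{F}_{t-1}] \leq \exp(\lambda + \lambda^2/2) =: C$ uniformly. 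Combining the two cases gives the recursion $\E[Z_t] \leq \E[Z_{t-1}] e^{-\eta} + C$ from the initial condition $Z_{T_1} \leq 1$, hence $\E[Z_t] \leq C/(1 - e^{-\eta}) =: \tilde{C}$ for all $t > T_1$. Markov's inequality then gives $\Pr[B_{t-1} < 1] \leq \tilde{C} \cdot e^{-\lambda(\tau_{t-1}-1)} = \tilde{C} e^{\lambda}(T-t+1)^{-\lambda c}$, and summing over $t$ yields a constant whenever $\lambda c > 1$; the hypothesis $c \geq 6/\deltadrift^2$ secures this with room to spare for $\lambda = \Theta(\deltadrift)$. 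The final $O(e^{1/c})$ rounds near $T$ where $\tau_t < 1$ are handled by the trivial bound $\Pr[B_{t-1} < 1] \leq 1$.

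The main obstacle is the joint choice of $\lambda$: making $\lambda$ large improves the Markov tail via a larger exponent $\lambda c$, but can make the Hoeffding exponent $-\lambda \deltadrift + \lambda^2/2$ non-negative and kill the multiplicative decay; making $\lambda$ small ensures decay but pushes $\lambda c$ close to one, so the summed tail becomes only barely convergent. The assumption $c \geq 6/\deltadrift^2$ is exactly what creates slack to satisfy both requirements simultaneously. A second subtlety is that running the potential argument naively from $t=0$ allows $Z_0$ to be as large as $T^{\lambda c}$ when $B_0$ is small, which would overwhelm the recursion; this is precisely why the argument is restarted at the stopping time $T_1$ and the burn-in is handled separately by Azuma.
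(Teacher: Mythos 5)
Your argument is correct in substance but takes a genuinely different route from the paper. The paper's proof is an excursion (renewal) decomposition: it defines nested stopping times $t_i < t_i' < t_i''$ marking, respectively, a visit below budget $1$, the subsequent recovery above the threshold, and the subsequent drop back below the threshold; it bounds the expected number of sub-$1$ rounds within each excursion $[t_i, t_i')$ by reusing the transient-random-walk bound from \cref{lemma:one_arm_pos_drift}, and then shows via Azuma--Hoeffding that $\Pr[t_{i+1}\text{ exists}\mid t_i', t_i''] \leq (T-t_i'')^{-2}$ (here is where $c \geq 6/\deltadrift^2$ enters, turning $\exp(-\tfrac12\deltadrift^2 \tau_{t_i''})$ into $(T-t_i'')^{-3}$), so the number of excursions is summable by $\sum_i (T - t_i'')^{-2} \leq \pi^2/6$. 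You instead run a Lyapunov/exponential-supermartingale argument on $Z_t = \exp(\lambda(\tau_t - B_t))$, establishing the geometric recursion $\E[Z_t] \leq e^{-\eta}\E[Z_{t-1}] + C$ (the case $B_{t-1} \geq \tau_t$ gives the uniform bound $Z_t \leq e^{\lambda}$ outright, since then $\tau_t - B_t \leq -d_t \leq 1$), and converting the stationary bound on $\E[Z_t]$ into $\Pr[B_{t-1} < 1] \leq \tilde{C}e^{\lambda}(T-t+1)^{-\lambda c}$ via Markov; the hypothesis on $c$ enters only to make $\lambda c > 1$ so the tail sums to a constant. Your route avoids the excursion bookkeeping and the reuse of \cref{lemma:one_arm_pos_drift}, and arguably yields cleaner constants; the paper's route avoids choosing a tilt parameter and needs only one application of Azuma per excursion. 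The one place your write-up needs a little more care is the restart at the random time $T_1$: iterating $\E[Z_t \mid \mathcal{F}_{t-1}]$ forward from a stopping time requires either conditioning on $\mathcal{F}_{T_1}$ and invoking the tower property round by round, or (more simply) running the recursion from $t=0$ with $Z_0 \leq T^{\lambda c}$ and observing that the transient term $T^{\lambda c}e^{-\eta t}\cdot e^{\lambda}(T-t+1)^{-\lambda c}$ is itself summable to $O(1/\eta)$ over $t \leq O(\log T)$, which makes the separate burn-in phase unnecessary. This is a fixable technicality, not a gap in the idea.
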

If the budget is below the threshold, i.e., $B_{t-1} < \tau_t$ for some $t$,
then \texttt{ControlBudget} pulls $x^p$ until $B_s \geq \tau_{s+1}$ for some $s
\geq t$. Since $x^p$ has positive drift, the event that repeated pulls decrease
the budget towards $0$ is a low probability event. Using this, our choice of
$\tau_t = c \log(T-t)$ for an appropriate constant $c$, and summing over all
rounds shows that the expected number of rounds in which the budget is less than
$1$ is a constant in $T$.

\begin{lemma}\label{lemma:leftover_budget_one_resource}
  If the LP solution is supported on two arms, and $c \geq
  \frac{6}{\deltadrift^2}$, then
  \begin{equation}
    \E [B_T] \leq \tilde{C},
  \end{equation}
  where $\tilde{C} = \tilde{O} \left( \left( 1 - \exp \left( \deltadrift^2
  \right) \right)^{-2} + \deltadrift^{-2} \right)$ is a constant.
\end{lemma}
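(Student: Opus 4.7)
The plan is to control $\E[B_T]$ via a drift-and-Lyapunov argument on the ``excess'' process $Y_t := (B_t - \tau_{t+1})^+$, which tracks how far the budget sits above the next round's threshold. Since $B_t$ changes by at most $1$ per round and $\tau_{T-1} = c\log 1 = 0$, it suffices to show that $\E[Y_{T-1}]$ is a constant in $T$; then $\E[B_T] \leq \E[B_{T-1}] + 1 \leq \E[Y_{T-1}] + O(1)$.

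First, I perform a one-step drift analysis. When $Y_{t-1} > 0$, i.e., $B_{t-1} > \tau_t$, inspection of \cref{alg:controlbudget_one_resource} shows that in both cases (ii) and (iii) (and whenever $\tau_t \geq 1$, which by the choice $c \geq 6/\deltadrift^2$ holds for all $t \leq T-2$) the policy pulls the negative-drift arm $x^n$, giving $\E[d_t \mid \mathcal{F}_{t-1}] \leq -\deltadrift$. The threshold itself drops by $\delta_t := \tau_t - \tau_{t+1} = c\log(1 + 1/(T-t-1)) \leq c/(T-t-1)$, so the net expected drift of $Y_t$ is at most $-\deltadrift + \delta_t$, which is $\leq -\deltadrift/2$ whenever $T - t - 1 \geq 2c/\deltadrift$. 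When $Y_{t-1} = 0$, we have $B_{t-1} \leq \tau_t$, and since $|d_t| \leq 1$ the deterministic bound $Y_t \leq 1 + \delta_t$ holds regardless of which arm is pulled.

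Next, I convert this drift structure into an exponential moment bound. Setting $\lambda = \deltadrift$ and applying Hoeffding's lemma to $d_t \in [-1,1]$ gives $\E[\exp(\lambda d_t) \mid \mathcal{F}_{t-1}] \leq \exp(\lambda \mu_{x_t}^d + \lambda^2/2) \leq \exp(-\deltadrift^2/2)$ when $x_t = x^n$. Using the elementary inequality $\exp(\lambda z^+) \leq 1 + \exp(\lambda z)$ to handle the positive part, I derive a recursion of the form
\[
  \E[\exp(\lambda Y_t)] \;\leq\; \exp\bigl(\lambda \delta_t - \deltadrift^2/2\bigr) \cdot \E[\exp(\lambda Y_{t-1})] + C_0,
\]
where $C_0 = O(1)$ absorbs the $Y_{t-1} = 0$ contribution (bounded via $Y_t \leq 1 + \delta_t$). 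Iterated over the ``good'' range $t \leq T - 2c/\deltadrift$, the per-step contraction factor is at most $\exp(-\deltadrift^2/4)$, and summing the resulting geometric series yields $\E[\exp(\lambda Y_t)] = O\bigl(1/(1 - \exp(-\deltadrift^2/4))\bigr)$. Then $Y \leq (\exp(\lambda Y)-1)/\lambda$ and Jensen translate this into the desired polynomial-in-$1/\deltadrift$ bound on $\E[Y_t]$.

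The main obstacle will be the \emph{end-game}: for the last $K = O(c/\deltadrift) = O(1/\deltadrift^3)$ rounds, $\delta_t$ may exceed $\deltadrift$ and the Lyapunov drift is no longer negative. However, $|d_t| \leq 1$ and the cumulative threshold drop across this phase is only $\tau_{T-K} - \tau_{T-1} = c\log K$, so the crude bound $Y_{T-1} \leq Y_{T-K} + K + c\log K$ suffices to preserve constancy in $T$. Combining this with the good-phase MGF estimate and the single-round transition $\E[B_T] \leq \E[B_{T-1}] + 1$ delivers a bound of the claimed order $\tilde{C} = \tilde{O}\bigl((1-\exp(\deltadrift^2))^{-2} + \deltadrift^{-2}\bigr)$; the $(1-\exp(\deltadrift^2))^{-2}$ factor arises precisely from squaring the geometric-series contraction, once converted from the MGF into a first-moment bound.
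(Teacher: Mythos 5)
Your route is genuinely different from the paper's: the paper conditions on the length $q$ of the terminal run of consecutive $x^n$ pulls, uses the fact that a non-$x^n$ pull at round $T-q$ forces $B_{T-q-1} < \tau_{T-q} = c\log q$, and applies Azuma--Hoeffding to the last $q$ drifts; you instead run a Lyapunov/MGF recursion on the overshoot $Y_t = (B_t - \tau_{t+1})^+$. Your one-step drift computation, the Hoeffding-lemma contraction, the treatment of the $Y_{t-1}=0$ boundary, and the endgame patch are all sound. The genuine gap is that the iterated recursion gives $\E[\exp(\lambda Y_t)] \le \rho^t\exp(\lambda Y_0) + C_0/(1-\rho)$, and you silently discard the homogeneous term. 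The model only assumes $B \le T$, and in the two-arm cases the tight budget constraint allows $B$ up to roughly $|\mu_{x^n}^d|\,T$, so $Y_0 = (B - c\log(T-1))^+$ can be $\Theta(T)$. With $\lambda = \deltadrift$ and per-step factor $\rho = \exp(\lambda\mu_{x^n}^d + \lambda^2/2 + \lambda\delta_t)$, the term $\rho^{T}\exp(\lambda Y_0)$ is of order $\exp\bigl(\deltadrift B - \deltadrift|\mu_{x^n}^d|T + \deltadrift^2 T/2 + \cdots\bigr)$; taking, say, $|\mu_{x^n}^d| = \deltadrift$ and $B = \deltadrift T/2$ this is $\exp(\Theta(\deltadrift^2 T))$, so the asserted bound $\E[\exp(\lambda Y_t)] = O\bigl((1-\exp(-\deltadrift^2/4))^{-1}\bigr)$ fails and the first-moment conclusion does not follow.

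The gap is fixable but needs one more idea. Either restart the recursion at the stopping time $\sigma = \min\{t : B_t < \tau_{t+1}\}$, where $Y_\sigma \le 1+\delta_\sigma = O(1)$, and separately control the event that the budget never crosses the threshold: on that event $x^n$ is pulled every round, and tightness of the budget constraint gives $B - |\mu_{x^n}^d|T = -T p_{x^0}(\mu_{x^0}^d + |\mu_{x^n}^d|) \le -\Omega(\deltasupport\deltadrift T)$ (and analogously in case (iii)), so remaining above the threshold for all $T$ rounds is an exponentially unlikely deviation whose contribution $T\cdot e^{-\Omega(T)}$ is $O(1)$. Alternatively, shrink $\lambda$ in proportion to this LP slack so that $\lambda Y_0$ is dominated by $T\log(1/\rho)$. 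Either repair introduces a dependence on $\deltasupport$ that your sketch does not surface --- though, to be fair, the paper's own proof has the mirror-image lapse at $q=T$, where its justification $B_{T-q-1} < c\log q$ is unavailable and the same LP-slack argument is implicitly needed.
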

If $B_{t-1} \geq \tau_t$, then \texttt{ControlBudget} pulls a negative drift arm
$x^n$. We can upper bound the expected leftover budget by conditioning on $q$,
the number of consecutive pulls of $x^n$ at the end of the timeline. The main
idea in completing the proof is that (i) if $q$ is large, then it corresponds to
a low probability event; and (ii) if $q$ is small, then the budget in round
$T-q$ was smaller than $\tau_q$, which is a decreasing sequence in $q$, and
there are few rounds left so the budget cannot increase by too much.


\subsection{General Case: Multiple Resources}
\label{subsec:policy_multiple_resources}

Now we use the ideas from~\cref{subsec:policy_one_resource} to tackle the case
of $m > 1$ resources that is much more challenging.
Generalizing~\cref{lemma:lp_structure_one_resource}, the solution of the LP
relaxation~(\cref{eq:lp_relaxation}) is supported on at most $\min \{ k, m \}$
arms. Informally, our MDP policy,
\texttt{ControlBudget}~(\cref{alg:controlbudget}), samples an arm from a
probability distribution that ensures drifts bounded away from $0$ in the
``correct directions'': (i) a binding resource $j$ has drift at least $\gamma_t$
if $B_{t-1,j} < \tau_t$ and drift at most $-\gamma_t$ if $B_{t-1,j} \geq
\tau_t$; and (ii) a non-binding resource $j$ has drift at least $\frac12
\gamma_t$ if $B_{t-1,j} < \tau_t$.  This allows us to show that the expected
leftover budget for each binding resource and the expected number of null arm
pulls are constants in terms of $T$.

\begin{algorithm}[ht]
  \caption{\texttt{ControlBudget} (for general $m$)}
  \label{alg:controlbudget}
  \DontPrintSemicolon
  \SetKwInOut{Input}{Input}
  \SetKwInOut{Output}{Output}
  \KwIn{time horizon $T$, initial budget $B$, set of arms $\calX$, set of resources $\calJ$, constant $c > 0$.}

  Set $B_{0,j} = B$ for all $j \in \calJ$.

  Define threshold $\tau_t = c \log(T-t)$.

  \For{$t = 1, 2, \dots, T$}{

    \uIf{$\exists j \in \calJ$ such that $B_{t-1,j} < 1$}{
      Pull the null arm $x^0$.
    }
    \Else{
      Define $s_t \in \{ \pm 1 \}^{|X^*|-1} \times 0$ as follows. Let $j$ denote
      the resource corresponding to row $i \in [|X^*| - 1]$ in the matrix $D$
      and vector $b$. Then, the $i$th entry of $s_t$ is $+1$ if $B_{t-1,j} <
      \tau_t$ and $-1$ otherwise.

      Define $\gamma_t$ to be the solution to the following constrained optimization problem:
      \begin{equation}\label{eq:controlbudget_gamma_t}
        \max_{\gamma \in [0,1]} \left\{ \gamma : p = D^{-1} (b + \gamma s_t) \geq 0,\ p^T \mu^{d,j} \geq \frac{\gamma}{2} \ \forall j \in \calJ \setminus J^* \text{ if } B_{t-1,j} < \tau_t  \right\}.
      \end{equation}

      Sample an arm from the probability distribution $p_t = D^{-1}(b + \gamma_t
      s_t)$.
    }

  }
\end{algorithm}

\begin{theorem}\label{theorem:regret_controlbudget}
  If $c \geq \frac{6}{\gamma^{*2}}$, the regret of
  \texttt{ControlBudget}~(\cref{alg:controlbudget}) satisfies
  \begin{equation}
    R_T(\texttt{ControlBudget}) \leq \tilde{C},
  \end{equation}
  where $\gamma^*$ (defined in~\cref{lemma:gammastar}) and $\tilde{C}$ are
  constants with $\tilde{C} = O \left( m \sigma_{\min}^{-1} \left( m
  (\gamma^*)^{-3} \ln \left( \left( (1 - \exp \left( - \gamma^{*2} \right) \right)^{-1} \right)
  + \left( 1 - \exp(\gamma^{*2}) \right)^{-2} \right) \right)$.
\end{theorem}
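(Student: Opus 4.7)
The plan is to follow the same three-step template as the one-resource case: first express the regret as a linear function of the expected leftover budgets of the binding resources and the expected number of null-arm pulls, via the LP constraint matrix $D$; then bound the expected number of null-arm pulls; and finally bound the expected leftover budget of each binding resource. The constants in the final bound will track $m$, $\sigma_{\min}^{-1}$, and the uniform lower bound $\gamma^*$ on $\gamma_t$ supplied by Lemma~\ref{lemma:gammastar}.

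For the first step, since each sampling distribution $p_t = D^{-1}(b + \gamma_t s_t)$ is supported on $X^*$ and $\mu_{x^0}^r = 0$, I would write $R_T = \sum_{x \in X^*} \xi_x \mu_x^r \leq \sqrt{|X^*|}\,\|\xi\|_2$, where $\xi_x = T p_x^* - N_x$ and $N_x$ is the expected number of pulls of arm $x$. The counting identities $\sum_{x \in X^*} N_x \mu_x^{d,j} = \E[B_{T,j}] - B - \E[N_{x^0}]\mu_{x^0}^{d,j}$ for each binding $j \in J^*$, together with $\sum_{x \in X^*} N_x = T - \E[N_{x^0}]$, rearrange to $D \xi = b'$ for a vector $b'$ whose entries are $\E[B_{T,j}] - \E[N_{x^0}]\mu_{x^0}^{d,j}$ on the binding rows and $\E[N_{x^0}]$ on the sum-to-one row. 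Applying $\|\xi\|_2 \leq \sigma_{\min}^{-1}\|b'\|_2$ reduces the theorem to showing $R_T \leq \tilde{C}\bigl(\max_{j \in J^*}\E[B_{T,j}] + \E[N_{x^0}]\bigr)$, with the $m$ and $\sigma_{\min}^{-1}$ factors absorbed into $\tilde{C}$.

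For the null-arm bound, fix a resource $j$. By construction of $s_t$ and the constraint in~\eqref{eq:controlbudget_gamma_t}, whenever $1 \leq B_{t-1,j} < \tau_t$ the conditional expected drift of coordinate $j$ is at least $\gamma^*/2$ (for non-binding $j$ this is immediate; for binding $j$ it equals $\gamma^* - B/T$, which exceeds $\gamma^*/2$ once $T \geq 2B/\gamma^*$). Hence, while below threshold, $\{B_{t,j}\}$ is a unit-bounded-increment submartingale with drift bounded away from $0$, and the geometric-tail argument of Lemma~\ref{lemma:null_arm_pulls_one_resource} shows the expected number of rounds in which $B_{t-1,j} < 1$ is a constant. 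A union bound over $j$ yields the required bound on $\E[N_{x^0}]$. For the leftover-budget bound, the construction forces conditional expected drift at most $-\gamma^*$ whenever $B_{t-1,j} \geq \tau_t$ for $j \in J^*$. Conditioning on the length $q$ of the final suffix during which $B_{t-1,j} \geq \tau_t$ holds throughout, large $q$ contributes negligibly by Azuma--Hoeffding (probability at most $\exp(-\Omega(q\gamma^{*2}))$), while for small $q$ the budget at the start of the suffix satisfies $B_{T-q,j} < \tau_{T-q+1}$, so $B_{T,j} \leq \tau_{T-q+1} + q$. Summing over $q$ exactly as in Lemma~\ref{lemma:leftover_budget_one_resource} gives the constant bound on $\E[B_{T,j}]$.

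The main obstacle is that the coordinate processes $\{B_{t,j}\}$ for different $j$ are coupled through both $s_t$ and $\gamma_t$, so no single coordinate is a random walk with independent increments. The crucial structural feature that rescues the argument is that $s_t$ is defined componentwise, so the conditional expected drift of coordinate $j$ given the full history depends only on whether $B_{t-1,j}$ sits above or below $\tau_t$ and has magnitude at least $\gamma^*$ (or $\gamma^*/2$) in the correct direction. This lets me apply Azuma--Hoeffding marginally to each coordinate without needing independence across coordinates, paying only a union bound and the $\sigma_{\min}^{-1}$ factor from inverting $D$. Verifying that~\eqref{eq:controlbudget_gamma_t} is feasible with a uniform positive value of $\gamma$ regardless of the sign pattern $s_t \in \{\pm 1\}^{|X^*|-1} \times \{0\}$ is the single genuinely new ingredient beyond the one-resource proof, and is exactly what Lemma~\ref{lemma:gammastar} supplies.
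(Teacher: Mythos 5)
Your proposal follows essentially the same route as the paper: reduce the regret to $\E[B_{T,j}]$ for binding $j$ and $\E[N_{x^0}]$ by inverting the LP constraint matrix (the paper's Lemmas~\ref{lemma:regret_multiple_resources_null} and~\ref{lemma:regret_multiple_resources_no_null}), bound the null-arm pulls by a per-resource transient-walk argument with a union bound (Lemma~\ref{lemma:null_arm_pulls_multiple_resources}), bound each leftover budget by conditioning on the length of the final below/above-threshold suffix with Azuma--Hoeffding (Lemma~\ref{lemma:leftover_budget_multiple_resources}), and invoke Lemma~\ref{lemma:gammastar} for the uniform drift $\gamma^*$. Your observation that the coordinate coupling is harmless because the conditional drift of each coordinate given the full history is signed correctly is exactly the point the paper relies on, and your explicit handling of the $-B/T$ offset in the binding rows is if anything slightly more careful than the paper's informal description.
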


We defer all proofs in this section
to~\cref{sec:appendix_known_multiple_resources}, but we include a proof sketch
of most results in the main paper following the statement. The proof
of~\cref{theorem:regret_controlbudget} follows from the following sequence of
lemmas. The next two lemmas are generalizations
of~\cref{lemma:regret_one_resource_case_2,lemma:regret_one_resource_case_3} with
essentially the same proofs. Recall $J^*$ denotes the unique set of resources
that correspond to binding constraints in the LP
solution~(\cref{subsec:assumptions}).

\begin{lemma}\label{lemma:regret_multiple_resources_null}
  If the LP solution includes the null arm $x^0$ in its support, then
  \begin{equation}
    R_T(\texttt{ControlBudget}) \leq \tilde{C} \cdot \left( \sum_{j \in J^*} \E [B_{T,j}] \right),
  \end{equation}
  where $\tilde{C} = O( \sigma_{\min}^{-1})$ is a constant.
\end{lemma}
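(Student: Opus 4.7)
The plan is to mirror the one-resource argument in Lemma \ref{lemma:regret_one_resource_case_2} but now work with a full $|X^*| \times |X^*|$ linear system. Define, for each arm $x \in X^*$, the discrepancy $\xi_x = T p^*_x - \E[N_x]$, where $N_x$ is the (random) number of times \texttt{ControlBudget} pulls $x$. Because the algorithm only samples from $X^*$ (the null arm is in $X^*$ by hypothesis, so the corrective ``pull $x^0$ when $B_{t-1,j} < 1$'' step does not take us outside $X^*$), we have $\E[N_x] = 0$ for $x \notin X^*$, and the regret can be written as $R_T(\texttt{ControlBudget}) = \sum_{x \in X^*} \xi_x \mu_x^r$. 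By Cauchy--Schwarz this is at most $\sqrt{|X^*|}\,\|\xi\|_2$.

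Next I would identify two linear relations satisfied by $\xi$. First, $\sum_x N_x = T$ almost surely, hence $\sum_{x \in X^*} \xi_x = 0$. Second, telescoping $B_{t,j} = B_{t-1,j} + d_{t,j}$ and taking expectations gives $\sum_{x \in X^*} \E[N_x]\mu_x^{d,j} = \E[B_{T,j}] - B$, while for each binding resource $j \in J^*$ the LP tight constraint yields $T \sum_{x \in X^*} p^*_x \mu_x^{d,j} = -B$; subtracting produces $\sum_{x \in X^*} \xi_x \mu_x^{d,j} = -\E[B_{T,j}]$. Together, these are exactly the rows of the system $D\xi = b$ with $D$ the $|X^*| \times |X^*|$ constraint matrix (binding-resource rows plus the all-ones row) and $b$ the vector with entries $-\E[B_{T,j}]$ for $j \in J^*$ and $0$ in the last coordinate.

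Our uniqueness assumption (the LP has a unique optimal basic feasible solution with $|X^*| = |J^*|+1$) forces $D$ to be square and nonsingular, with smallest singular value at least $\sigma_{\min}$. Therefore $\|\xi\|_2 \leq \sigma_{\min}^{-1} \|b\|_2 \leq \sigma_{\min}^{-1}\sum_{j \in J^*} \E[B_{T,j}]$, and combining with the Cauchy--Schwarz bound yields the lemma with $\tilde{C} = O(\sigma_{\min}^{-1})$ (absorbing the dimension factor $\sqrt{|X^*|} \leq \sqrt{k}$, which does not depend on $T$).

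The main obstacle is purely bookkeeping: one must carefully verify that the corrective null-arm pulls do not introduce extra terms in the regret decomposition. This is handled by the fact that $x^0 \in X^*$, so every round's pull is an arm in $X^*$, and by the assumption that \texttt{ControlBudget}'s perturbed distribution $p_t$ is supported on $X^*$. Everything else is linear algebra and the singular-value bound, in direct parallel with the one-resource case.
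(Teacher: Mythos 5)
Your proposal is correct and follows essentially the same route as the paper: the paper proves this lemma by generalizing the argument of \cref{lemma:regret_one_resource_case_2}, i.e., writing $D(Tp^*) = b_\lp$ and $DN = b_\lp - b$ with $b$ carrying the leftover budgets of the binding resources, subtracting to get $\xi = D^{-1}b$, and bounding the regret by a norm of $D^{-1}$ times $\|b\|$, which is exactly your linear system plus the $\sigma_{\min}^{-1}$ singular-value bound. Your explicit observations --- that $x^0 \in X^*$ keeps every pull inside the support so the sum-to-one row of the system has right-hand side $0$, and that the budget row follows from telescoping plus tightness of the binding constraints --- are the same facts the paper relies on implicitly.
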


\begin{lemma}\label{lemma:regret_multiple_resources_no_null}
  If the LP solution does not include the null arm $x^0$ in its support, then
  \begin{equation}
    R_T(\texttt{ControlBudget}) \leq \tilde{C} \cdot \left( \sum_{j \in J^*} \E [B_{T,j}] + \E [N_{x^0}] \right),
  \end{equation}
  where $\E [N_{x^0}]$ denotes the expected number of null arm pulls and
  $\tilde{C} = O(\sigma_{\min}^{-1})$ is a constant.
\end{lemma}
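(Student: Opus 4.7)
My plan mirrors the one-resource argument (Lemma 3.4) but carries it out in the full $|X^*|$-dimensional space. For each arm $x$, let $N_x$ denote the number of times \texttt{ControlBudget} pulls $x$, and define the pull-count gap $\xi_x = T \cdot p^*_x - \E[N_x]$. Then linearity of expectation gives
\begin{equation*}
  R_T(\texttt{ControlBudget}) = T \cdot \opt_\lp - \rew(\texttt{ControlBudget}) = \sum_{x \in \calX} \xi_x \mu_x^r.
\end{equation*}
Because the algorithm samples only from $X^*$ (via the distribution $p_t = D^{-1}(b + \gamma_t s_t)$) or pulls the null arm, $\E[N_x] = 0$ for $x \notin X^* \cup \{x^0\}$, and $p^*_x = 0$ there as well, so $\xi_x = 0$. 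Combined with $\mu_{x^0}^r = 0$, this reduces the regret to $\sum_{x \in X^*} \xi_x \mu_x^r \leq \|\xi_{X^*}\|_1 \leq \sqrt{|X^*|}\,\|\xi_{X^*}\|_2$, where $\xi_{X^*} = (\xi_x)_{x \in X^*}$. So it suffices to bound the Euclidean norm of $\xi_{X^*}$ by a constant times $\sum_{j \in J^*} \E[B_{T,j}] + \E[N_{x^0}]$.

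To obtain a linear system for $\xi_{X^*}$, I use the $|X^*|$ tight LP constraints. For each binding resource $j \in J^*$ the LP is tight, $\sum_x p^*_x \mu_x^{d,j} = -B/T$, while a telescoping identity on the budgets together with the conditional-expectation assumption of Section 2.1 gives $\E[B_{T,j}] - B = \sum_x \E[N_x]\,\mu_x^{d,j}$. Subtracting $T$ times the former from the latter produces $\sum_x \xi_x \mu_x^{d,j} = -\E[B_{T,j}]$; since $\xi_{x^0} = -\E[N_{x^0}]$ and $\xi_x = 0$ for $x \notin X^* \cup \{x^0\}$, restriction to $X^*$ yields
\begin{equation*}
  \sum_{x \in X^*} \xi_x \mu_x^{d,j} = -\E[B_{T,j}] + \E[N_{x^0}]\, \mu_{x^0}^{d,j} \qquad (j \in J^*).
\end{equation*}
Sum-to-one similarly gives $\sum_{x \in X^*} \xi_x = \E[N_{x^0}]$. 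These $|J^*|+1 = |X^*|$ equations assemble exactly into the system $D\, \xi_{X^*} = b$, where $D$ is the $|X^*| \times |X^*|$ matrix appearing in \texttt{ControlBudget} and $b$ is the vector with coordinates $-\E[B_{T,j}] + \E[N_{x^0}]\,\mu_{x^0}^{d,j}$ for $j \in J^*$ and $\E[N_{x^0}]$ for the sum-to-one row.

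Finally, invoking the smallest singular value of $D$ gives $\|\xi_{X^*}\|_2 \leq \sigma_{\min}^{-1}\,\|b\|_2$. Since $|\mu_{x^0}^{d,j}| \leq 1$, the triangle inequality yields $\|b\|_2 = O(\sqrt{m}) \cdot \bigl(\sum_{j \in J^*} \E[B_{T,j}] + \E[N_{x^0}]\bigr)$, and combining with $|X^*| \leq m$ produces the claim with $\tilde{C} = O(m\,\sigma_{\min}^{-1})$, absorbing the $m$-dependence into the big-$O$ as the paper does. The main (minor) obstacle is bookkeeping: verifying that the matrix $D$ obtained from the tight LP constraints matches the one referenced in the algorithm's definition of $p_t$, and that it is invertible. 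The uniqueness assumption on the LP solution from Section 2.3 guarantees that exactly $|J^*| + 1 = |X^*|$ tight constraints pin down the solution, so $D$ is invertible and $\sigma_{\min}$ is legitimately its smallest singular value.
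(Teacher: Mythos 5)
Your proof is correct and follows essentially the same route as the paper's: the paper proves the multi-resource lemma by the same $\xi = D^{-1}b$ argument it gives for the one-resource case (Lemma 3.4), writing the tight binding-resource and sum-to-one constraints for both the LP and the policy, subtracting, and bounding $\|\xi\|$ via the inverse of $D$. If anything you are slightly more careful than the paper's writeup, since you explicitly retain the null arm's drift contribution $\E[N_{x^0}]\mu_{x^0}^{d,j}$ in the budget rows of $b$ (a term the paper absorbs silently), which only changes the constant by a factor the paper's conventions already allow.
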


\cref{lemma:null_arm_pulls_multiple_resources,lemma:leftover_budget_multiple_resources}
are generalizations
of~\cref{lemma:null_arm_pulls_one_resource,lemma:leftover_budget_one_resource}
with similar proofs after taking a union bound over resources. But we first
need~\cref{lemma:gammastar} that lets us conclude there is drift of magnitude at
least $\gamma^* > 0$ in the ``correct directions'' as stated earlier.

\begin{lemma}\citep[Lemma 14]{flajolet2015logarithmic}\label{lemma:gammastar}
  In each round $t$, $\gamma_t \geq \gamma^* = \frac{\sigma_{\min} \min \{
  \deltasupport, \deltaslack \}}{4m}$.
\end{lemma}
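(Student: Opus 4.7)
The plan is to verify that $\gamma = \gamma^*$ is a feasible value for the constrained optimization problem in \cref{eq:controlbudget_gamma_t}. Since $\gamma_t$ is defined as the maximum feasible $\gamma$, this immediately yields $\gamma_t \geq \gamma^*$. Write $p = p^*_{X^*} + \gamma\, D^{-1} s_t$, where $p^*_{X^*} = D^{-1} b$ is the LP solution restricted to its support. The key quantity to control is the perturbation vector $D^{-1} s_t$. Using $\|D^{-1}\|_{\mathrm{op}} = 1/\sigma_{\min}$ together with $\|s_t\|_2 \leq \sqrt{|X^*|-1} \leq \sqrt{m}$ (since $s_t$ has $|X^*|-1$ entries in $\{\pm 1\}$ and one zero entry for the sum-to-one row), one obtains the uniform bound $\|D^{-1} s_t\|_\infty \leq \|D^{-1} s_t\|_2 \leq \sqrt{m}/\sigma_{\min}$.

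For the non-negativity constraint $p \geq 0$, I would argue entrywise: for each $x \in X^*$,
\[
p_x \geq p^*_x - \gamma \|D^{-1} s_t\|_\infty \geq \deltasupport - \gamma \sqrt{m}/\sigma_{\min},
\]
so any $\gamma \leq \deltasupport\, \sigma_{\min}/\sqrt{m}$ works, and $\gamma^* \leq \deltasupport\, \sigma_{\min}/(4m)$ is well within this range for $m \geq 1$. For the non-binding slack constraint, fix any $j \notin J^*$ with $B_{t-1,j} < \tau_t$. By Cauchy--Schwarz and $\|\mu^{d,j}_{X^*}\|_2 \leq \sqrt{|X^*|} \leq \sqrt{m+1}$,
\[
p^T \mu^{d,j}_{X^*} \geq (p^*_{X^*})^T \mu^{d,j}_{X^*} - \gamma \|D^{-1} s_t\|_2 \|\mu^{d,j}_{X^*}\|_2 \geq \deltaslack - \gamma \sqrt{m(m+1)}/\sigma_{\min} \geq \deltaslack - 2\gamma m/\sigma_{\min}.
\]
Demanding the right-hand side exceed $\gamma/2$ and using $\sigma_{\min} < 1$ gives the sufficient condition $\gamma \leq \deltaslack\, \sigma_{\min}/(4m)$, which $\gamma^*$ also satisfies. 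Since $\gamma^* \in [0,1]$ as well, $\gamma = \gamma^*$ is feasible, and therefore $\gamma_t \geq \gamma^*$.

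The main obstacle is tracking the constants so that $\gamma^*$ simultaneously clears both constraints, especially the slack inequality where the perturbation has size $\gamma m/\sigma_{\min}$ but must leave residual slack of at least $\gamma/2$. The $4m/\sigma_{\min}$ denominator in $\gamma^*$ is the smallest scale at which both the support positivity of $p^*$ and the slack $\deltaslack$ absorb the singular-value-controlled perturbation while still leaving room to lower bound the drift by $\gamma^*/2$.
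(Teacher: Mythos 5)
Your proposal is correct and follows essentially the same route as the paper's proof: verify that $\gamma^*$ is feasible for the optimization problem~(\cref{eq:controlbudget_gamma_t}) by decomposing $p = p^* + \gamma D^{-1} s_t$, bounding the perturbation $\|D^{-1}s_t\|$ via $\sigma_{\min}$ and $\|s_t\|_2 \leq \sqrt{m}$, and checking the support-positivity and non-binding-slack constraints separately. The only differences are cosmetic constant-tracking (you bound the slack perturbation by $2\gamma m/\sigma_{\min}$ where the paper gets $\gamma m/\sigma_{\min}$), which does not affect the conclusion.
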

The proof of this lemma is identical to~\citet[Lemma
14]{flajolet2015logarithmic} but we provide a proof in the appendix for
completeness.

\begin{lemma}\label{lemma:null_arm_pulls_multiple_resources}
  If the LP solution does not include the null arm in its support, then
  \begin{equation}
    \E [N_{x^0}] \leq \tilde{C},
  \end{equation}
  where $\tilde{C} = O \left( m (\gamma^*)^{-3} \ln \left( \left( (1 - \exp
  \left( - \frac{\gamma^{*2}}{8} \right) \right)^{-1} \right) \right)$ is a constant.
\end{lemma}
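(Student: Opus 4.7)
The plan is to mirror the proof of \cref{lemma:null_arm_pulls_one_resource} with a union bound over the $m$ resources. By construction, \texttt{ControlBudget} pulls the null arm in round $t$ exactly when there exists a resource $j$ with $B_{t-1,j} < 1$, so
\[
  \E[N_{x^0}] \;=\; \sum_{t=1}^T \Pr\!\left[\exists j \in \calJ : B_{t-1,j} < 1\right] \;\leq\; \sum_{j \in \calJ} \sum_{t=1}^T \Pr[B_{t-1,j} < 1].
\]
It therefore suffices to bound $\sum_t \Pr[B_{t-1,j} < 1]$ by a constant independent of $T$ for each fixed $j$, and then multiply by $m$.

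The next step is to argue that whenever $B_{t-1,j} < \tau_t$, the arm sampled from $p_t$ has expected drift on $j$ bounded below by $\gamma^*/2$. Two cases: if $j \in J^*$ is binding, then the row of $D$ for $j$ has $s_t = +1$ by construction, so $p_t^T \mu^{d,j} = -B/T + \gamma_t$, and by \cref{lemma:gammastar} we have $\gamma_t \geq \gamma^*$; if $j \notin J^*$, then the explicit constraint in \cref{eq:controlbudget_gamma_t} enforces $p_t^T \mu^{d,j} \geq \gamma_t/2 \geq \gamma^*/2$. Rounds in which the null arm is pulled only help, since the null arm has nonnegative drift on every resource.

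With this drift lower bound in hand, I would repeat the one-resource argument verbatim for each $j$. Fix $t$ with $B_{t-1,j} < 1$ and let $s < t$ be the most recent round (or $s = 0$) such that $B_{s,j} \geq \tau_{s+1}$. Between rounds $s+1$ and $t$, the resource-$j$ budget was always below the threshold, so the process $\{B_{u,j} - B_{s,j} - (u-s)\gamma^*/2\}_u$ is a supermartingale with increments in $[-1,1]$, and Azuma-Hoeffding bounds
\[
  \Pr[B_{t-1,j} < 1 \mid s] \;\leq\; \exp\!\left(-\frac{\bigl(\tau_{s+1} - 1 + (t-s-1)\gamma^*/2\bigr)^2}{2(t-s-1)}\right).
\]
Plugging in $\tau_{t} = c \log(T-t)$ with $c \geq 6/\gamma^{*2}$ makes these tail terms summable: one sums over the gap $t-s$ to obtain a geometric series, and then over the starting round, yielding a constant of the order $O\bigl((\gamma^*)^{-3} \ln((1-\exp(-\gamma^{*2}/8))^{-1})\bigr)$ exactly as in the one-resource case. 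Multiplying by the $m$ resources in the outer union bound gives the claimed $\tilde{C} = O\bigl(m (\gamma^*)^{-3} \ln((1-\exp(-\gamma^{*2}/8))^{-1})\bigr)$.

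The only real obstacle is verifying uniformly in $t$ that the drift lower bound holds on resource $j$ regardless of which \emph{other} resources are currently below or above threshold; this is essentially packaged into \cref{lemma:gammastar} (for the binding case, through the invertibility of $D$ and the sign-vector construction $s_t$) and into the non-binding constraint of \cref{eq:controlbudget_gamma_t}. A minor bookkeeping issue is that the window $(s,t]$ may contain null-arm rounds, but these preserve the drift lower bound on $j$, so the supermartingale argument still applies without modification.
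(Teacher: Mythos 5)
Your outer structure matches the paper exactly: a union bound over the $m$ resources, with \cref{lemma:gammastar} supplying drift of magnitude on the order of $\gamma^*$ in the ``correct direction'' for each resource, so that the one-resource argument can be rerun with $\deltadrift$ replaced by $\gamma^*$. Where you diverge is in the per-resource tail bound. The paper's proof (inherited from \cref{lemma:null_arm_pulls_one_resource}) builds a recursive sequence of stopping times $t_i < t_i' < t_i''$ (first time above the threshold, first time back below it, first time below $1$), bounds the expected number of below-$1$ rounds \emph{within} an episode by the transient-random-walk bound of \cref{lemma:one_arm_pos_drift}, and uses Azuma--Hoeffding only to show $\Pr[t_{i+1}\text{ exists}] \leq (T-t_i'')^{-2}$, which sums to $\pi^2/6$. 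You instead sum $\Pr[B_{t-1,j}<1]$ directly over $t$, conditioning on the last round $s$ at which the budget was above the threshold and applying Azuma--Hoeffding to the positive-drift window $(s,t]$. Both routes hinge on the same facts (positive expected drift while below the threshold, and $\tau_t = c\log(T-t)$ with $c\geq 6/\gamma^{*2}$ making the tails summable) and yield the same order of constant, so your last-exit decomposition is a legitimate, arguably more compact, alternative.

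Two loose ends to tighten. First, your displayed bound uses $\tau_{s+1}-1$ as the starting height, which is wrong on the $s=0$ branch when $B < \tau_1$: there the budget has never been above the threshold, and you must fall back on the transient-random-walk argument (this is precisely the term the paper isolates and handles via \cref{lemma:one_arm_pos_drift}); a similar carve-out is needed for the last $O(\exp(\nicefrac{3}{c}))=O(1)$ rounds, where $\tau_t$ drops below $1$ and the exponent in your bound changes sign. Second, for a binding resource with $s_t(i)=+1$ the realized expected drift is $-\nicefrac{B}{T}+\gamma_t$ rather than $\gamma_t$, so your claimed lower bound of $\gamma^*/2$ needs $\nicefrac{B}{T}$ to be small relative to $\gamma^*$; the paper glosses over the same point, so I do not count it against you, but it should be acknowledged. (Also, the centered process you write down has nonnegative conditional increments and is therefore a submartingale, not a supermartingale.)
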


\begin{lemma}\label{lemma:leftover_budget_multiple_resources}
  If the LP solution is supported on more than one arm, then for all $j \in J^*$
  \begin{equation}
    \E [B_{T,j}] \leq \tilde{C},
  \end{equation}
  where $\tilde{C} = \tilde{O} \left( \left( 1 - \exp(\gamma^{*2})
  \right)^{-2} + (\gamma^*)^{-2} \right)$ is a constant.
\end{lemma}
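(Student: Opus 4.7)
The plan is to generalize the proof of \cref{lemma:leftover_budget_one_resource} one binding resource at a time: fix an arbitrary $j \in J^*$ and bound $\E[B_{T,j}]$ by mirroring the one-resource argument with $\deltadrift$ replaced by $\gamma^*$. The essential algorithmic input comes from \cref{lemma:gammastar} together with the construction of $s_t$ and $\gamma_t$ in \cref{alg:controlbudget}: whenever all resources satisfy $B_{t-1,j'} \geq 1$, the algorithm samples from $p_t = D^{-1}(b + \gamma_t s_t)$, so the conditional expected drift on coordinate $j$ equals $(D p_t)_i = -B/T + \gamma_t s_{t,i}$, where $i$ is the row of $D$ indexing $j$. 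In particular, when $B_{t-1,j} \geq \tau_t$ the sign entry $s_{t,i} = -1$, which together with $\gamma_t \geq \gamma^*$ gives expected drift at most $-\gamma^*$. This is exactly the one-sided negative-drift guarantee that drove the one-resource proof.

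With this in hand, I would define the random time $L$ to be the largest round $t \leq T$ at which either $B_{t-1,j} < \tau_t$ holds or some other resource $j' \neq j$ forces a null-arm pull (i.e., $B_{t-1,j'} < 1$), setting $L = 0$ if no such round exists, and write $q = T - L$. By construction, throughout the suffix $(L, T]$ the algorithm samples from $p_t$ with $B_{t-1,j} \geq \tau_t$, so the coordinate-$j$ process has increments in $[-1,1]$ and conditional bias at most $-\gamma^*$. At round $L$ itself, $B_{L,j} \leq \tau_L + 1 = c \log(q) + 1$ in the ``below threshold'' branch; in the ``forced null-arm'' branch I would further bound $B_{L,j}$ using the fact that round $L-1$ does not satisfy either triggering condition, so the drift at round $L-1$ is already controlled by the same one-sided guarantee.

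I would then decompose $\E[B_{T,j}] = \sum_{q=0}^T \E[B_{T,j} \indicator\{T - L = q\}]$ and handle two regimes. For $q$ below a cutoff of order $c \log(T)/\gamma^*$, the deterministic bound $B_{L,j} = O(\log q)$ together with the nonpositive suffix drift yields $\E[B_{T,j} \indicator\{T - L = q\}] = O(\log q) \cdot \Pr[T - L = q]$, summing to a constant. For larger $q$, Azuma--Hoeffding applied to the centered martingale $M_k = \sum_{s = L+1}^{L+k}(d_{s,j} - \E[d_{s,j} \mid \mathcal{F}_{s-1}])$ shows that the event of $B_{t-1,j}$ staying above $\tau_t$ for $q$ consecutive rounds has probability at most $\exp(-\Omega(\gamma^{*2} q))$. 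Pairing this with the trivial $O(q)$ bound on $B_{T,j}$ and summing the resulting geometric-type series produces the stated $\tilde{O}\bigl((1 - \exp(\gamma^{*2}))^{-2} + (\gamma^*)^{-2}\bigr)$ bound.

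The main obstacle is twofold. First, exactly as in the one-resource proof, $L$ depends on the future trajectory, so the Azuma--Hoeffding argument must be unrolled as a union bound over candidate values of $q$ rather than invoked with $L$ as a stopping time. Second, and new to the multi-resource setting, is the possibility that a forced null-arm pull triggered by a different resource $j' \neq j$ interrupts what would otherwise be a negative-drift suffix for $j$. I would absorb these contributions either by appealing to \cref{lemma:null_arm_pulls_multiple_resources}, which bounds the total number of forced null-arm pulls by $O(1)$ when the null arm is not in the LP support, or by folding them into the definition of $L$ itself, so that in either case the suffix on which the martingale analysis is performed is clean.
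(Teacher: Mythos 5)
Your decomposition is essentially the paper's: the paper conditions on the event $E_q$ that the terminal block of rounds in which the policy targets negative drift for resource $j$ has length exactly $q$ (your $\{T-L=q\}$), uses the threshold to bound the budget at the start of the block by $c\log q + O(1)$, and controls the suffix with Azuma--Hoeffding. Your handling of the forced-null-pull interruptions is, if anything, more explicit than the paper's, which silently folds them into the definition of $E_q$ via the sign vector $s_t$.

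The genuine gap is in your small-$q$ regime. The claim that $\E\bigl[B_{T,j}\,\indicator\{T-L=q\}\bigr] = O(\log q)\cdot\Pr[T-L=q]$ ``by the nonpositive suffix drift'' is not justified: conditioning on $\{T-L=q\}$ requires the budget to stay above the decreasing thresholds throughout the block, which biases the realized suffix drifts \emph{upward}, so the conditional expectation of the suffix sum need not be nonpositive. Without a concentration argument, the only valid deterministic bound on that event is $B_{T,j}\le c\log q + q$, and summing $(c\log q + q)\Pr[T-L=q]$ over $q$ up to your cutoff of order $c\log T/\gamma^*$ yields $O(\log T)$, not $O(1)$ --- which would break the constant-regret claim of \cref{theorem:regret_controlbudget} (it is the constant bound on $\E[B_{T,j}]$ that feeds \cref{lemma:regret_multiple_resources_null,lemma:regret_multiple_resources_no_null}). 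The paper avoids a regime split entirely: for every $q$ it introduces the event $S_q$ that the realized suffix drift exceeds half its (at most $-\gamma^* q$) conditional mean, bounds $\Pr[S_q\mid E_q]\le \exp(-\gamma^{*2}q/16)$ by Azuma--Hoeffding, uses the trivial bound $c\log q + q$ on $S_q$, and on $S_q^c$ gets $B_{T,j}\le c\log q - \tfrac{\gamma^*}{2}q$, whose positive part is nonzero only for $q = O\bigl((c/\gamma^*)\log(c/\gamma^*)\bigr)$ and hence sums to a constant. Your argument goes through if you either import this $S_q$ split into your small-$q$ regime or lower the cutoff to $\Theta(c/\gamma^*)$ (a constant in $T$), at which point the trivial additive $+q$ is harmless; as written, the cutoff at $\Theta(\log T)$ leaves a logarithmic leak.
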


A subtle but important point is that the regret analysis does not require
\texttt{ControlBudget} to know the true expected drifts in order to find the
probability vector $p_t$. It simply requires the algorithm to know $X^*$, $J^*$,
and find any probability vector $p_t$ that ensures drifts bounded away from $0$ 
in the ``correct directions'' as stated earlier. We use this property in our
learning algorithm,
\texttt{ExploreThenControlBudget}~(\cref{alg:explorethencontrolbudget}), in the
next section.

\section{Learning Algorithm with Logarithmic Regret}
\label{sec:unknown_distributions}

In this section we design a learning algorithm,
\texttt{ExploreThenControlBudget}~(\cref{alg:explorethencontrolbudget}), with
logarithmic regret in terms of $T$ for the setting when the learning does not
know the true distributions. Our algorithm, which can be viewed as combining
aspects of~\citet[Algorithm 1]{li2021symmetry} and~\citet[Algorithm
UCB-Simplex]{flajolet2015logarithmic}, proceeds in three phases. It uses phase
one of~\citet[Algorithm 1]{li2021symmetry} to identify the set of optimal arms
$X^*$ and the set of binding constraints $J^*$ by playing arms in a round-robin
fashion, and using confidence intervals and properties of LPs. This is
reminiscent of successive elimination~\citep{evendar2002pac}, except that the
algorithm tries to identify the optimal arms instead of eliminating suboptimal
ones. In the second phase the algorithm continues playing the arms in $X^*$ in a
round-robin fashion to shrink the confidence radius further. In the third phase
the algorithm plays a variant of the MDP policy
\texttt{ControlBudget}~(\cref{alg:controlbudget}) with a slighly different
optimization problem for $\gamma_t$ because it only has empirical esitmates of
the drifts.


\subsection{Additional Notation and Preliminaries}

For all arms $x \in \calX$ and rounds $t \geq k$, define the upper confidence
bound (UCB) of the expected outcome vector $\mu_x^o$ as $\ucb_t(x) =
\bar{o}_t(x) + \rad_t(x)$, where $\rad_t(x) = \sqrt{8 n_t(x)^{-1} \log T}$
denotes the confidence radius, $n_t(x)$ denotes the number of times $x$ has been
played before $t$, and $\bar{o}_t(x) = n_t(x)^{-1} \sum_t o_t \indicator[x_t =
x]$ denotes the empirical mean outcome vector of $x$. The lower confidence bound
(LCB) is defined similarly as $\lcb_t(x) = \bar{o}_t(x) - \rad_t(x)$.

For all arms $x \in \calX$, let $\opt_{-x}$ denote the value of the LP
relaxation~(\cref{eq:lp_relaxation}) with the additional constraint $p_x = 0$,
and for all resources $j \in \calJ$, let $\opt_{-j}$ denote the value when the
objective has an extra $-\sum_{x} p_x \mu_x^{d,j} + \nicefrac{B}{T}$
term~\citep{li2021symmetry}. Intuitively, these represent how important it is to
play arm $x$ or make the resource constraint for $j$ a binding constraint.
Define the UCB of $\opt_{-x}$ to be the value of the LP when the expected
outcome is replaced by its UCB, and denote this by $\ucb_t(\opt_{-x})$. The LCB
for $\opt_{-x}$, and UCB and LCB for $\opt_{-j}$ and $\opt_\lp$ are defined
similarly.

\begin{definition}[Gap~\citep{li2021symmetry}]\label{def:gap}
  The gap of the problem instance is defined as
  \begin{equation}
    \Delta = \min \left\{ \min_{x \in X^*} \left\{ \opt_\lp - \opt_{-x} \right\}, \min_{j \notin J^*} \left\{ \opt_\lp - \opt_{-j} \right\} \right\}.
  \end{equation}
\end{definition}


\subsection{Learning Algorithm and Regret Analysis}

\begin{algorithm}[ht]
  \caption{\texttt{ExploreThenControlBudget}}
  \label{alg:explorethencontrolbudget}
  \DontPrintSemicolon
  \SetKwInOut{Input}{Input}
  \SetKwInOut{Output}{Output}
  \KwIn{time horizon $T$, initial budget $B$, set of arms $\calX$, set of resources $\calJ$, constant $c > 0$.}

  Set $B_{0,j} = B$ for all $j \in \calJ$.

  Initialize $t = 1, X^* = \emptyset, J' = \emptyset$.

  \While{$t < T - k$ and $|X^*| + |J'| < m+1$}{
    Play each arm in $\calX \setminus \{ x^0 \}$ in a round-robin fashion. Play $x^0$ if $\exists j$ such that $B_{t-1,j} < 1$.

    For each $x \in \calX$, if $\ucb_t(\opt_{-x}) < \lcb_t(\opt_\lp)$, then add $x$ to $X^*$.

    For each $j \in \calJ$, if $\ucb_t(\opt_{-j}) < \lcb_t(\opt_\lp)$, then add $j$ to $J'$.
  }

  Set $J^* = \calJ \setminus J'$.

  \While{$t < T - |X^*|$ and $n_t(x) < \frac{32 \log T}{\gamma^{*2}}$ for all $x \in X^*$, where $\gamma^*$ is defined in~\cref{lemma:gammastar}}{
    Play each arm in $X^*$ in a round-robin fashion.
  }

  \While{$t < T$}{
    \uIf{$\exists j \in \calJ$ such that $B_{t-1,j} < 1$}{
      Pull the null arm $x^0$.
    }
    \Else{
    Define $s_t$ as in~\cref{alg:controlbudget}.

    Choose $(\gamma_t, p_t) = \max_{\gamma \in [0,1]} \gamma$ such that there exists a probability vector $p$ satisfying \label{algline:opt_prob}
    \begin{align}
      \lcb_t(p^T \mu^{d,j}) &\geq \frac{\gamma}{8} \  \forall j \in \calJ \setminus J^* \text{ if } B_{t-1,j} < \tau_t, \label{eq:etcb_gamma_t_1} \\
      \lcb_t(p^T \mu^{d,j}) &\geq \frac{\gamma}{8} \  \forall j \in J^* \text{ if } B_{t-1,j} < \tau_t \label{eq:etcb_gamma_t_2} \\
      \ucb_t(p^T \mu^{d,j}) &\leq - \frac{\gamma}{8} \  \forall j \in J^* \text{ if } B_{t-1,j} \geq \tau_t. \label{eq:etcb_gamma_t_3}
    \end{align}

    Sample an arm from the probability distribution $p_t$.
    }
  }

\end{algorithm}

\begin{theorem}\label{theorem:regret_explorethencontrolbudget}
  If $c \geq \frac{6}{\gamma^*}$, the regret of
  \texttt{ExploreThenControlBudget}~(\cref{alg:explorethencontrolbudget})
  satisfies
  \begin{equation}
    R_T(\texttt{ExploreThenControlBudget}) \leq \tilde{C} \cdot \log T,
  \end{equation}
  where $\gamma^*$ (defined in~\cref{lemma:gammastar}), $\tilde{C}'$ and
  $\tilde{C}$ are constants with $\tilde{C}'$ denoting the constant
  in~\cref{theorem:regret_controlbudget} and \begin{equation} \tilde{C} = O
  \left( \frac{k m^2}{\min \{ \deltadrift^2, \sigma_{\min}^2 \} \Delta^2} + k
  (\gamma^*)^{-2} + \tilde{C}' \right).
  \end{equation}
\end{theorem}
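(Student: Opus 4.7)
The plan is to decompose the regret of~\cref{alg:explorethencontrolbudget} according to its three phases and bound each contribution separately, on a \emph{clean} event that all confidence intervals $[\lcb_t(\cdot), \ucb_t(\cdot)]$ contain the true expected outcome vectors of every arm in every round. A standard Hoeffding/Azuma union bound together with the choice $\rad_t(x) = \sqrt{8 n_t(x)^{-1} \log T}$ shows that the clean event fails with probability at most $\poly(k,m)/T$, so its complement contributes $O(1)$ to the expected regret.

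\textbf{Phases 1 and 2.} For Phase 1, I would follow the successive-identification argument of~\citet{li2021symmetry}: once every non-null arm has been pulled $\Omega(m^2 \log T /(\min\{\deltadrift^2, \sigma_{\min}^2\} \Delta^2))$ times, LP sensitivity (in terms of $\sigma_{\min}$) forces the tests $\ucb_t(\opt_{-x}) < \lcb_t(\opt_\lp)$ and $\ucb_t(\opt_{-j}) < \lcb_t(\opt_\lp)$ to fire exactly on $X^*$ and on $\calJ \setminus J^*$, respectively. Since Phase 1 plays in round-robin, the phase terminates by round $O(k m^2 \log T / (\min\{\deltadrift^2, \sigma_{\min}^2\} \Delta^2))$, contributing the first summand of $\tilde{C}$. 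Phase 2 has deterministic length at most $k \cdot 32 \log T / \gamma^{*2}$, contributing the second summand. Each round loses at most $O(1)$ reward, so together these two phases add an $O(\log T)$ regret term with the claimed constants.

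\textbf{Phase 3.} The main work is to show that Phase 3 adds at most $O(\tilde{C}' \log T)$ regret by porting the proof of~\cref{theorem:regret_controlbudget} to empirical estimates. By the termination condition of Phase 2, each $x \in X^*$ satisfies $\rad_t(x) \le \gamma^*/8$ at the start of Phase 3. On the clean event, this implies that any probability vector $p$ feasible for the true-mean optimization~\eqref{eq:controlbudget_gamma_t} with value $\gamma^*$ is feasible for the LCB/UCB version~\eqref{eq:etcb_gamma_t_1}--\eqref{eq:etcb_gamma_t_3} with value $\Omega(\gamma^*)$, so~\cref{lemma:gammastar} still yields $\gamma_t = \Omega(\gamma^*)$ throughout Phase 3. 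Hence the true per-round drifts induced by $p_t$ are bounded away from zero in the ``correct directions'' exactly as in Section~\ref{sec:known_distributions}, and the arguments of~\cref{lemma:null_arm_pulls_multiple_resources,lemma:leftover_budget_multiple_resources} carry over to bound the expected number of null-arm pulls and the expected leftover budget of each binding resource by the same constants (up to a $\log T$ factor absorbed through the thresholds $\tau_t = c \log(T-t)$ and through confidence-width bookkeeping). Substituting into~\cref{lemma:regret_multiple_resources_null,lemma:regret_multiple_resources_no_null} gives Phase 3 regret $O(\tilde{C}' \log T)$, completing the bound.

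\textbf{Main obstacle.} The delicate part is propagating the ``small confidence radius'' guarantee uniformly through all of Phase 3, since plays there are drawn from distributions $p_t$ rather than round-robin and could a priori under-sample some arm of $X^*$. I would handle this with a Freedman/Azuma-Hoeffding martingale bound showing that $n_t(x)$ grows at least in proportion to $p^*_x \cdot t \geq \deltasupport \cdot t$ for every $x \in X^*$ (using that $p_t$ stays within an $O(\gamma^*)$ perturbation of $p^*$), so that $\rad_t(x) = O(\gamma^*)$ holds uniformly on a high-probability sub-event. Once this bookkeeping is in place, the feasibility-transfer step above goes through and the remainder of the argument reduces cleanly to the analysis of~\cref{theorem:regret_controlbudget}.
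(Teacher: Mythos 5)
Your decomposition — condition on a clean event, then bound the three phases separately, with Phase 1 controlled via LP sensitivity in $\sigma_{\min}$ and Phase 2 of deterministic length $O(k\gamma^{*-2}\log T)$ — is exactly the paper's proof. One small difference: the paper shows Phase 3 contributes only the \emph{constant} $\tilde{C}'$, not $O(\tilde{C}'\log T)$. Once $(\gamma^*, D^{-1}(b+\gamma^* s_t))$ is verified feasible for the empirical optimization in line~\ref{algline:opt_prob} (which follows on the clean event because $\rad_t(x)$ is small after Phase 2), the analysis of \cref{theorem:regret_controlbudget} applies verbatim, since it only requires knowledge of $X^*$, $J^*$, and per-round drifts of magnitude $\Omega(\gamma^*)$ in the correct directions. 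Your weaker claim still yields the stated $\tilde{C}\cdot\log T$ bound, so this costs nothing.

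The more substantive issue is that the ``main obstacle'' you devote your final paragraph to is not an obstacle. The confidence radius $\rad_t(x)=\sqrt{8\, n_t(x)^{-1}\log T}$ is non-increasing in $t$ because $n_t(x)$ never decreases; once the termination condition of Phase 2 forces $n_t(x)\ge 32\gamma^{*-2}\log T$, and hence $\rad_t(x)\le\gamma^*/2$, for every $x\in X^*$, that bound persists for the remainder of the horizon no matter how Phase 3 samples. Under-sampling an arm can only prevent its radius from shrinking further, never cause it to grow. So no Freedman/Azuma argument that $n_t(x)$ grows like $\deltasupport\cdot t$ is needed, and the paper uses none. (Attempting one would also require first establishing that $p_t$ stays near $p^*$ throughout Phase 3, which itself depends on the radii being small --- a circularity you would otherwise have to break.) Dropping that paragraph leaves a proof that coincides with the paper's.
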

We defer all proofs in this section to~\cref{sec:appendix_unknown}, but we
include a proof sketch of most results in the main paper following the 
statement.  We refer to the three while loops of
\texttt{ExploreThenControlBudget}~(\cref{alg:explorethencontrolbudget}) as the
three phases. The lemmas and corollaries following the theorem below show that
the first phase consists of at most logarithmic number of rounds. It is easy to
see that the second phase consists of at most logarithmic number of rounds. The
third phase plays a variant of the MDP policy
\texttt{ControlBudget}~(\cref{alg:controlbudget}). There exists a feasible
solution to the optimization problem in~\cref{algline:opt_prob} that ensures
drifts bounded away from $0$ in the ``correct
directions''~(\cref{subsec:appendix_theorem_regret_explorethencontrolbudget}).
Combining the analysis of the first two phases with
~\cref{theorem:regret_controlbudget} lets us conclude that
\texttt{ExploreThenControlBudget} has logarithmic regret.

\begin{definition}[Clean Event]\label{def:clean_event}
  The clean event is the event such that for all $x \in \calX$ and all $t \geq
  k$, (i) $\mu_x^0 \in [\lcb_t(x), \ucb_t(x)]$; and (ii) after the first $n$
  pulls of the null arm the sum of the drifts for each resource is at least $w$,
  where $w = \frac{4096 k m^2 \log T}{\deltadrift^2 \Delta^2}$ and $n = \frac{2
  w}{\mu_{x^0}^d}$.
\end{definition}

\begin{lemma}\label{lemma:learning_clean_event}
  The clean event occurs with probability at least $1 - 5 m T^{-2}$.
\end{lemma}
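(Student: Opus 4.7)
The plan is to bound the failure probabilities of the two parts of the clean event separately using Azuma--Hoeffding, and then combine them by a union bound.

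For part (i), I would fix an arm $x \in \calX$, a coordinate of the outcome vector (reward or drift for some resource $j$), and a hypothetical sample count $s \in \{1, \dots, T\}$. Letting $\bar o^{(s)}(x)$ denote the empirical mean of that coordinate after $s$ pulls of $x$, Azuma--Hoeffding applied to the martingale difference sequence (the conditional expectations are fixed by the footnote after the model, and each coordinate lies in an interval of length at most $2$) gives
\begin{equation*}
  \Pr\!\left[ |\bar o^{(s)}(x) - \mu_x^o|_{\text{coord}} > \sqrt{8 \log T / s}\,\right] \leq 2 \exp(-4 \log T) = 2 T^{-4}.
\end{equation*}
A union bound over $s \in \{1,\dots,T\}$, the $m+1$ coordinates, and the $k$ arms then bounds the probability that $\mu_x^o \notin [\lcb_t(x), \ucb_t(x)]$ ever occurs for any $t \geq k$ and any $x$ by at most $2(m+1) k T^{-3}$.

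For part (ii), I would fix a resource $j$ and consider the sequence of drifts of resource $j$ observed on successive pulls of the null arm. Since $x^0$ has nonnegative drifts in $[0,1]$ and positive expected drift $\mu_{x^0}^{d,j} \geq \deltadrift$, the sum $S_n$ of the first $n$ such drifts has conditional mean $n \mu_{x^0}^{d,j} \geq n \deltadrift = 2w \cdot (\deltadrift / \mu_{x^0}^d) \geq 2w$. Azuma--Hoeffding then gives
\begin{equation*}
  \Pr[S_n < w] \leq \Pr[S_n - \E S_n < -w] \leq \exp\!\left( -\frac{2 w^2}{n} \right) = \exp(-w \mu_{x^0}^d) \leq T^{-2},
\end{equation*}
where the last inequality uses $w = \Theta(\log T / \deltadrift^2)$ and $\mu_{x^0}^d \geq \deltadrift$, so that $w \mu_{x^0}^d \gg 2 \log T$ for the constants chosen. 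Union-bounding over the $m$ resources costs a factor of $m$, yielding failure probability at most $m T^{-2}$ for part (ii).

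Combining the two bounds, the clean event fails with probability at most $2(m+1) k T^{-3} + m T^{-2}$, and for $T$ sufficiently large this is at most $5m T^{-2}$, which is the claimed bound. The main technical subtlety (rather than an obstacle) is handling the random sample count $n_t(x)$ in part (i) by union-bounding over all possible values of the counter in advance, and verifying in part (ii) that the specific choice of $n = 2w/\mu_{x^0}^d$ makes the deviation threshold $w$ a full factor of two below the expected sum, so that the exponential bound gives the desired $T^{-2}$ tail.
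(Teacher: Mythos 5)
Your proof is correct and follows essentially the same route as the paper's: Azuma--Hoeffding plus union bounds for each of the two parts of the clean event (you are in fact more careful than the paper about union-bounding over the possible sample counts in part (i)). One small slip in part (ii): the chain $n\deltadrift = 2w\,(\deltadrift/\mu_{x^0}^d) \geq 2w$ requires $\deltadrift \geq \mu_{x^0}^d$, which is the wrong direction since $\deltadrift$ is a \emph{lower} bound on drift magnitudes; the intended conclusion $\E S_n \geq 2w$ follows directly from $n = 2w/\mu_{x^0}^d$ and $\mu_{x^0}^{d,j} \geq \mu_{x^0}^d$ without that detour, so the argument stands.
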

The lemma follows from the Azuma-Hoeffding inequality. Since the complement of the clean
event contributes $O(m T^{-1})$ to the regret, it suffices to bound the regret
conditioned on the clean event.

\begin{lemma}\label{lemma:learning_lp_confidence_interval}
  If the clean event occurs, then $\ucb_t(\opt_\lp) - \lcb_t(\opt_\lp) \leq
  \frac{8 m}{\sigma_{\min}} \rad_t$. A similar statement is true for $\opt_{-x}$
  and $\opt_{-j}$ for all $x \in \calX$ and $j \in \calJ$.
\end{lemma}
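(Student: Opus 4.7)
The plan is to interpret $\ucb_t(\opt_\lp)$ and $\lcb_t(\opt_\lp)$ as the optima of two linear programs with the same combinatorial structure as $\opt_\lp$ but with objective and constraint coefficients perturbed by at most $2\rad_t$ entrywise, and then invoke a standard LP sensitivity argument whose Lipschitz constant is $O(m/\sigma_{\min})$.

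First, I would formalize the perturbation. Under the clean event, for every arm $x$ and every coordinate (reward or drift of any resource), $\ucb_t(\cdot) - \lcb_t(\cdot) = 2\rad_t(x) \leq 2\rad_t$, so the UCB LP and the LCB LP share the form of~\cref{eq:lp_relaxation} with reward vector $c$ and drift matrix $A$ perturbed componentwise by at most $2\rad_t$; the right-hand side $-(B/T)\mathbf{1}$ and the simplex constraint are unchanged.

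Next, I would apply an envelope / LP sensitivity bound. Let $p^U, p^L$ be optimal for the UCB and LCB LPs and let $y^L$ be the optimal dual vector for the drift constraints in the LCB LP. A standard LP sensitivity argument (linearizing the Lagrangian around the LCB optimum) yields
\begin{equation*}
  \ucb_t(\opt_\lp) - \lcb_t(\opt_\lp) \leq \|\Delta c\|_\infty \|p^U\|_1 + \|y^L\|_1 \big(\|\Delta A\|_\infty \|p^U\|_1 + \|\Delta b\|_\infty\big),
\end{equation*}
where $\Delta c, \Delta A, \Delta b$ denote the coordinate-wise differences between UCB and LCB coefficients. Using $\|p^U\|_1 = 1$, $\Delta b = 0$, and $\|\Delta c\|_\infty, \|\Delta A\|_\infty \leq 2\rad_t$, this reduces to $2\rad_t (1 + \|y^L\|_1)$.

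The key step (and the main obstacle) is to bound $\|y^L\|_1 = O(m/\sigma_{\min})$. At the LCB optimum, complementary slackness restricted to the support $X^*$ gives a square linear system $\tilde{D}^T y = c_{X^*}$, where $\tilde{D}$ is the submatrix of the LCB constraint matrix whose rows correspond to binding constraints (resources in $J^*$ plus the simplex constraint) and columns to arms in $X^*$. By the uniqueness assumption~(\cref{subsec:assumptions}) the true basis matrix has smallest singular value at least $\sigma_{\min}$, and a matrix-perturbation argument extends this to $\tilde{D}$ provided $\rad_t$ is smaller than $\sigma_{\min}/2$ (for smaller $t$ the claimed bound is trivial since the RHS exceeds $1$). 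Hence $\|y^L\|_\infty \leq \|c_{X^*}\|_\infty / \sigma_{\min} \leq 1/\sigma_{\min}$ and $\|y^L\|_1 \leq m/\sigma_{\min}$. Combining terms yields the claimed bound $\tfrac{8m}{\sigma_{\min}}\rad_t$ after absorbing the numerical constants. Finally, $\opt_{-x}$ and $\opt_{-j}$ are LPs of the same form (respectively adding $p_x = 0$ and perturbing the objective by another drift coefficient), so the same singular-value bound on the basis matrix and the same envelope argument produce identical conclusions.
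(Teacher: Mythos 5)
Your dual-based sensitivity argument is sound in outline but takes a genuinely different route from the paper. The paper's proof is purely primal: writing the UCB program as a perturbation of~\cref{eq:lp_relaxation}, it uses $\|q\|_1 = 1$ to move the entrywise perturbation of the drift matrix into the right-hand side, obtaining $\ucb_t(\opt_\lp) \leq 2\rad_t + \max\{r^Tq : Dq \geq b - 2\rad_t\mathbf{1}\}$ with the \emph{true} matrix $D$; it then compares the two basic solutions $(D')^{-1}b'$ and $(D')^{-1}(b'-2\rad_t\mathbf{1})$ of the binding system and bounds the difference by $\|r\|_2\|(D')^{-1}\|_2\|2\rad_t\mathbf{1}\|_2 = O(m\rad_t/\sigma_{\min})$. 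You instead keep the perturbation in the data and pay for it through the dual vector, bounding $\|y^L\|_1$ by complementary slackness. The two routes extract the same $O(m/\sigma_{\min})$ Lipschitz constant from the same source (the smallest singular value of the basis matrix), and your envelope inequality is a clean, reusable statement; but your route is more delicate at exactly the point you flag as ``the main obstacle'': the system $\tilde{D}^Ty = c_{X^*}$ involves the \emph{empirical} LCB basis matrix, so you need a Weyl-type perturbation bound and a case split on whether $\rad_t < \sigma_{\min}/2$, neither of which the paper needs because its basis matrix is the true one. Two small patches are required to make your write-up airtight: (i) in the large-$\rad_t$ regime the left-hand side is not bounded by $1$ (the UCB/LCB rewards are not clipped to $[0,1]$), so the correct trivial bound is $\ucb_t(\opt_\lp)-\lcb_t(\opt_\lp) \leq 1 + 2\rad_t$, which is still dominated by $\frac{8m}{\sigma_{\min}}\rad_t \geq \max\{8\rad_t, 4m\}$ when $\rad_t \geq \sigma_{\min}/2$; and (ii) you should note which basis is binding at the LCB optimum --- your argument implicitly assumes it is the (perturbed) true basis $(X^*, J^*)$, which again requires $\rad_t$ small and is where the paper's right-hand-side trick quietly sidesteps the issue.
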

The proof follows from a perturbation analysis of the LP and uses the confidence
radius to bound the perturbations in the rewards and drifts.

\begin{corollary}
  If the clean event occurs and $n_t(x) > \frac{2048 m^2 \log T}{\sigma_{\min}^2
  \Delta^2}$ for all $x \in \calX$, then
  \begin{equation}
    \ucb_t(\opt_{-x}) < \lcb_t(\opt_\lp) \text{ and } \ucb_t(\opt_{-j}) < \lcb_t(\opt_\lp)
  \end{equation}
  for all $x \in X^*$ and $j \in \calJ \setminus J^*$.
\end{corollary}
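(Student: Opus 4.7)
The plan is to combine the gap assumption from \cref{def:gap} with the confidence-width bound from \cref{lemma:learning_lp_confidence_interval} and show that once $n_t(x)$ is large enough for every arm, the confidence intervals for $\opt_{-x}$ and $\opt_\lp$ (respectively $\opt_{-j}$ and $\opt_\lp$) are narrower than $\Delta/2$, hence must be disjoint in the right direction.

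First, I would unpack what the clean event gives us at the LP level. Since $\mu^o_x \in [\lcb_t(x), \ucb_t(x)]$ for every arm $x$, the values $\opt_\lp$, $\opt_{-x}$, and $\opt_{-j}$ all lie in their respective $[\lcb_t, \ucb_t]$ intervals (this is really the content of why \cref{lemma:learning_lp_confidence_interval} talks about a common perturbation: the LP is evaluated by changing each coordinate of the outcome vector to its UCB or LCB, and $\mu^o_x$ is always sandwiched between them). In particular, under the clean event,
\begin{equation}
\lcb_t(\opt_\lp) \;\geq\; \opt_\lp - \bigl(\ucb_t(\opt_\lp) - \lcb_t(\opt_\lp)\bigr), \qquad \ucb_t(\opt_{-x}) \;\leq\; \opt_{-x} + \bigl(\ucb_t(\opt_{-x}) - \lcb_t(\opt_{-x})\bigr).
\end{equation}

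Next, I would invoke \cref{lemma:learning_lp_confidence_interval} to bound both widths by $\frac{8m}{\sigma_{\min}} \rad_t$, where $\rad_t = \max_{x \in \calX} \rad_t(x)$. Combining with the above and with the gap condition $\opt_\lp - \opt_{-x} \geq \Delta$ for $x \in X^*$ yields
\begin{equation}
\lcb_t(\opt_\lp) - \ucb_t(\opt_{-x}) \;\geq\; \Delta - \frac{16m}{\sigma_{\min}} \rad_t.
\end{equation}
The right-hand side is strictly positive whenever $\rad_t < \frac{\sigma_{\min} \Delta}{16 m}$. Plugging in $\rad_t(x) = \sqrt{8 n_t(x)^{-1} \log T}$ and squaring shows this is equivalent to $n_t(x) > \frac{2048 m^2 \log T}{\sigma_{\min}^2 \Delta^2}$ for every $x$, which is exactly the assumed lower bound. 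The argument for $j \in \calJ \setminus J^*$ is identical after replacing $\opt_{-x}$ with $\opt_{-j}$ and using the other branch of the $\min$ in \cref{def:gap}.

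I do not anticipate a real obstacle here; the statement is essentially an arithmetic packaging of two previously proved facts. The only mild care point is making sure the inclusion $\opt_\lp \in [\lcb_t(\opt_\lp), \ucb_t(\opt_\lp)]$ (and the analogues for $\opt_{-x}$, $\opt_{-j}$) is actually a consequence of the clean event rather than an extra assumption, i.e., noting that replacing $\mu^o_x$ by its LCB (resp.\ UCB) coordinatewise only weakens (resp.\ strengthens) the LP objective while leaving feasibility in the right direction, so the true LP value is sandwiched between the two. Once that observation is in place the corollary reduces to a one-line arithmetic check against the assumed threshold on $n_t(x)$.
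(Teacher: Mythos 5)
Your proposal is correct and follows exactly the route the paper intends: the paper's own justification is the one-line remark that the corollary ``follows from substituting the bound on $n_t(x)$ into the definition of $\rad_t(x)$ and applying \cref{lemma:learning_lp_confidence_interval},'' and your write-up simply makes explicit the sandwiching of $\opt_\lp$, $\opt_{-x}$, $\opt_{-j}$ in their confidence intervals, the gap $\Delta$ from \cref{def:gap}, and the arithmetic $\frac{16m}{\sigma_{\min}}\rad_t < \Delta \iff n_t(x) > \frac{2048 m^2 \log T}{\sigma_{\min}^2 \Delta^2}$, which reproduces the stated constant. No gaps.
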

This follows from substituting the bound on $n_t(x)$ into the definition of
$\rad_t(x)$ and applying~\cref{lemma:learning_lp_confidence_interval}. In the
worst case, each pull of an arm can cause the budget to drop below $1$, but the
clean event implies that the first $n$ pulls of $x^0$ have enough total drift to
allow $n_t(x)$ pulls of each non-null arm in phase 1. This allows us to upper
bound the duration of phase 1 as follows.

\begin{corollary}\label{corollary:learning_phase_one_rounds}
  If the clean event occurs, then phase 1 of \texttt{ExploreThenControlBudget}
  has at most
  \begin{equation}
    \tilde{C} \cdot \log T
  \end{equation}
  rounds, where $\tilde{C} = O \left( \frac{k m^2}{\min \{ \deltadrift^2,
  \sigma_{\min}^2 \} \Delta^2} \right)$ is a constant.
\end{corollary}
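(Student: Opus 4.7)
The plan is to combine the preceding corollary, which gives a per-arm sample-complexity threshold that guarantees $|X^*| + |J'| \geq m+1$, with a counting argument that bounds both the round-robin non-null pulls and the interleaved forced null pulls required to reach that threshold.

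First, I would argue termination. Under the clean event, the preceding corollary implies that once $n_t(x) > \frac{2048 m^2 \log T}{\sigma_{\min}^2 \Delta^2}$ for every $x \in \calX$, the algorithm inserts every $x \in X^*$ into its estimate of $X^*$ and every $j \in \calJ \setminus J^*$ into $J'$. At an optimal basic feasible solution of the LP the total number of active constraints equals the number of variables $k$, which decomposes as $k - |X^*|$ active non-negativity constraints, $|J^*|$ active resource constraints, and the sum-to-one equality, giving $|X^*| = |J^*| + 1$. Thus $|X^*| + |J'| = (|J^*|+1) + (m - |J^*|) = m+1$ and the while loop exits.

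Next, I would count rounds. The non-null arms are played in round-robin, so the total number of non-null rounds until every non-null arm reaches the sample threshold is $N_{\text{nn}} = O\!\left(\frac{k m^2 \log T}{\sigma_{\min}^2 \Delta^2}\right)$. Each non-null pull decreases any given resource's budget by at most $1$, so the per-resource deficit created by non-null activity is at most $N_{\text{nn}}$. The null arm $x^0$ is played only when forced by $B_{t-1,j} < 1$; by the second property of the clean event, after the first $n = 2w/\mu_{x^0}^d = O\!\left(\frac{k m^2 \log T}{\deltadrift^2 \Delta^2}\right)$ null pulls the accumulated drift on every resource is at least $w = \frac{4096 k m^2 \log T}{\deltadrift^2 \Delta^2}$, which dominates $N_{\text{nn}}$ given that $\mu_{x^0}^d \geq \deltadrift$. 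Consequently, at most $n$ forced null pulls occur in phase~1, and this count is also large enough to cover the sample threshold for $x^0$ itself when $x^0 \in X^*$. Summing the non-null and null pull counts yields the claimed bound $\tilde{C} \cdot \log T$ with $\tilde{C} = O\!\left(\frac{k m^2}{\min\{\deltadrift^2, \sigma_{\min}^2\} \Delta^2}\right)$.

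The main obstacle is the circularity between phase length and forced null pulls: the budget dynamics depend on which arms are played, which in turn depends on when budgets drop, which depends on the pulls played so far. Resolving this requires the clean event's deterministic ``enough drift after $n$ null pulls'' guarantee, which lets us decouple the counts: once we fix the worst-case non-null deficit $N_{\text{nn}}$, the $n$ null replenishments allowed by the clean event are sufficient regardless of the temporal interleaving, and the bound $\log T \ll T-k$ rules out the early-termination clause $t \geq T-k$ for large $T$.
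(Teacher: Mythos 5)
Your proposal is correct and matches the paper's (sketched) argument: termination via the preceding corollary's sample-complexity threshold combined with the fact that $|X^*|+|J'|$ reaches $m+1$, plus the clean event's guarantee that the first $n$ null pulls supply enough drift to cover the worst-case budget deficit from the round-robin pulls. The only caveat is that your claim that $w$ ``dominates'' $N_{\text{nn}}$ tacitly needs $\deltadrift^2 \lesssim \sigma_{\min}^2$, but this is a constant-factor issue already present in the paper's own definition of $w$ and is absorbed by the $\min\{\deltadrift^2,\sigma_{\min}^2\}$ in the final bound.
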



\subsection{Reduction from $\bwk$}

\begin{theorem}\label{theorem:reduction}
  Suppose $\frac{B}{T} \geq \deltadrift$. Consider a $\bwk$ instance such that (i) for each arm and resource, the expected consumption of that resource differs from $\frac{B}{T}$ by at least $\deltadrift$; and (ii) all the other assumptions required by~\cref{theorem:regret_explorethencontrolbudget}~(\cref{subsec:assumptions}) are also satisfied. Then, there is an algorithm for $\bwk$ whose regret satisfies the same bound as in~\cref{theorem:regret_explorethencontrolbudget} with the same constant $\tilde{C}$.
\end{theorem}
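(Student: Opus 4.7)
The plan is to reduce the given $\bwk$ instance to a non-monotonic instance and invoke~\cref{theorem:regret_explorethencontrolbudget}. I would construct the non-monotonic instance with arm set $\calX = \{x^0\} \cup \calX^{\bwk}$, where $x^0$ is an added null arm that deterministically earns zero reward and consumes nothing; with drifts defined by the coupling $d_{t,j} = \nicefrac{B}{T} - c_{t,j}$ so that $\mu_x^{d,j} = \nicefrac{B}{T} - \mu_x^{c,j}$ on $\bwk$ arms and $\mu_{x^0}^{d,j} = \nicefrac{B}{T} \ge \deltadrift > 0$; and with initial virtual budget $B' = 0$. Because $B \le T$ and $c_{t,j} \in [0,1]$, the drifts lie in $[-1,1]$ and $x^0$ satisfies all the null-arm requirements of~\cref{subsec:model}.

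Next I would verify that every hypothesis of~\cref{theorem:regret_explorethencontrolbudget} transfers to this instance with \emph{identical} constants. The drift separation $|\mu_x^{d,j}| = |\nicefrac{B}{T} - \mu_x^{c,j}| \ge \deltadrift$ is exactly the $\bwk$ separation assumption. With $B' = 0$, the non-monotonic LP constraint $\sum_x p_x \mu_x^{d,j} \ge 0$ is algebraically the $\bwk$ constraint $\sum_x p_x \mu_x^{c,j} \le \nicefrac{B}{T}$, and the presence of $x^0$ converts $\sum_x p_x = 1$ into $\sum_x p_x \le 1$; hence the two LPs share feasible region, optimum, optimal support $X^*$, binding set $J^*$, and the constants $\sigma_{\min}, \deltasupport, \deltaslack, \gamma^*, \Delta$.

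The reduction's simulation is then the obvious one: run \texttt{ExploreThenControlBudget} on the constructed non-monotonic instance; whenever it selects a non-null arm $x$, play $x$ in the actual $\bwk$ game, observe $(r_t, c_{t,\cdot})$, and feed back to the algorithm the synthesized drift $d_{t,j} = \nicefrac{B}{T} - c_{t,j}$; whenever it selects $x^0$, idle in the $\bwk$ game. Per-round rewards coincide by construction.

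The main obstacle, and in my view the heart of the proof, is to show that the simulated $\bwk$ trajectory is never halted by resource depletion. Writing $B_{t,j}^{v}$ for the non-monotonic virtual budget and $B_{t,j}^{\bwk}$ for the simulated $\bwk$ budget, summing the defining coupling over $s \le t$ yields the identity
\begin{equation*}
B_{t,j}^{\bwk} \;=\; B_{t,j}^{v} \;+\; (T-t)\cdot \nicefrac{B}{T},
\end{equation*}
from which $B_{t,j}^{\bwk} \ge (T-t)\nicefrac{B}{T} \ge 0$ because the non-monotonic model guarantees $B_{t,j}^{v} \ge 0$. Moreover, in any round with $B_{t-1,j}^{\bwk} < 1$, the same identity together with the hypothesis $\nicefrac{B}{T} \ge \deltadrift$ gives $B_{t-1,j}^{v} < 1 - \deltadrift < 1$, so the model forces \texttt{ExploreThenControlBudget} to pull $x^0$ and the $\bwk$ budget stays put; the algorithm therefore never attempts to consume a depleted resource. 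Chaining this feasibility statement with the per-round reward equivalence and the LP equivalence yields the desired regret bound with the constant $\tilde{C}$ inherited exactly from~\cref{theorem:regret_explorethencontrolbudget}.
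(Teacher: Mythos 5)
Your proposal follows the same strategy as the paper's proof: turn consumptions into drifts by an additive shift, adjoin a deterministically positive-drift null arm, verify that the two LP relaxations have the same optimal value via a one-to-one correspondence of feasible points, and couple the real and virtual budgets by an affine identity to show the $\bwk$ process is never halted by depletion. The differences are in the parameterization and the bookkeeping. You shift by $\nicefrac{B}{T}$ and start the virtual budget at $0$, whereas the paper shifts by $\deltadrift$ and starts at $B - T\deltadrift$; both make the LPs match, and your choice has the genuine advantage that hypothesis (i) of the theorem ($|\nicefrac{B}{T} - \mu_x^{c,j}| \geq \deltadrift$) translates \emph{directly} into the drift-separation assumption $|\mu_x^{d,j}| \geq \deltadrift$ for the constructed instance, which is less immediate under the paper's shift. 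Your claim that the two LPs share $\sigma_{\min}$ is not literally correct (the constraint matrices differ entrywise by the shift), but this is harmless since $\tilde{C}$ is expressed in the constants of the constructed instance.

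The one step that needs repair is the treatment of null-arm pulls. You map a pull of $x^0$ to ``idle in the $\bwk$ game'' and keep both games on a single clock, which is what makes your identity $B_{t,j}^{\bwk} = B_{t,j}^{v} + (T-t)\nicefrac{B}{T}$ hold round for round. But the $\bwk$ model, as this paper frames it, offers no free idle action --- that option is precisely what the null arm adds in the new model. The paper avoids this by running two clocks: a null-arm pull advances only the simulated time $t_s$, leaves the actual time $t_a$ and actual budget $B_a$ untouched, and the coupling becomes $B_s = B_a - (T - t_s)\deltadrift\mathbf{1}$, so nonnegativity of $B_s$ still forces nonnegativity of $B_a$ and no assumption on $\bwk$ is needed. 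Your argument goes through verbatim if the $\bwk$ instance happens to contain a zero-reward, zero-consumption arm; otherwise you should adopt the two-clock bookkeeping, restating your budget identity with $t_s$ in place of $t$ (the feasibility and reward-accounting arguments survive unchanged, since idle simulated rounds earn zero reward in both accountings and $\opt_\lp$ is preserved).
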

Due to space constraints, we present the reduction in
~\cref{subsec:appendix_reduction}.

\section{Conclusion}
\label{sec:conclusion}

In this paper we introduced a natural generalization of $\bwk$ that allows
non-monotonic resource utilization. We first considered the setting when the
decision-maker knows the true distributions and presented an MDP policy with
\emph{constant} regret against an LP relaxation. Then we considered the setting
when the decision-maker does not know the true distributions and presented a
learning algorithm with \emph{logarithmic} regret against the same LP
relaxation. Finally, we also presented a reduction from $\bwk$ to our model and
showed a regret bound that matches existing results~\citep{li2021symmetry}.

An important direction for future research is to obtain optimal regret bounds.
The regret bound for our algorithm scales as $O(\poly (k) \poly(m)
\Delta^{-2})$, where $k$ is the number of arms, $m$ is the number of resources
and $\Delta$ is the suboptimality parameter. A modification to our algorithm
along the lines of~\citet[algorithm UCB-Simplex]{flajolet2015logarithmic} that
considers each support set of the LP solution explicitly leads to a regret bound
that scales as $O(\poly (k) 2^m \Delta^{-1})$. It is an open question, even for
$\bwk$, to obtain a regret bound that scales as $O(\poly (k) \poly(m)
\Delta^{-1})$ or show that the trade-off between the dependence on the number of
resources and the suboptimality parameter is unavoidable. 

Another natural follow-up to our work is to develop further extensions, such as
considering an infinte set of arms~\citep{kleinberg2019bandits}, studying
adversarial observations~\citep{auer2002nonstochastic,immorlica2019adversarial},
or incorporating contextual
information~\citep{slivkins2011contextual,agrawal2016linear} as has been the
case elsewhere throughout the literature on bandits.

\begin{ack}
  We thank Kate Donahue, Sloan Nietert, Ayush Sekhari, and Alex Slivkins for
  helpful discussions. This research was partially supported by the NSERC
  Postgraduate Scholarships-Doctoral Fellowship 545847-2020.
\end{ack}

\bibliographystyle{plainnat}
\bibliography{refs}

\section*{Checklist}


\begin{enumerate}

\item For all authors...
\begin{enumerate}
  \item Do the main claims made in the abstract and introduction accurately reflect the paper's contributions and scope?
    \answerYes{}
  \item Did you describe the limitations of your work?
    \answerYes{We discuss potential improvements in~\cref{sec:conclusion}.}
  \item Did you discuss any potential negative societal impacts of your work?
    \answerNo{Our paper studies a theoretical model and we do not believe there are any negative societal impacts of this paper.}
  \item Have you read the ethics review guidelines and ensured that your paper conforms to them?
    \answerYes{}
\end{enumerate}

\item If you are including theoretical results...
\begin{enumerate}
  \item Did you state the full set of assumptions of all theoretical results?
    \answerYes{We state our assumptions in~\cref{subsec:assumptions}.}
  \item Did you include complete proofs of all theoretical results?
    \answerYes{We include complete proofs in the appendix. But we include a proof sketch of most lemmas in the main paper following the lemma statement.}
\end{enumerate}

\item If you ran experiments...
\begin{enumerate}
  \item Did you include the code, data, and instructions needed to reproduce the main experimental results (either in the supplemental material or as a URL)?
    \answerNA{}
  \item Did you specify all the training details (e.g., data splits, hyperparameters, how they were chosen)?
    \answerNA{}
  \item Did you report error bars (e.g., with respect to the random seed after running experiments multiple times)?
    \answerNA{}
  \item Did you include the total amount of compute and the type of resources used (e.g., type of GPUs, internal cluster, or cloud provider)?
    \answerNA{}
\end{enumerate}

\item If you are using existing assets (e.g., code, data, models) or curating/releasing new assets...
\begin{enumerate}
  \item If your work uses existing assets, did you cite the creators?
    \answerNA{}
  \item Did you mention the license of the assets?
    \answerNA{}
  \item Did you include any new assets either in the supplemental material or as a URL?
    \answerNA{}
  \item Did you discuss whether and how consent was obtained from people whose data you're using/curating?
    \answerNA{}
  \item Did you discuss whether the data you are using/curating contains personally identifiable information or offensive content?
    \answerNA{}
\end{enumerate}

\item If you used crowdsourcing or conducted research with human subjects...
\begin{enumerate}
  \item Did you include the full text of instructions given to participants and screenshots, if applicable?
    \answerNA{}
  \item Did you describe any potential participant risks, with links to Institutional Review Board (IRB) approvals, if applicable?
    \answerNA{}
  \item Did you include the estimated hourly wage paid to participants and the total amount spent on participant compensation?
    \answerNA{}
\end{enumerate}

\end{enumerate}


\newpage
\appendix

\section{Assumptions about the Null Arm}
\label{sec:appendix_assumptions}

In any model in which resources can be consumed and/or replenished over time,
one must specify what happens when the budget of one (or more) resources reaches
zero. The original bandits with knapsacks problem assumes than when this
happens, the process of learning and gaining rewards ceases. The key distinction
between that model and ours is that we instead assume the learner is allowed
remain idle until the supply of every resource becomes positive again, at which
point the learning process recommences. The null arm in our paper is intended to
represent this option to remain idle and wait for resource replenishment. In
order for these idle periods to have finite length almost surely, a minimal
assumption is that when the null arm is pulled, for each resource there is a
positive probability that the supply of the resource increases. We make the
stronger assumption that for each resource, the expected change in supply is
positive when the null arm is pulled. In fact, our results for the MDP setting
hold under the following more general assumption: there exists a probability
distribution over arms, such that when a random arm is sampled from this
distribution and pulled, the expected change in the supply of each resource is
positive. In the following, we refer to this as Assumption PD (for "positive
drift").

To see that our results for the MDP setting continue to hold under Assumption PD
(i.e., even if one doesn't assume that the null arm itself is guaranteed to
yield positive expected drift for each resource) simply
modify~\cref{alg:controlbudget,alg:explorethencontrolbudget} so that whenever
they pull the null arm in a time step when the supply of each resource is at
least 1, the modified algorithms instead pull a random arm sampled from the
probability distribution over arms that guarantees positive expected drift for
every resource. As long as the constant $\deltadrift$ is less than or equal to
this positive expected drift, the modification to the algorithms does not change
their analysis. We believe it's likely that our learning
algorithm~(\cref{alg:explorethencontrolbudget}) could similarly be adapted to
work under Assumption PD, but it would be less straightforward because the
positive-drift distribution over arms would need to be learned.

When Assumption PD is violated, the problem becomes much more similar to the
Bandits with Knapsacks problem. To see why, consider a two-player zero-sum game
in which the row player chooses an arm $x$, the column player chooses a resource
$j$, and the payoff is the expected drift of that resource when that arm is
pulled, $\mu_{x}^{d,j}$. Assumption PD is equivalent to the assertion that the
value of the game is positive; the negation of Assumption PD means that the
value of the game is negative. By the Minimax Theorem, this means there is a
convex combination of resources (i.e., a mixed strategy for the column player)
such that the weighted-average supply of these resources is guaranteed to
experience non-positive expected drift, no matter which arm is pulled. Either
the expected drift is zero --- we prove in~\cref{sec:appendix_assumptions} of
the supplementary material that regret $O(\sqrt{T})$ is unavoidable in this case
--- or the expected drift is strictly negative, in which case the
weighted-average resource supply inevitably dwindles to zero no matter which
arms the learner pulls. In either case, the behavior of the model is
qualitatively different when Assumption PD does not hold.
\section{Regret Bounds for One Arm, One Resource, and Zero Drift}
\label{sec:appendix_zero_drift}

In this section we will consider the case when $\calX = \{ x^0, x \}$, $\calJ =
\{ 1 \}$, and $x$ has zero drift, i.e., $\mu_x^d = 0$. Since $x$ is the only arm
besides the null arm, we assume without loss of generality that its reward is
equal to $1$ deterministically. The optimal policy is to pull $x^0$ when
$B_{t-1} < 1$ and $x$ otherwise. We will show that the regret of this policy is
$\Theta(\sqrt{T})$.

\begin{theorem}\label{theorem:one_arm_zero_drift_regret_upper}
  The regret of the MDP policy is $O(\sqrt{T})$.
\end{theorem}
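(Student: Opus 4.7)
The plan is to identify the regret as the expected number of null-arm pulls, and then bound that quantity via a quadratic Lyapunov argument on the budget process.

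For this instance, the LP relaxation~(\cref{eq:lp_relaxation}) has optimum $\opt_\lp = 1$, attained by $p_x = 1$ (the resource constraint is vacuous since $\mu_x^d = 0 \geq -B/T$). The prescribed policy earns reward $1$ in every round it pulls $x$, so if $N$ denotes the number of rounds in which $x^0$ is pulled, then $R_T = T\cdot\opt_\lp - \rew = \E[N]$. It therefore suffices to prove $\E[N] = O(\sqrt{T})$.

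I would next write down a budget-conservation identity. Let $Y = \sum_{t : x_t = x} d_t$ and $Z = \sum_{t : x_t = x^0} d_t^0$, so $B_T = B + Y + Z$. Because $\mu_x^d = 0$ and $\{x_t = x\}$ is $\mathcal{F}_{t-1}$-measurable, the process of partial sums of $\{d_t \mathbbm{1}[x_t=x]\}$ is a martingale and $\E[Y]=0$; the null-arm drifts are i.i.d.\ with mean $\mu_{x^0}^d$, so a conditioning argument (or Wald's identity) gives $\E[Z] = \mu_{x^0}^d \E[N]$. Hence $\E[B_T] - B = \mu_{x^0}^d \E[N]$, which turns the problem into bounding $\E[B_T]$.

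The main step is a Lyapunov bound on $V_t = B_t^2$. Let $\Delta_t = B_t - B_{t-1}$. On any round with $B_{t-1} \geq 1$ the policy pulls $x$, so $\E[\Delta_t \mid \mathcal{F}_{t-1}] = 0$ and $\E[\Delta_t^2 \mid \mathcal{F}_{t-1}] \leq 1$, giving $\E[V_t - V_{t-1} \mid \mathcal{F}_{t-1}] \leq 1$. On any round with $B_{t-1} < 1$ the policy pulls $x^0$ and $\Delta_t \in [0,1]$, so the cross term satisfies $2 B_{t-1} \Delta_t \leq 2$ and thus $\E[V_t - V_{t-1} \mid \mathcal{F}_{t-1}] \leq 3$. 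Telescoping over $t = 1,\dots,T$ yields $\E[B_T^2] \leq B^2 + 3T$, and Jensen gives $\E[B_T] \leq \sqrt{B^2 + 3T} \leq B + \sqrt{3T}$. Plugging this back into the identity above gives $\mu_{x^0}^d \E[N] \leq \sqrt{3T}$, so $R_T = \E[N] \leq \sqrt{3T}/\mu_{x^0}^d = O(\sqrt{T})$.

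The only delicate point is the Lyapunov computation on null-arm rounds: one must ensure the factor $B_{t-1}$ in the cross term is genuinely bounded by a constant (here, by $1$) rather than by something growing with $T$. This is exactly what the policy's rule (pull $x^0$ only when $B_{t-1} < 1$) guarantees, and it is why a zero-drift arm combined with the reflective behavior at the budget boundary forces $\Theta(\sqrt{T})$ regret rather than $\Theta(\log T)$. Everything else is routine, and the dependence on $\mu_{x^0}^d$ enters only through the final division, consistent with the paper's assumption $\mu_{x^0}^d \geq \deltadrift > 0$.
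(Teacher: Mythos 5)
Your proposal is correct and follows essentially the same route as the paper: identify the regret with the expected number of null-arm pulls, convert that to $\frac{1}{\mu_{x^0}^d}(\E[B_T]-B)$ via the budget-conservation/martingale identity, bound $\E[B_T^2]=O(T)$ by a one-step second-moment (Lyapunov) recursion that exploits $B_{t-1}<1$ on null rounds, and finish with Jensen. The only difference is cosmetic (you phrase the identity via Wald/optional stopping while the paper sums $\E[d_t]$ directly), so no further comment is needed.
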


\begin{proof}
  The optimal solution of the LP relaxation~(\cref{eq:lp_relaxation}) is $p_x =
  1$ and $p_{x^0} = 0$. Since $x^0$ and $x$ have reward equal to $0$ and $1$
  deterministically, $\opt_\lp = 1$.Therefore, the regret of the MDP policy is
  equal to the expected number rounds in which the budget is less than $1$. That
  is,
  \begin{equation*}
    R_T = \E \left[ \sum_{t=1}^T \indicator[B_{t-1} < 1] \right].
  \end{equation*}
  Since $\E[d_t] = 0$ when $B_{t-1} \geq 1$ and $\E[d_t] = \mu_{x^0}^d$ when
  $B_{t-1} < 1$, we can write
  \begin{align*}
    R_T &= \E \left[ \sum_{t=1}^T \indicator[B_{t-1} < 1] \right] \\
    &= \frac{1}{\mu_{x^0}^d} \mu_{x^0}^d \E \left[ \sum_{t=1}^T \indicator[B_{t-1} < 1] \right] \\
    &= \frac{1}{\mu_{x^0}^d} \E \left[ \sum_{t=1}^T \mu_{x^0}^d \indicator[B_{t-1} < 1] + 0 \indicator[B_{t-1} \geq 1] \right] \\
    &= \frac{1}{\mu_{x^0}^d} \E \left[ \sum_{t=1}^T d_t \right] \\
    &= \frac{1}{\mu_{x^0}^d} \left( \E \left[ B_{T} \right] - B \right).
  \end{align*}
  Since $B_0 = B$, the budget is updated as $B_t = B_{t-1} + d_t$ and $d_t \in
  [-1, 1]$, we have
  \begin{align*}
    \E \left[ B_t^2 \vert B_{t-1} \right] &= \E \left[ B_{t-1}^2 + 2 B_{t-1} d_t + d_t^2 \vert B_{t-1} \right] \\
    &= \E \left[ B_{t-1}^2 \vert B_{t-1} \right] + \E \left[ 2 B_{t-1} d_t \vert B_{t-1} \right] + \E \left[ d_t^2 \vert B_{t-1} \right] \\
    &\leq B_{t-1}^2 + \E \left[ 2 B_{t-1} d_t \vert B_{t-1} \right] + 1^2 \\
    &= B_{t-1}^2 + 2 B_{t-1} \mu_{x^0}^d \indicator[B_{t-1} < 1] + 1 \\
    &\leq B_{t-1}^2 + 2 \mu_{x^0}^d + 1 \\
    \Rightarrow \E \left[ B_T^2 \right] &= O(T).
  \end{align*}
  Using Jensen's inequality, we have
  \begin{align*}
    \E \left[ B_T \right] \leq \sqrt{\E \left[ B_T^2 \right]} &= O(\sqrt{T}).
  \end{align*}
  This completes the proof.
\end{proof}

\begin{theorem}\label{theorem:one_arm_zero_drift_regret_lower}
  If $\E \left[ d_t^2 \vert B_{t-1} \right] \geq \sigma^2 > 0$, then the regret
  of the MDP policy is $\Omega(\sqrt{T})$.
\end{theorem}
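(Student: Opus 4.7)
My plan is to reduce the theorem to showing $\E[B_T] = \Omega(\sqrt{T})$, using the identity $R_T = (\E[B_T] - B)/\mu_{x^0}^d$ already derived in the proof of~\cref{theorem:one_arm_zero_drift_regret_upper}, and then to establish this lower bound via a martingale decomposition combined with a Paley-Zygmund anti-concentration argument.

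The first step is the decomposition $B_T = B + M_T + \mu_{x^0}^d N_{x^0}$, where $M_T = \sum_{t=1}^T (d_t - \E[d_t \mid B_{t-1}])$ is a zero-mean martingale with increments bounded by $2$, and $N_{x^0} = \sum_{t=1}^T \indicator[B_{t-1} < 1]$ is the total number of null arm pulls. Since arm $x$ has $\mu_x^d = 0$, only null arm pulls contribute to the predictable drift, so $\E[B_T] = B + \mu_{x^0}^d \E[N_{x^0}]$. Hence it suffices to show $\E[N_{x^0}] = \Omega(\sqrt{T})$.

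I would argue this by contradiction: assume $\E[N_{x^0}] = o(\sqrt{T})$. Using $B_T \geq 0$ pointwise gives $M_T \geq -B - \mu_{x^0}^d N_{x^0}$, whence $\E[M_T^-] \leq B + \mu_{x^0}^d \E[N_{x^0}] = o(\sqrt{T})$. Because $\E[M_T] = 0$ implies $\E[M_T^+] = \E[M_T^-]$, this would force $\E[|M_T|] = o(\sqrt{T})$. To derive a contradiction, I will bound two moments of $M_T$. The predictable quadratic variation satisfies $\langle M \rangle_T = \sum_t \mathrm{Var}(d_t \mid B_{t-1}) \geq \sigma^2(T - N_{x^0})$, using $\mu_x^d = 0$ and the hypothesis $\E[d_t^2 \mid B_{t-1}] \geq \sigma^2$ whenever $B_{t-1} \geq 1$, so $\E[M_T^2] = \E[\langle M \rangle_T] \geq \sigma^2(T - \E[N_{x^0}]) = \Omega(T)$. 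On the other side, Azuma-Hoeffding (with bounded martingale increments) implies $M_T$ is sub-Gaussian with parameter $O(\sqrt{T})$, so $\E[M_T^4] = O(T^2)$. Applying Paley-Zygmund to the nonnegative random variable $M_T^2$ with $\theta = 1/2$ then yields $\Pr(M_T^2 \geq \E[M_T^2]/2) \geq \E[M_T^2]^2 / (4\E[M_T^4]) = \Omega(1)$, i.e., $|M_T| \geq c\sqrt{T}$ with probability at least $c'$ for constants $c, c' > 0$. This forces $\E[|M_T|] = \Omega(\sqrt{T})$, contradicting the earlier $o(\sqrt{T})$ bound.

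The main obstacle is handling the asymmetric reflection at the barrier: above level $1$ the walk is a pure zero-mean martingale with variance at least $\sigma^2$ per step, while below level $1$ the null arm injects a deterministic positive drift whose variance contribution may be negligible, ruling out a direct coupling with a symmetric reflected random walk. The second-moment plus Paley-Zygmund step sidesteps this by using only the martingale structure above the barrier together with the nonnegativity $B_T \geq 0$, turning an $L^2$ lower bound on $M_T$ into an $L^1$ lower bound via anti-concentration.
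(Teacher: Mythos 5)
Your proof is correct, but it takes a genuinely different route from the paper's. The paper works with moments of $B_T$ directly: a one-step recursion gives $\E[B_T^2] = \Omega(T)$, a second recursion (which reuses the bound $\E[B_t] = O(\sqrt{t})$ from the proof of~\cref{theorem:one_arm_zero_drift_regret_upper}) gives $\E[B_T^3] = O(T^{3/2})$, and Cauchy--Schwarz in the form $\E[B_T]\,\E[B_T^3] \geq \E[B_T^2]^2$ then extracts $\E[B_T] = \Omega(\sqrt{T})$. You instead split off the martingale part $M_T$, use nonnegativity of $B_T$ together with $\E[M_T]=0$ to show $\E[|M_T|] = 2\E[M_T^-] = O(B + \E[N_{x^0}])$, and contradict this with an anti-concentration bound on $M_T$ obtained from its second moment ($\E[M_T^2] = \E[\langle M\rangle_T] \geq \sigma^2(T - \E[N_{x^0}])$) versus its fourth moment ($O(T^2)$ via Azuma--Hoeffding). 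Both arguments are at heart the same moment-comparison trick --- the paper's Cauchy--Schwarz step on $B_T^{1/2}\cdot B_T^{3/2}$ is exactly Paley--Zygmund in disguise, applied to $B_T$ with exponents $(1,2,3)$ rather than to $M_T$ with exponents $(2,4)$ --- but yours is self-contained in that it does not need the $O(\sqrt{t})$ upper bound on $\E[B_t]$ from the companion theorem, at the cost of the extra decomposition $B_T = B + M_T + \mu_{x^0}^d N_{x^0}$ and the observation that the negative part of $M_T$ is controlled by the number of null-arm pulls. Two small presentational points: the contradiction framing with $o(\sqrt{T})$ is an asymptotic statement about a family of instances and is cleaner if made quantitative (e.g., either $\E[N_{x^0}] \geq T/2$ and you are done, or $\E[M_T^2] \geq \sigma^2 T/2$ and Paley--Zygmund forces $\E[N_{x^0}] = \Omega(\sqrt{T})$); and you should note explicitly that the per-step variance bound $\mathrm{Var}(d_t \mid B_{t-1}) \geq \sigma^2$ is only needed (and only holds without further assumptions) on rounds with $B_{t-1} \geq 1$, where the pulled arm has zero conditional mean --- which is exactly how you use it.
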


\begin{proof}
  Using the proof of~\cref{theorem:one_arm_zero_drift_regret_upper}, it suffices
  to provide a lower bound on $\E \left[ B_T \right]$. Since the budget is
  updated as $B_t = B_{t-1} + d_t$, $\E \left[ d_t \vert B_{t-1} \right] \geq
  0$, and $\E \left[ d_t^2 \vert B_{t-1} \right] \geq \sigma^2$, we have
  \begin{align*}
    \E \left[ B_t^2 \vert B_{t-1} \right] &= \E \left[ B_{t-1}^2 + 2 B_{t-1} d_t + d_t^2 \vert B_{t-1} \right] \\
    &= \E \left[ B_{t-1}^2 \vert B_{t-1} \right] + \E \left[ 2 B_{t-1} d_t \vert B_{t-1} \right] + \E \left[ d_t^2 \vert B_{t-1} \right] \\
    &\geq B_{t-1}^2 + \E \left[ 2 B_{t-1} d_t \vert B_{t-1} \right] + \sigma^2 \\
    &= B_{t-1}^2 + 2 B_{t-1} \E \left[d_t \vert B_{t-1} \right] + \sigma^2 \\
    &\geq B_{t-1}^2 + \sigma^2 \\
    \Rightarrow \E \left[ B_T^2 \right] &\geq \Omega(T). 
  \end{align*}
  The Cauchy-Schwarz inequality yields that
  \begin{equation*}
    \E \left[ \left( B_T^{\nicefrac{1}{2}} \right)^2 \right]^{\nicefrac{1}{2}} \E \left[ \left( B_T^{\nicefrac{3}{2}} \right)^2 \right]^{\nicefrac{1}{2}} \geq \E \left[ B_T^2 \right] \geq \Omega(T). 
  \end{equation*}
  Squaring both sides yields that
  \begin{equation*}
    \E \left[ B_T \right] \E \left[ B_T^3 \right] \geq \E \left[ B_T^2 \right]^2 \geq \Omega(T^2). 
  \end{equation*}
  It suffices to show that $\E \left[ B_T^3 \right] = O(T^{\nicefrac{3}{2}})$
  because this will imply that $\E \left[ B_T \right] =
  \Omega(T^{\nicefrac{1}{2}})$. Since $d_t \in [-1, 1]$, we have
  \begin{align*}
    & \E \left[ B_t^3 \vert B_{t-1} \right] \\
    &= \E \left[ B_{t-1}^3 + 3 B_{t-1}^2 d_t + 3 B_{t-1} d_t^2 + d_t^3 \vert B_{t-1} \right] \\
    &= B_{t-1}^3 + 3 B_{t-1}^2 \E \left[ d_t \vert B_{t-1} \right] + 3 B_{t-1} \E \left[ d_t^2 \vert B_{t-1} \right] + \E \left[ d_t^3 \vert B_{t-1} \right] \\
    &= B_{t-1}^3 + 3 B_{t-1}^2 \mu_{x^0}^d \indicator[B_{t-1} < 1] + 3 B_{t-1} \E \left[ d_t^2 \vert B_{t-1} \right] + \E \left[ d_t^3 \vert B_{t-1} \right] \\
    &\leq B_{t-1}^3 + 3 \mu_{x^0}^d + 3 B_{t-1} + 1.
  \end{align*}
  Taking expectation on both sides yields
  \begin{align*}
    \E \left[ B_t^3 \right] & \leq \E \left[ B_{t-1}^3 \right] + 3 \E \left[ B_{t-1} \right] + O(1) \\
    &\leq \E \left[ B_{t-1}^3 \right] + O(\sqrt{t}),
  \end{align*}
  where the last inequality follows from the proof
  of~\cref{theorem:one_arm_zero_drift_regret_upper} where we show that $\E
  \left[ B_t \right] \leq O(\sqrt{t})$. Summing both sides over all rounds
  yields that
  \begin{equation*}
    \E \left[ B_T^3 \right] = O(T^{\nicefrac{3}{2}})
  \end{equation*}
  and this completes the proof.
\end{proof}
\section{Proofs for~\cref{subsec:policy_one_resource}}
\label{sec:appendix_known_one_resource}


\subsection{Proof of~\cref{lemma:one_arm_pos_drift}}

When the LP solution is supported on a positive drift arm $x^p$, $\opt_\lp = 1$
because the LP plays it with probability $1$. Therefore, the regret is equal to
the expected number of times
\texttt{ControlBudget}~(\cref{alg:controlbudget_one_resource}) pulls the null
arm. This, in turn, is equal to the expected number of rounds in which the
budget is less than $1$.

Define
\begin{equation}
  b_0 = 8 \deltadrift^{-2} \ln \left( \frac{2}{1 - \exp \left( - \frac{\deltadrift^2}{8} \right)} \right).
\end{equation}
Then, we have that for all $b \geq b_0$,
\begin{align}
  \sum_{k=b}^\infty \Pr \left[ B_{s+k} \in [0, 1) \vert B_s = b \right] &\leq \sum_{k=b}^\infty \exp \left( - \frac{\deltadrift^2 k}{8} \right) \\
  &= \exp \left( - \frac{\deltadrift^2 b}{8} \right) \left( 1 - \exp \left( - \frac{\deltadrift^2}{8} \right) \right)^{-1}.
\end{align}
where the first inequality follows from Azuma-Hoeffding's inequality. By our
choice of $b_0$, we have that
\begin{equation}\label{eq:one_arm_pos_drift_1}
  \sum_{k=b}^\infty \Pr \left[ B_{s+k} \in [0, 1) \vert B_s = b \right] \leq \frac12.
\end{equation}
In words, the probability that the budget ever drops below $1$ once it exceeds
$b_0$ is at most $\frac12$. Now, consider the following recursive definition for
two disjoint sequence of indices $s_i$ and $s_i'$. Let $s_0 = \min \{ t \geq 1 :
B_{t-1} \in [0, 1) \}$, and define
\begin{align}
  s_i' &= \min \{ t > s_i : B_{t-1} \geq b_0 \text{ or } t-1 = T \} \\
  s_{i+1} &= \min \{ t > s_i' : B_{t-1} \in [0, 1)  \}.
\end{align}
In words, $s_i'$ denotes the first round after $s_i$ in which the budget is at
least $b_0$ and $s_{i+1}$ denotes the first round after $s_i'$ in which the
budget is less than $1$. Note that~\cref{eq:one_arm_pos_drift_1} implies that
\begin{equation}
  \Pr \left[ s_i \text{ is defined } \vert s_{i-1}' \text{ is defined} \right] \leq \frac12.
\end{equation}
Therefore,
\begin{equation}
  \Pr \left[ s_i \text{ is defined} \right] \leq \prod_{j=1}^{i} \Pr \left[ s_j \text{ is defined } \vert s_{j-1}' \text{ is defined} \right] \leq \frac{1}{2^i}.
\end{equation}
Now, we can upper bound the expected number of rounds in which the budget is
below $1$ as
\begin{align}
  \E \left[ \sum_{t=1}^T \indicator[B_{t-1} < 1] \right] &= \sum_{i=0}^{T-1} \Pr \left[ s_i \text{ is defined} \right] \E \left[ \sum_{t=s_i}^{s_i'} \indicator[B_{t-1} < 1] \right] \\
  &\leq \sum_{t=0}^{T-1} 2^{-i} \E \left[ \sum_{t=s_i}^{s_i'} \indicator[B_{t-1} < 1] \right] \\
  &\leq \sum_{t=0}^{T-1} 2^{-i} \E \left[ s_i' - s_i \right] \\
  &\leq \sum_{t=0}^{T-1} 2^{-i} \frac{1}{\deltadrift} \E \left[ B_{s_i'} - B_{s_i} \right] \label{eq:one_arm_pos_drift_main_align_1} \\
  &\leq \sum_{t=0}^{T-1} 2^{-i} \frac{1}{\deltadrift} (b_0 + 1) \\
  &\leq 2 \frac{b_0 + 1}{\deltadrift},
\end{align}
where~\cref{eq:one_arm_pos_drift_main_align_1} follows because both the null arm
and the positive drift arm have drift at least $\deltadrift$. Therefore, we have
that
\begin{equation}
  \E \left[ \sum_{t=1}^T \indicator[B_{t-1} < 1] \right] \leq \tilde{C},
\end{equation}
where
\begin{equation}\label{eq:one_arm_pos_drift_tildec}
  \tilde{C} = O \left( \deltadrift^{-3} \ln \left( \frac{2}{1 - \exp \left( -
  \frac{\deltadrift^2}{8} \right)} \right) \right).
\end{equation}

\subsection{Proof of~\cref{lemma:regret_one_resource_case_2}}

Let $p^*$ denote the optimal solution to the LP relaxation and note that
$Tp^*_x$ denotes the expected number of times the LP plays arm $x$. Since the LP
solution is supported on two arms, both the budget and sum-to-one constraints
are tight. Therefore, we have
\begin{equation}\label{eq:lemma_regret_one_resource_case_2_lp_plays}
  D(Tp^*) = b_\lp,
\end{equation}
where
\begin{equation}
  D = \begin{bmatrix} \mu_{x^0}^d & \mu_{x^n}^d \\ 1 & 1 \end{bmatrix},
  \ 
  p^* = \begin{bmatrix} p^*_{x^0} \\ p^*_{x^n} \end{bmatrix},
  \ 
  b_\lp = \begin{bmatrix} -B \\ T \end{bmatrix}.
\end{equation}

Let $N_x$ denote the number of times
\texttt{ControlBudget}~(\cref{alg:controlbudget_one_resource}) plays arm $x$.
Since it plays the null arm $x^0$ and the negative drift arm $x^n$, the
sum-to-one constraint is tight. However, the budget constraint may not be tight
because there may be leftover budget. Therefore, we have
\begin{equation}\label{eq:lemma_regret_one_resource_case_2_mdp_plays}
  D N = b_\lp - b,
\end{equation}
where
\begin{equation}
  N = \begin{bmatrix} \E[N_{x^0}] \\ \E[N_{x^n}] \end{bmatrix},
  \ 
  b = \begin{bmatrix} -E[B_T] \\ 0 \end{bmatrix}.
\end{equation}

Define
\begin{equation}
  \xi = \begin{bmatrix} \xi_{x^0} \\ \xi_{x^n} \end{bmatrix} = \begin{bmatrix} Tp^*_{x^0} - \E[N_{x^0}] \\ Tp^*_{x^n} - \E[N_{x^n}] \end{bmatrix}.
\end{equation}
Subtracting~\cref{eq:lemma_regret_one_resource_case_2_mdp_plays}
from~\cref{eq:lemma_regret_one_resource_case_2_lp_plays} we have $\xi = D^{-1}
b$, where the LP constraint matrix $D$ is invertible by our assumption that the
drifts are nonzero. Finally, letting $\mu^r$ denote the vector of expected
rewards, the regret can be expressed as
\begin{align}
  R_T(\texttt{ControlBudget}) &= \xi^T \mu^r \\
  &\leq |\xi^T \mu^r| \\
  &\leq \|\xi\|_1 \|\mu^r\|_\infty \\
  &\leq \|D^{-1}\|_1 \|b\|_1 \\
  &\leq C_{\deltadrift} \E[B_T],
\end{align}
where $C_{\deltadrift} = O(\deltadrift^{-1})$ is a constant. This completes the
proof.


\subsection{Proof of~\cref{lemma:regret_one_resource_case_3}}

Let $p^*$ denote the optimal solution to the LP relaxation and note that $Tp^*_x$ denotes the expected number of times the LP plays arm $x$. Since the LP solution is supported on two arms, both the budget and sum-to-one constraints are tight. Therefore, we have
\begin{equation}\label{eq:lemma_regret_one_resource_case_3_lp_plays}
  D(Tp^*) = b_\lp,
\end{equation}
where
\begin{equation}
  D = \begin{bmatrix} \mu_{x^p}^d & \mu_{x^n}^d \\ 1 & 1 \end{bmatrix},
  \ 
  p^* = \begin{bmatrix} p^*_{x^p} \\ p^*_{x^n} \end{bmatrix},
  \ 
  b_\lp = \begin{bmatrix} -B \\ T \end{bmatrix}.
\end{equation}

Let $N_x$ denote the number of times \texttt{ControlBudget}~(\cref{alg:controlbudget_one_resource}) plays arm $x$. Since it plays the null arm $x^0$ when the budget is less than $1$ and may have leftover budget, neither the budget nor the sum-to-one constraint are tight. Therefore, we have
\begin{equation}\label{eq:lemma_regret_one_resource_case_3_mdp_plays}
  D N = b_\lp - b,
\end{equation}
where
\begin{equation}
  N = \begin{bmatrix} \E[N_{x^p}] \\ \E[N_{x^n}] \end{bmatrix},
  \ 
  b = \begin{bmatrix} -E[B_T] \\ \E[N_{x^0}] \end{bmatrix}.
\end{equation}

Define 
\begin{equation}
  \xi = \begin{bmatrix} \xi_{x^p} \\ \xi_{x^n} \end{bmatrix} = \begin{bmatrix} Tp^*_{x^p} - \E[N_{x^p}] \\ Tp^*_{x^n} - \E[N_{x^n}] \end{bmatrix}.
\end{equation}
Subtracting~\cref{eq:lemma_regret_one_resource_case_3_mdp_plays}
from~\cref{eq:lemma_regret_one_resource_case_3_lp_plays} we have $\xi = D^{-1}
b$, where the LP constraint matrix $D$ is invertible by our assumption that the
drifts are nonzero. Finally, letting $\mu^r$ denote the vector of expected
rewards, the regret can be expressed as
\begin{align}
  R_T(\texttt{ControlBudget}) &= \xi^T \mu^r \\
  &\leq |\xi^T \mu^r| \\
  &\leq \|\xi\|_1 \|\mu^r\|_\infty \\
  &\leq \|D^{-1}\|_1 \|b\|_1 \\
  &\leq C_{\deltadrift} \left( \E[B_T] + \E[N_{x^0}] \right),
\end{align}
where $C_{\deltadrift} = O(\deltadrift^{-1})$ is a constant. This completes the
proof.


\subsection{Proof of~\cref{lemma:null_arm_pulls_one_resource}}
\label{subsec:appendix_lemma_null_arm_pulls_one_resource}

Divide the $T$ rounds into two phases: $P_1 = \{ 1, \dots, T -
\exp(\nicefrac{3}{c}) \}$ and $P_2 = \{ 1, \dots, T \} \setminus P_1$. Note that
$P_2$ consists of $\exp(\nicefrac{3}{c}) = O(\exp(\deltadrift)) = O(1)$ rounds,
where the last equality follows because drifts are bounded by $1$. Therefore,
the expected number of null arm pulls in this phase is $O(1)$ and it suffices to
bound the expected number of null arm pulls in $P_1$.

Consider the following recursive definition for three disjoint sequences of
indices $t_i, t_i'$ and $t_i''$. Let $t_0 = 0$, and define
\begin{align}
  t_i' &= \min \{ t > t_i : B_{t-1} \geq \tau_t \text{ or } t-1 = T \}, \\
  t_i'' &= \min \{ t > t_i' : B_{t-1} < \tau_t \}, \\
  t_{i+1} &= \min \{ t > t_i'' : B_t < 1 \}.
\end{align}

We can bound the expected number of rounds in which the budget is less than $1$
as
\begin{align}
  &\E \left[ \sum_{t=1}^T \indicator[B_{t-1} < 1] \right] \\
  &= \sum_{i=0}^{T-1} \Pr \left[ t_{i} \text{ exists } \right] \E \left[ \sum_{t=t_i}^{t_i'} \indicator[B_{t-1} < 1] \right] \\
  &\leq \underbrace{\E \left[ \sum_{t=t_0}^{t_0' - 1} \indicator[B_{t-1} < 1] \right]}_{(a)} + \sum_{i=0}^{T-1} \Pr \left[ t_{i+1} \text{ exists } \vert t_i', t_i'' \text{ exist} \right] \underbrace{\E \left[ \sum_{t=t_i}^{t_i' - 1} \indicator[B_{t-1} < 1] \right]}_{(a)}.
\end{align}
In rounds $\{ t_i, \dots, t_i' - 1 \}$, the algorithm pulls the null and
positive drift arms. The proof of~\cref{lemma:one_arm_pos_drift} shows that the
expected number of null arm pulls in these rounds is at most $\tilde{C}$, where
$\tilde{C}$ is defined in~\cref{eq:one_arm_pos_drift_tildec}. Therefore, we can
bound the term (a) in the above inequality by $\tilde{C}$ and we have that
\begin{equation}\label{eq:leftover_budget_one_resource_main_inequality}
  \E \left[ \sum_{t=1}^T \indicator[B_{t-1} < 1] \right] \leq \tilde{C} \left( 1 +  \sum_{i=0}^{T-1} \Pr \left[ t_{i+1} \text{ exists } \vert t_i', t_i'' \text{ exist} \right] \right).
\end{equation}
If $t_i'$ exists, then $B_{t_i' - 1} \geq \tau_{t_i'}$. If $t_i''$ exists, then
$\tau_{t_i''} - 1 \leq B_{t_i'' - 1} < \tau_{t_i''}$ because (i) $t_i''$ is the
first round after $t_i'$ in which the budget is below the threshold; and (ii)
the drifts are bounded by $1$, so it cannot be lower than $\tau_{t_i''} - 1$.
The algorithm pulls the negative drift arm $x^n$ in the rounds $\{ t_i',
\dots, t_i'' - 1 \}$ and the positive drift arm $x^p$ in the rounds $\{ t_i'',
\dots, t_{i+1}-1 \}$. Since the drifts are bounded by $1$, it takes at least
$\tau_{t_i''}-2$ rounds for the budget to drop below $1$ after repeated pulls of
$x^p$. Using this and the observation that the budget dropping below $1$ is
contained in the event that the total drift in those rounds is nonpositive, we
can bound (a) as
\begin{align}
  \Pr \left[ t_{i+1} \text{ exists } \vert t_i'', t_i' \text{ exist} \right] &\leq \sum_{q=t_i'' + \tau_{t_i''} - 2}^T \Pr \left[ \sum_{t=t_i''+1}^q d_t \leq 0 \right] \\
  &\leq \sum_{q=t_i'' + \tau_{t_i''} - 2}^T \exp \left( - \frac12 \deltadrift^2 (\tau_{t_i''} - 2) \right) \\
  &\leq \sum_{q=t_i'' + \tau_{t_i''} - 2}^T \exp \left( - \frac12 \deltadrift^2 \tau_{t_i''} \right) \\
  &= \sum_{q=t_i'' + \tau_{t_i''} - 2}^T \exp \left( - \frac12 \deltadrift^2 c \log (T - t_i'') \right) \\
  &\leq \sum_{q=t_i'' + \tau_{t_i''} - 2}^T (T - t_i'')^{-3},
\end{align}
where the second inequality follows from the Azuma-Hoeffding inequality applied
to the sequence of drifts sampled from $x^p$ and the last inequality follows
because $c \geq \frac{6}{\deltadrift^2}$. The summation is over at most $T -
t_i''$ terms because there are at most $T - t_i''$ rounds left after round
$t_i''$. Therefore, we have that
\begin{equation}
  \Pr \left[ t_{i+1} \text{ exists } \vert t_i'', t_i' \text{ exist} \right] \leq (T - t_i'')^{-2}.
\end{equation}
Substituting this in~\cref{eq:leftover_budget_one_resource_main_inequality}, we have that
\begin{align}
  \E \left[ \sum_{t=1}^T \indicator[B_{t-1} < 1] \right] &\leq \tilde{C} \left( 1 + \sum_{i=0}^{T-1} \Pr \left[ t_{i+1} \text{ exists } \vert t_i', t_i'' \text{ exist} \right] \right) \\
  &\leq \tilde{C} \left( 1 + \sum_{i=0}^{T-1} (T - t_i'')^{-2} \right) \\
  &\leq \tilde{C} \left( 1 + \sum_{i=0}^{\infty} (T - t_i'')^{-2} \right) \\
  &\leq \tilde{C} \left( 1 + \frac{\pi^2}{6} \right).
\end{align}
This completes the proof.

\subsection{Proof of~\cref{lemma:leftover_budget_one_resource}}

Let $E_q$ denote the event that the negative drift arm $x^n$ is pulled
consecutively in exactly the last $q$ rounds, i.e., $x_t = x^n$ for all $t \geq
T-q+1$ and $x_t \in \{ x^0, x^p \}$ for $t = T-q$ (if $q \neq T)$. Note that the
events $(E_q : q = 0, \dots, T)$ are disjoint. Let $S_q$ denote the event that
the total drift in the last $q$ pulls of $x^n$ is greater than $\frac12
\mu_{x^n}^d q$, i.e., $\sum_{t \geq T-q+1} d_t > \frac12 \mu_{x^n}^d q$. We can
upper bound the expected leftover budget by conditioning on these events as
follows.

\begin{align}
  \E [B_T] &= \sum_{q=0}^T \Pr [E_q] \  \E [B_T \vert E_q] \\
  &\leq \sum_{q=0}^T \E [B_T \vert E_q] \\
  &= \sum_{q=0}^T \underbrace{\E [B_T \vert E_q, S_q]}_{(a)} \  \underbrace{\Pr [S_q \vert E_q]}_{(b)} + \underbrace{\E [B_T \vert E_q, S_q^c]}_{(c)} \  \underbrace{\Pr [S_q^c \vert E_q]}_{(d)}.
\end{align}

If $q = 0$, then the expected leftover budget is trivially at most a constant.
We can bound the four terms for $q \geq 1$ as follows:
\begin{enumerate}[label=(\alph*)]
  \item We have
  \begin{equation}
    \E [B_T \vert E_q, S_q] \leq c \log q + q
  \end{equation}
  because (i) \texttt{ControlBudget}~(\cref{alg:controlbudget_one_resource})
  pulls $x^0$ or $x^p$ in round $T-q$ if $B_{T-q-1} < \tau_{T-q} = c \log q$;
  and (ii) conditioned on the event $S_q$, the total drift in the last $q$
  rounds can be at most $q$ as the drifts are bounded by $1$.
  \item We have
  \begin{equation}
    \Pr [S_q \vert E_q] \leq \exp \left(- \frac{1}{16} (\mu_{x^n}^d)^2 q \right)
  \end{equation}
  because (i) the sequence of drifts observed from $q$ pulls of the negative
  drift arm $x^n$ is a supermartingale difference sequence; and (ii) by the
  Azuma-Hoeffding inequality, the probability the sum $S_q$ is greater than half
  its expected value is at most $\exp \left( -\frac{1}{16} (\mu_{x^n}^d)^2 q
  \right)$.
  \item We have
  \begin{equation}
    \E [B_T \vert E_q, S_q^c] \leq \left(c \log q + \frac12 \mu_{x^n}^d q \right)
  \end{equation}
  because (i) \texttt{ControlBudget}~(\cref{alg:controlbudget_one_resource})
  pulls $x^0$ or $x^p$ in round $T-q$ if $B_{T-q-1} < \tau_{T-q} = c \log q$;
  and (ii) conditioned on the event $S_q^c$, the total drift in the last $q$
  rounds can be at most $\frac12 \mu_{x^n}^d q$.
  \item We have
  \begin{equation}
    \Pr [S_q^c \vert E_q] \leq 1
  \end{equation}
  trivially.
\end{enumerate}

Therefore,
\begin{equation}
  \E [B_T] \leq \sum_{q=0}^T \underbrace{(c \log q + q) \exp \left(- \frac{1}{16} (\mu_{x^n}^d)^2 q \right)}_{(e)} + \underbrace{\left(c \log q + \frac12 \mu_{x^n}^d q \right)}_{(f)}.
\end{equation}
This summation is a constant in terms of $T$:
\begin{enumerate}
  \item Term (e) is a constant because $c \log q < q$ for $q$ large enough and
  $\sum_{q=1}^\infty q \exp(-aq)$ converges to $\exp(a) (1-\exp(a))^{-2}$.
  \item Term (f) is a constant because this term is negative for $q$ large
  enough as $\mu_{x^n}^d < 0$ and is maximized at $q =
  \frac{2c}{\left|\mu_{x^n}^d\right|}$.
\end{enumerate}

Finally, we can bound the expected leftover budget as
\begin{equation}
  \E [B_T] \leq \tilde{C} = \tilde{O} \left( \left( 1 - \exp \left( \frac{\deltadrift^2}{16} \right) \right)^{-2} + \frac{1}{\deltadrift^2} \right),
\end{equation}
where the last equality follows when $c \geq \frac{6}{\deltadrift^2}$. This
completes the proof.

\section{Proofs for~\cref{subsec:policy_multiple_resources}}
\label{sec:appendix_known_multiple_resources}

%
%
%


\subsection{Proof of~\cref{lemma:gammastar}}

It suffices to show that $\gamma = \frac{\sigma_{\min} \min\{ \deltasupport,
\deltaslack \}}{4 m}$ is a feasible solution
the~\cref{eq:controlbudget_gamma_t}.

First, we show that $p = D^{-1} (b + \gamma s_t) \geq 0$. For each $x \in X$,
\begin{align*}
  e_x^T D^{-1} (b + \gamma s_t) &= e_x^T D^{-1} b + \gamma e_x^T D^{-1} s_t \\
  &= p^*_x + \gamma e_x^T D^{-1} s_t \\
  &\geq \deltasupport - \gamma \| D^{-1} s_t \|_2 \\
  &\geq \deltasupport - \gamma \frac{1}{\sigma_{\min}} \sqrt{m} \\
  &\geq 0.
\end{align*}

Second, we show that for any non-binding resource $j$, $d_j^T D^{-1} (b + \gamma
s_t) \geq \frac{\deltaslack}{2}$:
\begin{align*}
  d_j^T D^{-1} (b + \gamma s_t) &= \sum_{x \in X} d_j(x, \mu) p^*_x + \gamma d_j^T D^{-1} s_t \\
  &\geq \deltaslack - \gamma |d_j^T D^{-1} s_t| \\
  &\geq \deltaslack - \gamma \|d_j^T\|_2 \| D^{-1} \|_2 \|s_t\|_2 \\
  &\geq \deltaslack - \gamma \frac{1}{\sigma_{\min}} m \\
  &\geq \frac{\deltaslack}{2} \\
  &\geq \frac{\gamma}{2},
\end{align*}
where the last inequality follows because $\sigma_{\min}, \deltaslack,
\deltasupport < 1$.


\subsection{Proof of~\cref{lemma:null_arm_pulls_multiple_resources}}

Divide the $T$ rounds into two phases: $P_1 = \{ 1, \dots, T -
\exp(\nicefrac{3}{c})$ and $P_2 = \{ 1, \dots, T \} \setminus P_1$. Note that
$P_2$ consists of $\exp(\nicefrac{3}{c}) = O(\exp(\gamma^*)) = O(1)$ rounds,
where the last equality follows because $\gamma^*$ is bounded by $1$. Therefore,
the expected number of null arm pulls in this phase is $O(1)$ and it suffices to
bound the expected number of null arm pulls in $P_1$.


We can write the expected number of rounds in which there exists a resource
whose budget is less than $1$ as
\begin{equation}
  \E \left[ \sum_{t=1}^T \sum_{j \in \calJ} \indicator[B_{t-1,j} < 1] \right] = \sum_{j \in \calJ} \underbrace{\E \left[ \sum_{t=1}^T \indicator[B_{t-1,j} < 1] \right]}_{(a)}.
\end{equation}
We can bound term (a) above the same way as in the proof
of~\cref{lemma:null_arm_pulls_one_resource}~(\cref{subsec:appendix_lemma_null_arm_pulls_one_resource})
with $\deltadrift$ replaced by $\gamma^*$.. Therefore,
\begin{equation}
  \E \left[ \sum_{t=1}^T \sum_{j \in \calJ} \indicator[B_{t-1,j} < 1] \right] \leq m \tilde{C} \left( 1 + \frac{\pi^2}{6} \right),
\end{equation}
where $\tilde{C}$ is defined in~\cref{eq:one_arm_pos_drift_tildec}.


\subsection{Proof of~\cref{lemma:leftover_budget_multiple_resources}}

Consider an arbitrary resource $j \in J^*$. Recall the vector $s_t$ defined in
\texttt{ControlBudget}~(\cref{alg:controlbudget}). If $i$ denote the row
corresponding to resource $j$, then the $i$th entry of $s_t$, denoted by
$s_t(i)$, is $-1$ if $B_{t-1, j} < \tau_t$ and $+1$ otherwise.

Let $E_q$ denote the event that the $s_t(i)$ is equal to $-1$ consecutively in
exactly the last $q$ rounds, i.e., $s_t(i) = -1$ for all $t \geq T-q+1$ and
$s_t(i) = +1$ for $t = T-q$ (if $q \neq T$). Note that the events $(E_q : q = 0,
\dots, T)$ are disjoint. Let $S_q$ denote the event that the total drift for $j$
in the last $q$ rounds is greater than $\frac12 (-\gamma^*) q$, i.e., $\sum_{t
\geq T-q+1} d_t > \frac12 (-\gamma^*) q$. We can upper bound the expected
leftover budget of resource $j$ by conditioning on these events as follows.

\begin{align}
  \E [B_{T,j}] &= \sum_{q=0}^T \Pr [E_q] \  \E [B_{T,j} \vert E_q] \\
  &\leq \sum_{q=0}^T \E [B_{T,j} \vert E_q] \\
  &= \sum_{q=0}^T \underbrace{\E [B_{T,j} \vert E_q, S_q]}_{(a)} \  \underbrace{\Pr [S_q \vert E_q]}_{(b)} + \underbrace{\E [B_{T,j} \vert E_q, S_q^c]}_{(c)} \  \underbrace{\Pr [S_q^c \vert E_q]}_{(d)}.
\end{align}

If $q = 0$, then the expected leftover budget is trivially at most a constant.
We can bound the four terms for $q \geq 1$ as follows:
\begin{enumerate}[label=(\alph*)]
  \item We have
  \begin{equation}
    \E [B_{T,j} \vert E_q, S_q] \leq c \log q + q
  \end{equation}
  because (i) \texttt{ControlBudget}~(\cref{alg:controlbudget}) sets $s_t(i) =
  +1$ in round $T-q$ if $B_{T-q-1,j} < \tau_{T-q} = c \log q$; and (ii)
  conditioned on the event $S_q$, the total drift in the last $q$ rounds can be
  at most $q$ as the drifts are bounded by $1$.
  \item We have
  \begin{equation}
    \Pr [S_q \vert E_q] \leq \exp \left(- \frac{1}{16} (\gamma^*)^2 q \right)
  \end{equation}
  because (i) the sequence of drifts observed in rounds $t \geq T-q+1$ is a
  supermartingale difference sequence with $\E [d_{s,j} \vert d_{T-q+1,j},
  \dots, d_{s-1,j}] \leq -\gamma^*$; and (ii) by the Azuma-Hoeffding inequality,
  the probability the sum $S_q$ is greater than half its expected value is at
  most $\exp \left( -\frac{1}{16} (\gamma^*)^2 q \right)$.
  \item We have
  \begin{equation}
    \E [B_{T,j} \vert E_q, S_q^c] \leq \left(c \log q + \frac12 (-\gamma^*) q \right)
  \end{equation}
  because (i) \texttt{ControlBudget}~(\cref{alg:controlbudget}) sets $s_t(i) =
  +1$ in round $T-q$ if $B_{T-q-1,j} < \tau_{T-q} = c \log q$; and (ii)
  conditioned on the event $S_q^c$, the total drift in the last $q$ rounds can
  be at most $\frac12 (-\gamma^*) q$.
  \item We have
  \begin{equation}
    \Pr [S_q^c \vert E_q] \leq 1
  \end{equation}
  trivially.
\end{enumerate}

Therefore,
\begin{equation}
  \E [B_{T,j}] \leq \sum_{q=0}^T \underbrace{(c \log q + q) \exp \left(- \frac{1}{16} (\gamma^*)^2 q \right)}_{(e)} + \underbrace{\left(c \log q + \frac12 (-\gamma^*) q \right)}_{(f)}.
\end{equation}
This summation is a constant in terms of $T$:
\begin{enumerate}
  \item Term (e) is a constant because $c \log q < q$ for $q$ large enough and
  $\sum_{q=1}^\infty q \exp(-aq)$ converges to $\exp(a) (1-\exp(a))^{-2}$.
  \item Term (f) is a constant because this term is negative for $q$ large
  enough and is maximized at $q = \frac{2c}{\gamma^*}$.
\end{enumerate}

Finally, we can bound the expected leftover budget as
\begin{equation}
  \E [B_{T,j}] \leq \tilde{C} = \tilde{O} \left( \left( 1 - \exp \left( \frac{\gamma^{*2}}{16} \right) \right)^{-2} + \frac{1}{\gamma^{*2}} \right),
\end{equation}
where the last equality follows when $c \geq \frac{6}{\gamma^{*2}}$. This
completes the proof.

\section{Proofs for~\cref{sec:unknown_distributions}}
\label{sec:appendix_unknown}


\subsection{Proof of~\cref{lemma:learning_clean_event}}

It suffices to show that the complement of the clean event occurs with
probability at most $5mT^{-2}$.

For (i) in the definition of the clean event~(\cref{def:clean_event}), by taking
a union bound over the components of the outcome vector and using Azuma-Hoeffding
inequality, we have
\begin{align}
  \mu_x^o \notin [\lcb_t(x), \ucb_t(x)] &\leq 2 (m+1) \exp \left( - 2 n_t(x) \rad_t(x)^2 \right) \\
  &\leq 4m \exp \left( - 2 n_t(x) \frac{8 \log T}{n_t(x)} \right) \\
  &\leq 4m T^{-2}.
\end{align}

For (ii) in the definition of the clean event~(\cref{def:clean_event}), a
similar approach works. Let $S_{n,j}$ denote the sum of the drifts for resource
$j \in \calJ$ after $n$ pulls of the null arm $x^0$. By the union bound and
Azuma-Hoeffding inequality,
\begin{align}
  \Pr \left[ \exists j \in \calJ \text{ s.t. } S_{n,j} < w \right] &\leq m \exp \left( - \frac14 w \mu_{x^0}^d \right) \\
  &\leq m \exp \left( - \frac14 w \deltadrift \right) \\
  &\leq m \exp \left( - \frac14 \frac{1024 k m^2 \log T}{\deltadrift^2 \sigma_{\min}^2} \deltadrift \right) \\
  &= m \exp \left( - \frac{256 k m^2 \log T}{\deltadrift \sigma_{\min}^2} \right) \\
  &\leq m \exp \left( - 256 k m^2 \log T \right) \label{eq:proof_learning_clean_event_1} \\
  &\leq m \exp \left( - 256 \log T \right) \\
  &\leq m T^{-2},
\end{align}
where~\cref{eq:proof_learning_clean_event_1} follows because $\deltadrift \in
(0, 1]$ and $\sigma_{\min} \in (0, 1)$.  This shows that the probability of the
complement of the clean event is at most $5 m T^{-2}$ and completes the proof.



\subsection{Proof of~\cref{lemma:learning_lp_confidence_interval}}

We will prove the lemma for $\opt_\lp$ because the other cases are similar.
Simplifying and overloading notation for this proof, we denote the probability
simplex over $k$ dimensions as $\Delta_k$, and the vector of expected rewards,
the matrix of expected drifts and the right-hand side of the budget constraints
as
\begin{equation}
  r = \begin{bmatrix} \mu^r_1 \\ \vdots \\ \mu^r_k \end{bmatrix},
  \ 
  D = \begin{bmatrix} \mu^{d,1}_1 & \dots & \mu^{d,1}_k \\ & \ddots & \\ \mu^{d,m}_1 & \dots & \mu^{d,m}_k \end{bmatrix},
  \ 
  b = -\frac{B}{T}\textbf{1}.
\end{equation}
We will use $\bar{r}$ and $\bar{D}$ to denote the empirical versions of the rewards and drifts. We can write
\begin{align*}
  \opt_\lp &= \max_{p \in \Delta_k} r^T p & \text{ s.t. } D p \geq b, \\
  \ucb_t(\opt_\lp) &= \max_{q \in \Delta_k} (\bar{r} + \rad_t)^T q & \text{ s.t. } (\bar{D} + \rad_t) q \geq b \\
  &\leq \max_{q \in \Delta_k} (r + 2 \rad_t)^T q & \text{ s.t. } (D + 2 \rad_t) q \geq b \\
  &\leq 2 \rad_t + \max_{q \in \Delta_k} r^T q & \text{ s.t. } D q \geq b -  2 \rad_t,
\end{align*}
where the second-last inequality follows because we are conditioning on the
clean event. Therefore, using $D'$ and $b'$ to denote the submatrix and
subvector corresponding to the binding constraints, we have
\begin{align*}
  \ucb_t(\opt_\lp) - \opt_\lp &\leq 2 \rad_t + |r^T p - r^T q| \\
  &\leq 2 \rad_t + |r^T (D')^{-1} b' - r^T (D')^{-1} (b' - 2 \rad_t)| \\
  &\leq 2 \rad_t + \|r\|_2 \|(D')^{-1}\|_2 \|2 \rad_t\|_2 \\
  &\leq 2 \rad_t + 2 m \rad_t \frac{1}{\sigma_{\min}} \\
  &\leq \frac{4m}{\sigma_{\min}} \rad_t,
\end{align*}
where last inequality follows because $\sigma_{\min} < 1 \leq m$. Since the LCB is defined by subtracting $\rad_t$ from the empirical means, we obtain the same upper bound on $\opt_\lp - \lcb_t(\opt_\lp)$ and using the triangle inequailty completes the proof.


\subsection{Proof of~\cref{theorem:regret_explorethencontrolbudget}}
\label{subsec:appendix_theorem_regret_explorethencontrolbudget}

Since the complement of the clean event occurs with probability at most $O(m
T^{-2})$ and contributes $O(T)$ to the regret, it suffices to bound the regret
conditioned on the clean event. So, condition on the clean event for the rest of
the proof. Phase one contributes at most
\begin{equation}
  O \left( \frac{k m^2}{\min \{ \deltadrift^2, \sigma_{\min}^2 \} \Delta^2} \right) \cdot \log T
\end{equation}
to the regret by~\cref{corollary:learning_phase_one_rounds}. Phase two
contributes at most
\begin{equation}
  O \left( \frac{k}{\gamma^{*2}} \right) \log T
\end{equation}
to the regret.

Observe that after phase two, $\rad_t(x) \leq \frac{\gamma^{*2}}{2}$ for all $x
\in X^*$. Combining this
with~\cref{eq:controlbudget_gamma_t,eq:etcb_gamma_t_1,eq:etcb_gamma_t_2,eq:etcb_gamma_t_3},
we have that $(\gamma^*, D^{-1}(b + \gamma^* s_t))$ is a feasible solution to
the optimization problem solved by
\texttt{ExploreThenControlBudget}~(\cref{alg:explorethencontrolbudget}).
Therefore, $(\gamma_t, p_t)$ ensure that there is drift of magnitude at least
$\frac{\gamma^*}{8}$ in the ``correct directions''. As noted in the end
of~\cref{subsec:policy_multiple_resources}, the regret analysis of
\texttt{ControlBudget}~(\cref{alg:controlbudget}) requires the algorithm to know
$X^*$, $J^*$, and find a probability vector $p_t$ that ensures drifts bounded
away from zero  in the ``correct directions''. Therefore,
by~\cref{theorem:regret_controlbudget}, phase three contributes at most
$\tilde{C'}$ to the regret, where $\tilde{C}$ is the constant
in~\cref{theorem:regret_controlbudget}.  Combining the contribution from the
three phases, we have that
\begin{equation}
  R_T(\texttt{ExploreThenControlBudget}) \leq \tilde{C} \cdot \log T,
\end{equation}
where $\gamma^*$ (defined in~\cref{lemma:gammastar}) and $\tilde{C}$ are
constants with
\begin{equation}
  \tilde{C} = O \left( \frac{k m^2}{\min \{ \deltadrift^2, \sigma_{\min}^2 \} \Delta^2} + k (\gamma^*)^{-2} + \tilde{C}' \right).
\end{equation}


\subsection{Proof of~\cref{theorem:reduction}}
\label{subsec:appendix_reduction}

Note that $\bwk$ is not automatically a special case of our model because of our
assumption that the null arm has strictly positive drift for every resource. In
this section we present a reduction from $\bwk$ with $\frac{B}{T}$ bounded away
from $0$ to our model. We show that our results imply a logarithmic regret bound
for $\bwk$ under certain assumptions.

\paragraph{Reduction}
Assume we are given an instance of $\bwk$ with $\frac{B}{T} \geq \deltadrift >
0$. (Existing results on logarithmic regret for $\bwk$ also assume the ratio of
the initial budget to the time horizon is bounded away from
$0$~\citep{li2021symmetry}.) We will reduce the given $\bwk$ instance to a
problem in our model. The reduction initializes an instance of
\texttt{ExploreThenControlBudget}~(\cref{alg:explorethencontrolbudget}) running
in a simulated environment with the same set of arms as in the given $\bwk$
instance, plus an additional null arm whose drift is equal to $\deltadrift$
deterministically for each resource. The reduction will maintain two time
counters: $t_a$ is the actual number of time steps that have elapsed in the
$\bwk$ prlblem, and $t_s$ is the number of time steps that have elapsed in the
simulation environment in which~\cref{alg:explorethencontrolbudget} is running.
Likewise, there are two vectors that track the remaining budget: $B_a$ is the
remaining budget in the actual $\bwk$ problem our reduction is solving, while
$B_s$ is the remaining budget in the simulation environment. These two budget
vectors will always be related by the equation
\begin{equation}\label{eq:appendix_reduction_update}
  B_s = B_a - T \deltadrift \textbf{1} + t_s \deltadrift \textbf{1}.
\end{equation}
In particular, the initial budget of each resource is initialized (at simulated
time $t_s = 0$) to $B - T \deltadrift$.

Each step of the reduction works as follows. We
call~\cref{alg:explorethencontrolbudget} to simulate one time step in the
simulated environment. If~\cref{alg:explorethencontrolbudget} recommends to pull
a non-null arm $x$, we pull arm $x$, increment both of the time counters ($t_a$
and $t_s$), and update the vector of remaining resource amounts, $B_a$,
according to the resources consumed by arm $x$.
If~\cref{alg:explorethencontrolbudget} recommends to pull the null arm, we do
not pull any arm, and we leave $t_a$ and $B_a$ unchanged; however, we still
increment the simulated time counter $t_s$. Finally, regardless of whether a
null or non-null arm was pulled, we update $B_s$ to
satisfy~\cref{eq:appendix_reduction_update}.

\paragraph{Correctness}
Since the reduction pulls the same sequence of non-null arms
as~\cref{alg:explorethencontrolbudget} until the $\bwk$ stopping condition is
met and the additional pulls of the null arm in the simulation environment yield
zero reward, the total reward in the actual $\bwk$ problem equals the total
reward earned in the simulation environment at the time when the $\bwk$ stopping
condition is met and the reduction ceases running.
Since~\cref{alg:explorethencontrolbudget} maintains the invariant that $B_s$ is
a nonnegative vector,~\cref{eq:appendix_reduction_update} ensures that $B_a$
will also remain nonnegative as long as $t_s \geq T$ must
hold.~\cref{theorem:regret_explorethencontrolbudget} ensures that the total
expected reward earned in the simulation environment and hence, also in the
$\bwk$ problem itself, is bounded below by $T \cdot \opt_\lp - \tilde{C} \cdot
\log T$, where $\tilde{C}$ is the constant
in~\cref{theorem:regret_explorethencontrolbudget} and $\opt_\lp$ denotes the
optimal value of the LP relaxation~(\cref{eq:lp_relaxation}) for the simulation
environment.

We would like to show that this implies the regret of the reduction (with
respect to the LP relaxation of $\bwk$) is bounded by $\tilde{C} \cdot \log T$.
To do so, we must show that the LP relaxations of the original $\bwk$ problem
and the simulation environment have the same optimal value. Let $\mu_x^r$ and
$\mu_x^{d,j}$ denote the expected reward and expected drifts in the actual
$\bwk$ problem with arm set $\calX$, and let $\hat{\mu}_x^r$ and
$\hat{\mu}_x^{d,j}$ denote the expected reward and drifts in the simulation
environment with arm set $\calX^+ = \calX \cup \{ x^0 \}$. The two LP
formulations are as follows.
\begin{equation*}
{\small 
\begin{array}{l@{\quad} r l r r}
  \max\limits_{p} & \sum\limits_{x \in \calX} p_x \mu_x^r & & \\
  \text{s.t.}     & \sum\limits_{x \in \calX} p_x \mu_x^{d,j} &\geq -\frac{B}{T} & \forall j \in \calJ, \\
                  & \sum\limits_{x \in \calX} p_x & \leq 1 & \\
                  & p_x &\geq 0 & \forall x \in \calX.
\end{array}
\quad
\begin{array}{l@{\quad} r l r r}
  \max\limits_{p} & \sum\limits_{x \in \calX} p_x \hat{\mu}_x^r & & \\
  \text{s.t.}     & \sum\limits_{x \in \calX} p_x \hat{\mu}_x^{d,j} &\geq -\frac{B}{T} - \deltadrift & \forall j \in \calJ, \\
                  & \sum\limits_{x \in \calX^+} p_x &= 1 & \\
                  & p_x &\geq 0 & \forall x \in \calX^+.
\end{array}
}
\end{equation*}
The differences between the two LP formulations lie in substituting $\hat{\mu}$
for $\mu$, substituting $\calX^+$ for $\calX$, and transforming the inequality
constraint $\sum_{x \in \calX} p_x \leq 1$ into an equality constraint $\sum_{x
\in \calX^+} p_x = 1$. We know that $\mu_x^r = \hat{\mu}_x^r$ for every $x \in
\calX$ and $\hat{\mu}_{x^0}^r = 0$.  Furthermore,  $\hat{\mu}_x^{d,j}$ denotes
the expected drift of resource $j$ in the simulation environment when arm $x$ is
pulled.  This can be written as the sum of two terms: drift $\mu_x^{d,j}$ is the
expectation of the (non-positive) quantity added to the $j$th component of
budget vector $B_a$ when pulling arm $x$ in the actual $\bwk$ environment; in
addition to this non-positive drift, there is a deterministic positive drift of
$\deltadrift$ due to incrementing the simulation time counter $t_s$ and
recomputing $B_s$ using~\cref{eq:appendix_reduction_update}. Hence,
$\hat{\mu}_x^{d,j} = \mu_x^{d,j} + \deltadrift$ for all $x \in \calX$ and $j \in
\calJ$. Furthermore, $\hat{\mu}_{x^0}^{d,j} = \deltadrift$.  Hence, for any
vector $\vec{p}$ representing a probability distribution on $\calX^+$, we have
\begin{equation}\label{eq:appendix_reduction_comparing_lp_constraints}
  \sum\limits_{x \in \calX^+} p_x \hat{\mu}_x^{d,j} = \left( \sum\limits_{x \in \calX} p_x \hat{\mu}_x^{d,j} \right) + \deltadrift.
\end{equation}
Accordingly, a vector $\vec{p}$ satisfies the constraints of the $\bwk$ LP
relaxation above if and only if the probability vector on $\calX^+$ obtained
from $\vec{p}$ by setting $p_{x^0} = 1 - \sum_{x \in \calX} p_x$ satisfies the
constraints of the second LP relaxation above. This defines a one-to-one 
correspondence between the sets of vectors feasible for the two LP formulations.
Furthermore, this one-to-one correspondence preserves the value of the objective
function because $\hat{\mu}_x^r = \mu_x^r$ for $x \in \calX$ and
$\hat{\mu}_{x^0}^r=0$. Thus, the optimal value of the two linear programs is
the same. This completes the proof.

\section{Experiments}
\label{sec:appendix_experiments}


In this section we present some simple experimental results.
\footnote{The code and data are available at \url{https://github.com/raunakkmr/non-monotonic-resource-utilization-in-the-bandits-with-knapsacks-problem-code}.}
For simplicity, we only consider Bernoulli distributions, i.e., rewards are
supported on $\{ 0, 1 \}$, a positive drift arm's drifts are supported on $\{ 0,
1 \}$, and a negative drift arm's drifts are supported on $\{ 0, -1 \}$. We
generate the data for the experiments as follows:
\begin{itemize}
  \item \cref{fig:experimental_results_cb} plot (a): We set $T = 25k, B = 0, n = 2$ and $m = 1$. The expected reward and drifts for the arms are: $(0;0.1), (0.8;0.4)$. The LP solution is supported on a single positive drift arm.
  \item \cref{fig:experimental_results_cb} plot (b): We set $T = 25k, B = 400, n = 2$ and $m = 1$. The expected reward and drifts for the arms are: $(0;0.4), (0.8;-0.3)$. The LP solution is supported on the null arm and the negative drift arm.
  \item \cref{fig:experimental_results_cb} plot (c): We set $T = 25k, B = 400, n = 3$ and $m = 1$. The expected reward and drifts for the arms are: $(0;0.4), (0.8;-0.3), (0.1;0.3)$. The LP solution is supported on the positive drift arm and the negative drift arm.
  \item \cref{fig:experimental_results_cb} plot (d): We set $T = 25k, B = 3, n = 3$ and $m = 2$. The expected reward and drifts for the arms are: $(0;0.1,0.08), (0.8;-0.2,-0.25), (0.1;0.4,0.5)$. The LP solution is supported on the positive drift arm and the negative drift arm.
  \item \cref{fig:experimental_results_etcb} plot (a) and (b): We set $T = 150k, B = 10, n = 3$ and $m = 1$. The expected reward and drifts for the arms are: $(0;0.9), (0.8;-0.6), (0.1;0.7)$. The LP solution is supported on the positive drift arm and the negative drift arm.
\end{itemize}

\begin{figure}
  \centering
  \begin{subfigure}[b]{0.4\textwidth}
      \centering
      \includegraphics[width=\textwidth]{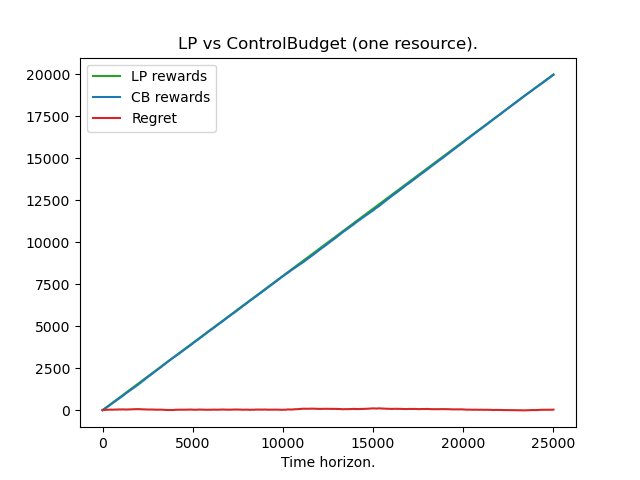}
      \caption{\texttt{ControlBudget} with one resource - case 1 (positive drift arm)}
      \label{fig:mdp_case_1_easy}
  \end{subfigure}
  \hfill
  \begin{subfigure}[b]{0.4\textwidth}
      \centering
      \includegraphics[width=\textwidth]{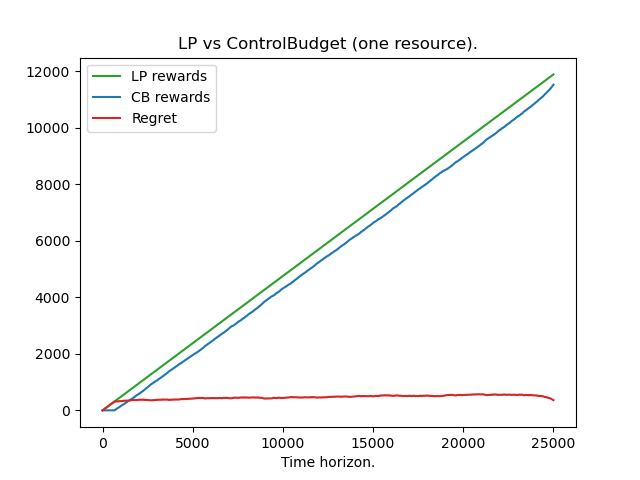}
      \caption{\texttt{ControlBudget} with one resource - case 2 (null plus negative drift arm)}
      \label{fig:mdp_case_2_easy}
  \end{subfigure}
  \hfill
  \begin{subfigure}[b]{0.4\textwidth}
      \centering
      \includegraphics[width=\textwidth]{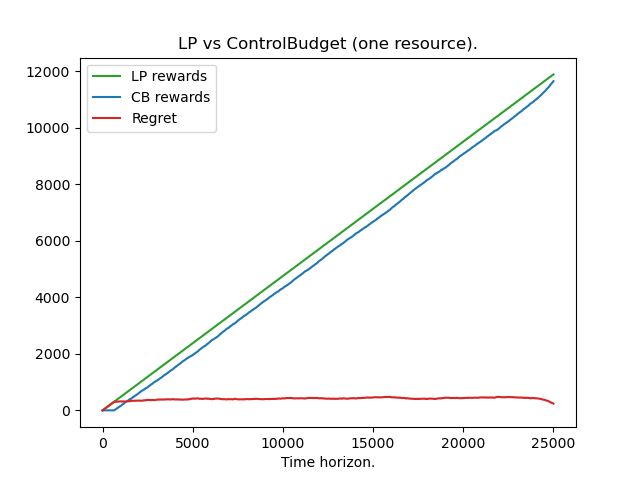}
      \caption{\texttt{ControlBudget} with one resource - case 3 (positive plus negative drift arm)}
      \label{fig:mdp_case_3_easy}
  \end{subfigure}
  \hfill
  \begin{subfigure}[b]{0.4\textwidth}
      \centering
      \includegraphics[width=\textwidth]{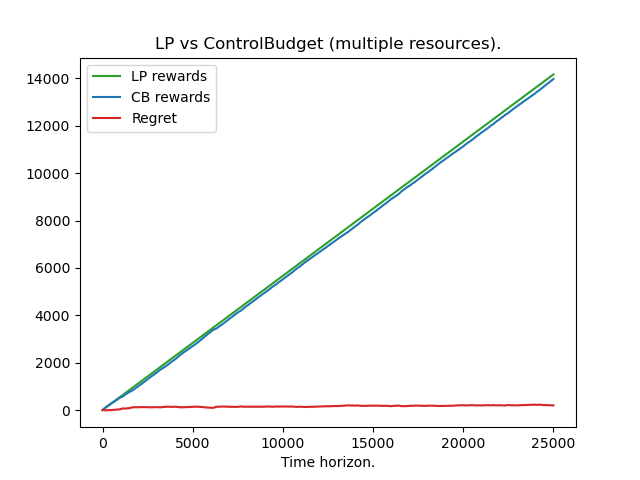}
      \caption{\texttt{ControlBudget} with multiple resources}
      \label{fig:mdp_multiple_resources_easy}
  \end{subfigure}
     \caption{Regret of \texttt{ControlBudget} on a variety of test cases.}
     \label{fig:experimental_results_cb}
\end{figure}

\begin{figure}
  \centering
  \begin{subfigure}[b]{0.4\textwidth}
      \centering
      \includegraphics[width=\textwidth]{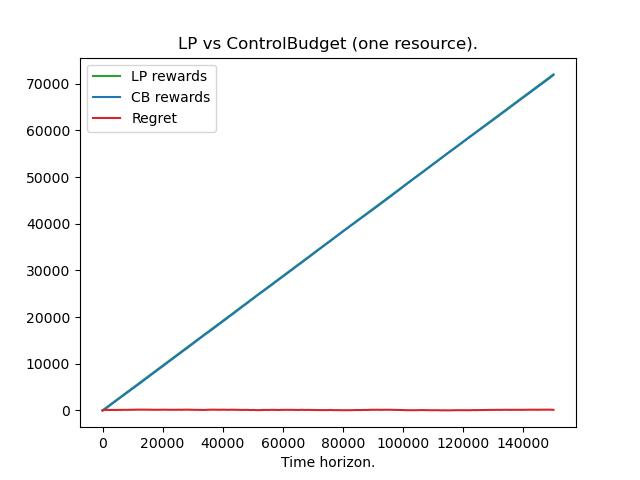}
      \caption{\texttt{ControlBudget}}
      \label{fig:learning_cb}
  \end{subfigure}
  \begin{subfigure}[b]{0.4\textwidth}
      \centering
      \includegraphics[width=\textwidth]{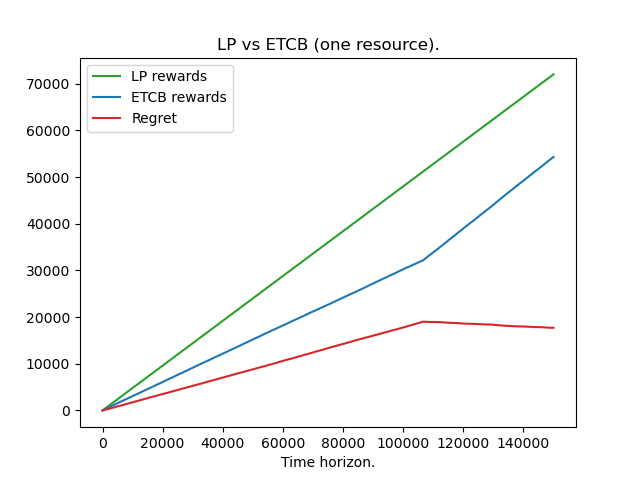}
      \caption{\texttt{ExploreThenControlBudget}}
      \label{fig:learning_etcb}
  \end{subfigure}
     \caption{Regret of \texttt{ControlBudget} and \texttt{ExploreThenControlBudget} on the same test case. We modify \texttt{ExploreThenControlBudget} to use the empirical means instead of UCB/LCB estimates for phase one as described in ~\cref{sec:appendix_experiments}.}
     \label{fig:experimental_results_etcb}
\end{figure}

As our plots show~(\cref{fig:experimental_results_cb}), our MDP policy,
\texttt{ControlBudget}, performs quite well and achieves constant regret.

Our learning algorithm does not perform as well empirically due to large
constant factors. Specifically, the number of rounds required for the confidence
radius to be small enough for phase one to successfully identify $X^*$ and $J^*$
is too large. In our simple test cases, if we simply consider the empirical
means, which are very close to the true means, instead of the UCB/LCB estimates
for phase one, then the learning algorithm performs as expected: it achieves
logarithmic regret by spending a logarithmic number of rounds identifying $X^*$
and $J^*$, and achieves constant regret thereafter~(\cref{fig:experimental_results_etcb}).



\end{document}